\documentclass{article}

\usepackage[preprint]{neurips_2026}

\usepackage{amsmath}  
\usepackage{amsthm}
\usepackage{amssymb}  
\usepackage[utf8]{inputenc} 
\usepackage[T1]{fontenc}    
\usepackage{hyperref}       
\usepackage{url}            
\usepackage{booktabs}       
\usepackage{amsfonts}       
\usepackage{nicefrac}       
\usepackage{microtype}      
\usepackage{xcolor}         
\usepackage{graphicx}
\usepackage{multirow}
\usepackage{pifont}
\usepackage{wrapfig}
\usepackage[dvipsnames]{xcolor}
\usepackage{placeins}      
\usepackage{enumitem}

\newcommand{\cmark}{\textcolor{ForestGreen}{\ding{51}}}
\newcommand{\xmark}{\textcolor{red}{\ding{55}}}

\newtheorem{remark}{Remark}
\newtheorem{proposition}{Proposition}
\newtheorem{theorem}{Theorem}
\newtheorem{lemma}{Lemma}
\newtheorem{assumption}{Assumption}

\title{Towards Reliable LLM Evaluation: Correcting the Winner’s Curse in Adaptive Benchmarking}

\author{%
  Yang Xu\thanks{Equal contribution.} \\
  Purdue University\\
  \texttt{xu1720@purdue.edu} \\
  \And
  Jiefu Zhang\footnotemark[1] \\
  Purdue University\\
  \texttt{zhan4018@purdue.edu} \\
  \And
  Haixiang Sun\\
  Purdue University\\
  \texttt{sun1321@purdue.edu} \\
  \And
  Zihan Zhou\\
  Johns Hopkins University\\
  \texttt{zzhou150@jh.edu} \\
  \And
  Tianyu Cao\\
  Purdue University\\
  \texttt{cao357@purdue.edu} \\
  \And
  Vaneet Aggarwal \\
  Purdue University, USA \\
  \texttt{vaneet@purdue.edu}
}

\begin{document}

\maketitle

\begin{abstract}
Adaptive prompt and program search makes LLM evaluation selection-sensitive. Once benchmark items are reused inside tuning, the observed winner's score need not estimate the fresh-data performance of the full tune-then-deploy procedure. We study inference for this procedure-level target under explicit tuning budgets. We propose SIREN, a selection-aware repeated-split reporting protocol that freezes the post-search shortlist, separates splitwise selection from held-out evaluation, and uses an item-level Gaussian multiplier bootstrap for uncertainty quantification. In a fixed-shortlist regime with smooth stabilized selection, the estimator admits a first-order item-level representation, and the bootstrap yields valid simultaneous inference on a finite budget grid. This supports confidence intervals for procedure-performance curves and pre-specified equal-budget and cross-budget comparisons. Controlled simulations and MMLU-Pro tuning experiments show that winner-based reporting can be optimistic and can change deployment conclusions, while SIREN remains close to the finite-sample reporting target. Codes are available at \url{https://github.com/jznmsl/siren}.
\end{abstract}

\section{Introduction}

Large language models (LLMs) have become a central tool for reasoning, coding, question answering, and related language-based tasks \cite{openai2023gpt4,anil2023palm2,grattafiori2024llama3}. As their capabilities have improved, the question of how to evaluate them has become correspondingly more important. More concretely, for a given task and a collection of models, we would like an evaluation scheme whose reported scores remain informative about how those systems actually perform on that task. Even the simplest setting, where each model is evaluated using one fixed configuration chosen in advance, can be fragile and prone to bias. Recent work shows that changing this template can change both absolute performance and relative rankings across models \cite{mizrahi2024state,polo2024efficient}. A growing line of LLM systems goes beyond this fixed system setting. Instead of fixing one prompt or one program configuration in advance, these methods use benchmark examples to search for a stronger way of using the same base model. Concretely, they may generate several candidate instructions or program variants, evaluate them on development examples, and then refine or select among them under an iterative optimization loop or an explicit budget constraint \cite{yang2024large,khattab2023dspy,shi2024triple,opsahlong2024optimizing,ashizawa2025opts}. Once evaluation is used in this way, the concrete benchmark example, which is a task instance together with the scoring criteria, no longer serves only as a measuring device. It also becomes part of the search process that determines which configuration is ultimately reported.

This reuse of benchmark examples creates an immediate ambiguity about what the final reported number is meant to represent. One possibility is to report the artifact that happened to score highest on the observed development examples, where an artifact refers to a concrete configuration of the system, such as an instruction, a set of demonstrations, or a larger language model program configuration. The alternative is to study the full tune-then-deploy procedure: start from a base system, run a tuner with budget $B$ to search over artifacts, and deploy the returned artifact on future task instances. After searching ends, the chosen artifact is the one actually used at test time. Evaluation must therefore distinguish two questions. A retrospective question asks which candidate happened to look best on the finite sample used during searching. An operational question asks something different: if we rerun the same tuning rule with budget $B$ and then use its returned artifact on fresh data, what performance should we expect on average. These questions coincide when no search is performed, because then the reported system is fixed before any benchmark examples are observed. They diverge once benchmark examples are reused inside the tuning loop, because the final reported artifact is itself data dependent. We refer to this gap between the ex-post winning artifact and the fresh-data performance of the budget-$B$ tune-then-deploy procedure as the winner's curse in adaptive benchmarking. This distinction leads us to treat the full tune-then-deploy procedure as the object of evaluation. The corresponding target is a budget-indexed procedure-performance curve: for each budget $B$, the expected fresh-data performance of the artifact returned by the budget-$B$ tuner. The target is therefore not the artifact that happened to win on one observed sample, but the fresh-data performance of the rule that produced it.

Recent work makes this problem concrete, but does not yet fully solve it. Adaptive prompt and program optimization methods study how to search for stronger prompts or program configurations under iterative optimization loops or limited evaluation budgets \cite{yang2024large,khattab2023dspy,shi2024triple,opsahlong2024optimizing,ashizawa2025opts}. A separate line studies prompt variation and statistically principled evaluation \cite{mizrahi2024state,polo2024efficient,lior2025reliableeval,zollo2024prompt}. These papers are closely related, but they target different statistical objects: robustness to prompt choice, efficient evaluation over a pre-specified prompt pool, stochastic evaluation under meaning-preserving perturbations, or prompt choice under controlled risk guarantees. We ask a different question. Specifically, after budgeted adaptive search has produced the reported artifact, what can be said about the fresh-data performance of the resulting tune-then-deploy procedure? In particular, \cite{polo2024efficient} assumes a specified prompt pool, whereas we study inference after the final reported artifact is itself the output of a budgeted adaptive search pipeline.

Motivated by the above questions, we propose SIREN: \textbf{S}election-aware \textbf{I}nference for \textbf{R}epeated-split \textbf{E}valuatio\textbf{N}. SIREN is a post-search reporting protocol for a tune-then-deploy procedure. It begins after the upstream tuning stage has ended, treats the retained candidate artifacts as a frozen shortlist, and uses a separate evaluation pool for reporting. The formal guarantees are therefore conditional on the completed search and the fixed shortlist, rather than on resampling an unrestricted search process. SIREN repeatedly partitions the evaluation pool into a scoring part and a held-out part. In each split, the scoring part determines stabilized weights over retained artifacts, while the held-out part evaluates the selected output on fresh items. Aggregating across splits preserves the selection step while separating selection evidence from final performance evaluation. This is the reporting layer analyzed in Section~\ref{sec:siren}.

We validate SIREN both theoretically and empirically. In the fixed-shortlist, smooth stabilized-selection regime, we prove that the selected repeated-split estimator has a first-order item-level expansion and that a Gaussian multiplier bootstrap yields confidence intervals, simultaneous bands over tuning budgets, and pre-specified pairwise comparisons, without rerunning the tuner inside every resample. Controlled studies verify calibration and expose same-data winner optimism, while MMLU-Pro experiments across random search, DSPy, and eleven open-weight models show that selection-aware reporting can change tuned-versus-default conclusions and the interpretation of budget-aware prompt evaluation.

\textbf{Key contributions.}
\begin{itemize}
    \item We formulate adaptive LLM benchmarking as inference for budget-indexed tune-then-deploy procedures, and identify the fresh-data performance of the resulting procedure, rather than the ex post winning artifact, as the main target of evaluation.
    \item We propose SIREN, a selection-aware repeated-split evaluation protocol based on a frozen shortlist, splitwise scoring and held-out evaluation, and stabilized selection.
    \item We prove that the selected repeated-split estimator admits a per-item asymptotic representation and that a Gaussian multiplier bootstrap yields valid simultaneous inference on a finite grid of tuning budgets, including equal-budget and cross-budget comparisons fixed in advance.
    \item We validate SIREN in controlled studies and real LLM tuning experiments, showing that selection-aware procedure-level evaluation can change conclusions about system choice and the reliability of budget-aware reporting after adaptive tuning.
\end{itemize}

\section{Problem formulation}
\label{sec:problem}

We first define the target of interest. The setup covers prompt search, demonstration selection, and language-model program optimization. Let $P$ denote the distribution of a task instance together with the reference information for scoring. A system $j\in\{1,\dots,N\}$ is a base model plus fixed pipeline components, with $\mathcal A_j$ denoting its artifact space; an artifact $a\in\mathcal A_j$ is a deployable configuration such as instructions, demonstration sets, decoding choices, or larger language-model programs.

For a generic item $W\sim P$, artifact $a\in\mathcal A_j$, and execution randomness $\xi$, let $Z_j(W,a,\xi)\in[0,1]$ denote the per-item score, and define
\small
\begin{equation}
\mu_j(w,a)=\mathbb E_\xi\big[Z_j(W,a,\xi)\mid W=w\big]  
\label{eq:pf_mean}
\end{equation}
\normalsize
as the item-conditional mean score. Any deterministic evaluation is the special case with no $\xi$. For system $j$ and tuning budget $B$, let 
$D=(\widetilde W_1,\dots,\widetilde W_n)$ be the development sample of task instances used by the tuner, and let $U_j$ collect the tuner's internal randomness. We model the budget-$B$ tuner as a map from the development sample to a deployable artifact:
\small
\begin{equation}
\widehat a_j(B;D)=T_{j,B}(D,U_j)\in\mathcal A_j,
\label{eq:pf_output}
\end{equation}
\normalsize
where $B$ records the cost of the tuner. A fixed baseline is the special case in which the returned artifact is constant in $D$.
We further define the procedure-level target as
\small
\begin{equation}
\theta_j(B)=\mathbb E\big[\mu_j(W^{\mathrm{new}},\widehat a_j(B;D))\big],
\label{eq:pf_target}
\end{equation}
\normalsize
where $W^{\mathrm{new}}\sim P$ is independent of the development sample $D$; the expectation is over $D$, $U_j$, and $W^{\mathrm{new}}$. Thus $\theta_j(B)$ is the fresh-data performance of the rule ``use $D$ to run the budget-$B$ tuner, then deploy its returned artifact,'' not the score of one ex post winner on the sample used to choose it. Our goal is to estimate these procedure-level performances on a finite, pre-specified budget grid 
$\mathcal B=\{b_1,\dots,b_L\}$, where $B\in\mathcal B$ denotes a generic tuning budget. 
In practice, the quantities $\{\theta_j(B):j\le N,\ B\in\mathcal B\}$ must be estimated from a finite benchmark. 
Section~\ref{sec:siren} therefore defines the repeated-split procedure used to report these estimates with associated confidence intervals.

\section{SIREN}
\label{sec:siren}

SIREN, short for Selection-aware Inference for Repeated-split EvaluatioN, is our approach for reliably evaluating a system together with its tuner. The task instances used by SIREN are separate from those used by the tuner. During tuning, a series of searches are performed to produce a finite set of candidate artifacts. SIREN then takes this frozen candidate set and an independent finite collection of benchmark items. For example, the benchmark items pool may consist of question-answer pairs with reference answers, summarization inputs with reference summaries or judging criteria, or coding problems with unit tests. This pool is the finite benchmark used to approximate the fresh deployment tasks represented by $W^{\mathrm{new}}$ in \eqref{eq:pf_target}. SIREN repeatedly splits this benchmark pool into a score subset and a held-out subset. In each split, the score subset is used only to set selection weights over the frozen candidates, while the held-out subset is used to evaluate the selected or weighted output after that choice has been made. SIREN then aggregates the resulting scores across splits and uses a Gaussian multiplier bootstrap to form confidence intervals.

\subsection{Protocol}

We first fix the retained artifacts that are allowed to participate in the reporting layer. For each system $j$ and tuning budget $B$, let $\mathcal A_j(B)=\{a_{j,B,1},\dots,a_{j,B,K_{j,B}}\}$ denote the retained shortlist after search stops. These are the candidate artifacts carried from the tuning stage into the reporting stage, which are fixed throughout the evaluation process. SIREN only scores and reweights these retained artifacts rather than generating new ones.

For any task instance $w$ in the support of $P$ and any weight vector $q\in\Delta^{K_{j,B}}$, define the weighted mean score of the shortlist as $\mu_j(w,q;B)=\sum_{k=1}^{K_{j,B}} q_k\,\mu_j(w,a_{j,B,k})$. SIREN evaluates the retained shortlist on the separate evaluation pool 
$W_1,\dots,W_M \stackrel{\mathrm{i.i.d.}}{\sim} P$ via a repeated-split procedure. Concretely,
denote $R$ as the fixed number of repeated splits, for each $r\le R$, let $D_r^{\mathrm{score}}$ and $E_r$ be disjoint subsets of $\{1,\dots,M\}$, where the scoring subset $D_r^{\mathrm{score}}$ is used only to score the retained artifacts for selection and the held-out subset $E_r$ is used only for held-out evaluation after the splitting has been made in each round. We denote the sigma-field generated by the repeated-split procedure as  $\mathcal G_M=\sigma\big(\{D_r^{\mathrm{score}},E_r\}_{r=1}^R\big)$. After the subsets are partitioned, for artifact $a_{j,B,k}$, we compute its selection scores $\widehat S_{r,j,k}$ and held-out scores $\widehat T_{r,j,k}$ on split $r$ as
\small
\begin{align}
\widehat S_{r,j,k}(B)
&=
\frac{1}{|D_r^{\mathrm{score}}|}
\sum_{i\in D_r^{\mathrm{score}}}
Z_j\big(W_i,a_{j,B,k},\xi^{S}_{irjk}\big),
\label{eq:siren_dev_score}
\\
\widehat T_{r,j,k}(B)
&=
\frac{1}{|E_r|}
\sum_{i\in E_r}
Z_j\big(W_i,a_{j,B,k},\xi^{T}_{irjk}\big),
\label{eq:siren_hold_score}
\end{align}
\normalsize
and write $\widehat S_{r,j}(B)$ and $\widehat T_{r,j}(B)$ as the corresponding stacked vectors in $\mathbb R^{K_{j,B}}$. The variables $\xi^{S}_{irjk}$ and $\xi^{T}_{irjk}$ capture execution randomness. 

SIREN next uses a selector to convert the selection scores into weights for each split, and combines those weights with the held-out scores and form a weighted average. This aims to reduce dependence on any single partition while preserving the separation between selection and evaluation. We denote a selector as a map $g_{j,B}:\mathbb R^{K_{j,B}}\to\Delta^{K_{j,B}}$ that converts the above selection scores into deployment weights. 

The split-level selected output is denoted as $\widehat q_{r,j}(B)=g_{j,B}\big(\widehat S_{r,j}(B)\big)$, and the corresponding weighted-average of the held-out score is the form of $\widehat Y_{r,j}(B)=\widehat q_{r,j}(B)^\top \widehat T_{r,j}(B)$. For computing the final estimator, given nonnegative weights $\omega_1,\dots,\omega_R$ with $\sum_{r=1}^R\omega_r=1$, the SIREN point estimator is
\small
\begin{equation}
\widetilde\theta_{j,R}(B)=\sum_{r=1}^R \omega_r\,\widehat Y_{r,j}(B),
\label{eq:siren_estimator}
\end{equation}
\normalsize
with natural weights choices being $\omega_r=\frac{|E_r|}{\sum_{s=1}^R |E_s|}$ and $\omega_r=\frac{1}{R}$. The next subsection describes the Gaussian multiplier bootstrap used to form confidence intervals and simultaneous bands.

\subsection{Bootstrap}
Once the estimator in \eqref{eq:siren_estimator} is obtained, SIREN asks how variable this estimator would be if the evaluation pool were redrawn while the completed tuning stage and repeated-split design were held fixed. 
To formalize this target, we define the average score that the frozen shortlist, repeated-split protocol, and selector would produce under the realized split design:
\small
\begin{equation}
\theta_{j,M}^{\mathrm{RS}\mid\mathcal G}(B)
=
\mathbb E\big[\widetilde\theta_{j,R}(B)\mid \mathcal G_M\big].
\label{eq:siren_target}
\end{equation}
\normalsize

The bootstrap therefore aims to characterize the uncertainty gap
$\widetilde\theta_{j,R}(B)-\theta_{j,M}^{\mathrm{RS}\mid\mathcal G}(B)$. 
This gap is not a simple sample-mean error because each benchmark item can affect the estimator in two ways: through held-out evaluation scores and through scoring-subset scores that change the splitwise selection weights. 
SIREN summarizes these two effects using an item-level first-order contribution $\widehat\psi_{i,j,B,M}$ for each benchmark item $W_i$. 
It is computed by combining, across repeated splits, the item's centered held-out score contribution and its centered scoring-subset contribution after passing through the selector derivative. 
The exact formula is recorded in Appendix~\ref{app:siren_proofs}.
Intuitively, $\widehat\psi_{i,j,B,M}$ estimates the summand in the linear approximation $\sqrt M\{\widetilde\theta_{j,R}(B)-\theta_{j,M}^{\mathrm{RS}\mid\mathcal G}(B)\}
\approx
\frac{1}{\sqrt M}\sum_{i=1}^M \psi_{i,j,B,M}$. For each $(j,B)$, define the empirical average contribution $\bar{\widehat\psi}_{j,B,M}
=
\frac{1}{M}\sum_{i=1}^M \widehat\psi_{i,j,B,M}$, SIREN then draws independent Gaussian multipliers
$\zeta_1,\dots,\zeta_M\stackrel{\mathrm{i.i.d.}}{\sim}N(0,1)$ and forms the joint bootstrap process $\widehat{\mathbf G}_M^\ast
=
\left\{
\widehat G_{j,B}^{\ast}
\right\}_{j\le N,\ B\in\mathcal B}$ with $\widehat G_{j,B}^{\ast}
=
\frac{1}{\sqrt M}
\sum_{i=1}^M
\zeta_i
\big(
\widehat\psi_{i,j,B,M}
-
\bar{\widehat\psi}_{j,B,M}
\big)$. This randomly reweights benchmark items rather than splits or candidate artifacts. 
Using the same multiplier $\zeta_i$ for item $W_i$ across all systems and budgets preserves the dependence created by evaluating the same benchmark pool across the whole grid.

The conditional distribution of $\widehat{\mathbf G}_M^\ast$ is used as an approximation to the sampling distribution of $\left\{
\sqrt M\big(
\widetilde\theta_{j,R}(B)
-
\theta_{j,M}^{\mathrm{RS}\mid\mathcal G}(B)
\big)
\right\}_{j\le N,\ B\in\mathcal B}$. Pointwise intervals use the corresponding coordinatewise bootstrap quantiles, while simultaneous bands use the quantile of $\max_{j\le N,\ B\in\mathcal B}
|\widehat G_{j,B}^{\ast}|$. The same bootstrap process also supports fixed equal-budget comparisons, cross-budget gains, and other pre-specified linear contrasts on the grid. 
The theory below justifies this item-level resampling scheme for the selected repeated-split estimator, including both held-out evaluation noise and the first-order effect of selection.

\begin{figure}[h]
    \centering
    \includegraphics[width=\textwidth]{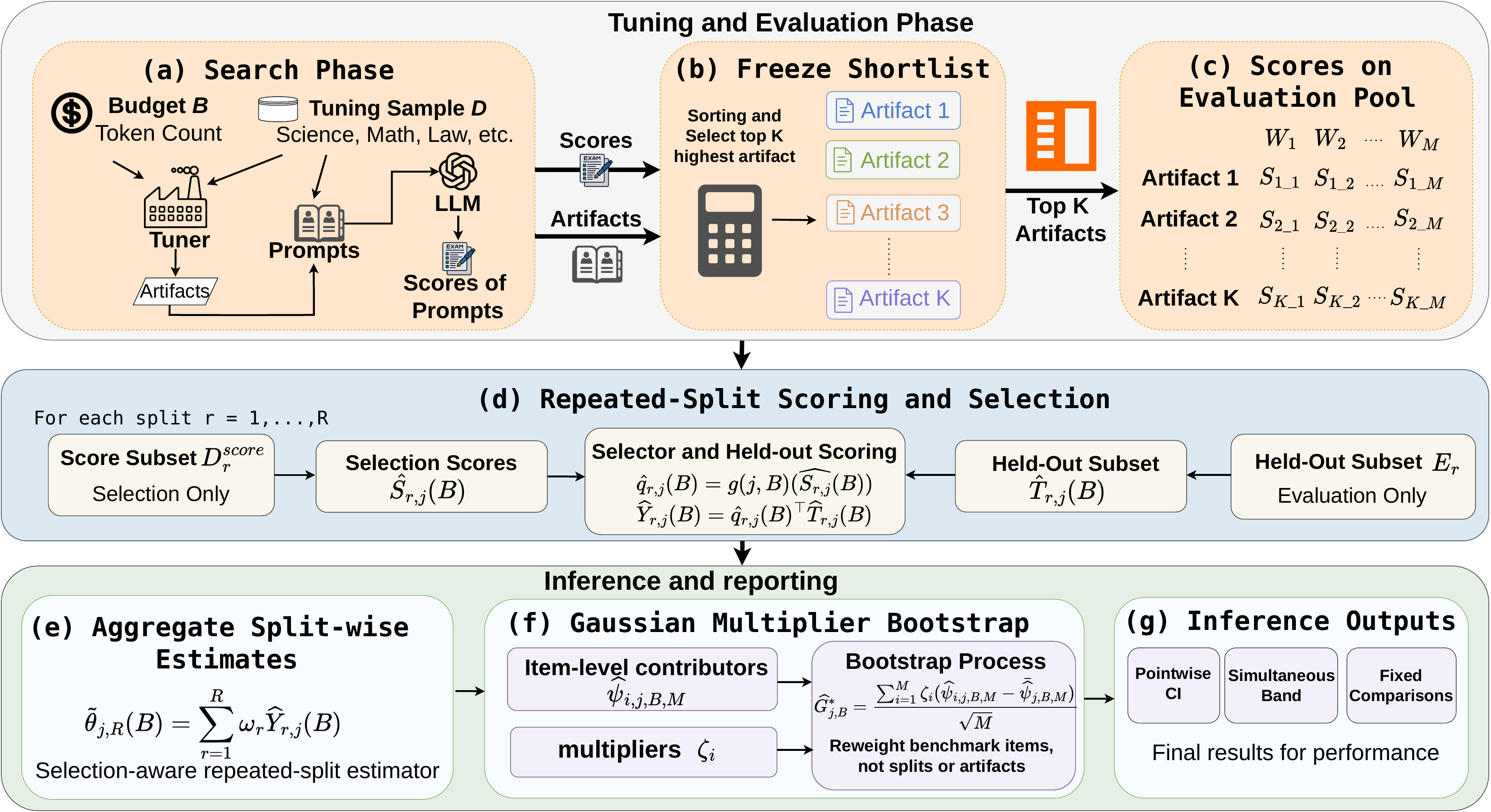}
    \caption{Overview of the SIREN pipeline. (a)--(b) tuning and shortlist freezing; 
    (c) score tensor on the split benchmarks; (d)--(e) repeated-split inference and 
    aggregation into the procedure-level estimate $\widetilde\theta$; 
    (f)--(g) multiplier-bootstrap uncertainty quantification and final CI output.}
    \label{fig:siren_pipeline}
\end{figure}

\subsection{Theoretical guarantees}
\label{sec:theory}

We now justify the two statistical outputs of SIREN: the repeated-split estimate and the bootstrap confidence interval. The formal assumptions, full details of SIREN's computation steps, and proofs are given in Appendix~\ref{app:siren_proofs}. 
All statistical statements are conditioned on the completed tuning stage, so the retained shortlists and selectors are treated as fixed independent objects in SIREN's pipeline.

In the following theorem-ready regime, we consider the number of systems, budgets, repeated splits, and retained artifacts are fixed; split sizes are proportional to $M$; and the selector $g_{j,B}$ is a stabilized smooth map with uniformly bounded derivatives. 
Under these settings, the softmax-type selectors are covered directly, while a discontinuous hard argmax requires smoothing or an additional margin condition. We first define the joint scaled error over the finite system-budget grid as
\[
\mathbf Z_M
=
\left\{
\sqrt M
\big(
\widetilde\theta_{j,R}(B)
-
\theta_{j,M}^{\mathrm{RS}\mid\mathcal G}(B)
\big)
\right\}_{j\le N,\ B\in\mathcal B}.
\]

\begin{theorem}[Selection-aware CLT]
\label{thm:positive_smooth}
Conditioned on the repeated-split design, the joint error vector $\mathbf Z_M$ admits a first-order item-level linearization and converges to a mean-zero Gaussian law over the budget grid $\mathcal{B}$. 
The item-level linearization has one summand per benchmark task instance and includes both sources of uncertainty: the direct held-out evaluation effect and the first-order effect of using scoring subsets to choose splitwise weights.
\end{theorem}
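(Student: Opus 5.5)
Conditioning on $\mathcal G_M$ makes the score sets $D_r^{\mathrm{score}}$ and held-out sets $E_r$ deterministic, with sizes proportional to $M$. Write $m_{r,k}=\mathbb E[\widehat S_{r,j,k}\mid\mathcal G_M]=\mathbb E\,\mu_j(W,a_{j,B,k})$ and $\bar\mu_{k}=\mathbb E\,\mu_j(W,a_{j,B,k})$; both are the same population mean $s_{j,B}\in\mathbb R^{K_{j,B}}$. The approach is a delta-method expansion of each split statistic
$\widehat Y_{r,j}(B)=g_{j,B}(\widehat S_{r,j})^\top\widehat T_{r,j}$
around $(s_{j,B},s_{j,B})$. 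Setting $\Delta^S_r=\widehat S_{r,j}-s_{j,B}$ and $\Delta^T_r=\widehat T_{r,j}-s_{j,B}$, a second-order Taylor expansion of $g_{j,B}$ with uniformly bounded derivatives gives
\[
\widehat Y_{r,j}(B)=g(s)^\top s+g(s)^\top\Delta^T_r+s^\top\nabla g(s)\,\Delta^S_r+\rho_r ,
\]
with $|\rho_r|\lesssim \|\Delta^S_r\|^2+\|\Delta^S_r\|\,\|\Delta^T_r\|$. Both $\Delta$'s are averages of bounded independent (given $\mathcal G_M$) terms over $\asymp M$ items, so they are $O_p(M^{-1/2})$ by Hoeffding, and $\sqrt M\rho_r=O_p(M^{-1/2})$.

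\textbf{Centering.} Since $D_r^{\mathrm{score}}\cap E_r=\emptyset$ and items and execution noises are independent, $\mathbb E[\rho_r\mid\mathcal G_M]=O(M^{-1})$ by the same bound in expectation. Hence $\sqrt M(\theta^{\mathrm{RS}\mid\mathcal G}_{j,M}(B)-g(s)^\top s)=o(1)$, and centering at $\theta^{\mathrm{RS}\mid\mathcal G}_{j,M}$ instead of the plug-in value only costs $o_p(1)$.

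\textbf{Linearization.} Summing over $r$ with weights $\omega_r$ and exchanging the order of summation,
\[
\sqrt M\{\widetilde\theta_{j,R}(B)-\theta^{\mathrm{RS}\mid\mathcal G}_{j,M}(B)\}=\frac1{\sqrt M}\sum_{i=1}^M\psi_{i,j,B,M}+o_p(1),
\]
where
\[
\psi_{i,j,B,M}=\sum_{r}\omega_r\Big[\tfrac{M\,\mathbf 1\{i\in E_r\}}{|E_r|}\,g(s)^\top\big(Z^T_{irj\cdot}-s\big)+\tfrac{M\,\mathbf 1\{i\in D^{\mathrm{score}}_r\}}{|D^{\mathrm{score}}_r|}\,s^\top\nabla g(s)\big(Z^S_{irj\cdot}-s\big)\Big].
\]
Here $Z^T_{irj\cdot}$ and $Z^S_{irj\cdot}$ are the vectors of held-out and scoring scores of item $i$ on split $r$. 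The first bracketed term is the direct held-out evaluation effect and the second is the first-order selection effect, which gives the structural claim of the theorem. Given $\mathcal G_M$, the $\psi_{i,\cdot}$ are independent across $i$, mean zero, and uniformly bounded, because $M/|E_r|$ and $M/|D_r^{\mathrm{score}}|$ are bounded, $R$ and $K$ are fixed, and $\nabla g$ is bounded. Stacking over the finite grid $(j,B)$ gives a fixed-dimensional vector.

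\textbf{Gaussian limit.} Apply the multivariate Lindeberg--Feller CLT via Cramér--Wold to this triangular array. The summands are not identically distributed, since membership patterns differ across items. Lindeberg holds automatically by boundedness, so it remains to control the covariance $\Sigma_M=M^{-1}\sum_i\mathrm{Var}(\psi_i\mid\mathcal G_M)$. This is the main obstacle: $\Sigma_M$ depends on the split design and need not converge. I would either assume that $\Sigma_M\to\Sigma$ (for instance, split proportions stabilize), or state the result as $d(\mathcal L(\mathbf Z_M\mid\mathcal G_M),N(0,\Sigma_M))\to0$ along subsequences by compactness, since $\Sigma_M$ is bounded. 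If $\Sigma$ is singular in some directions, those coordinates degenerate harmlessly to a point mass, so I would state convergence to a possibly degenerate mean-zero Gaussian. The delicate parts are keeping the $o_p(1)$ remainder uniform over the finite grid, which is immediate by a union bound, and the $O(M^{-1})$ bias bound, which is where disjointness of scoring and held-out sets is essential.
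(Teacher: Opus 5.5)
Your proposal is correct and takes essentially the same route as the paper's proof (its selector-linearization, split-level expansion, and aggregation lemmas). You Taylor-expand the smooth selector, split $\widehat q^\top\widehat T$ into a held-out term, a Jacobian-weighted selection term, and a remainder whose size and conditional mean are $O(M^{-1})$, exchange sums to get the item-level contribution $\psi_{i,j,B,M}$, and finish with conditional Lindeberg--Feller plus Cram\'er--Wold under a covariance-stabilization hypothesis (the paper's $\|\Omega_M-\Omega\|_{\mathrm{op}}\to_p 0$). The differences are cosmetic: you center at the plug-in value $g(s)^\top s$ with a common mean $s=t$, whereas the paper keeps $s_{r,j,M}(B)$ and $t_{r,j,M}(B)$ separate and centers each split at $\mathbb E[\widehat Y_{r,j}(B)\mid\mathcal G_M]$. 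Also, disjointness of $D_r^{\mathrm{score}}$ and $E_r$ is not actually essential for the $O(M^{-1})$ bias bound, since Cauchy--Schwarz on the cross term suffices, as in the paper.
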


For each item $W_i$, the proof characterizes all of its impact across splits, systems, budgets, and retained artifacts into one item-level vector. 
These vectors are independent across items, although their coordinates are dependent within an item because the same task can appear in multiple split-level comparisons. 
A finite-dimensional CLT then applies to the full score table. 
The nonstandard step is the selection layer: the splitwise weights are chosen from the scoring subset rather than fixed in advance. 
Smoothness of $g_{j,B}$ lets us linearize how scoring-set perturbations change these weights, producing a derivative-weighted selection term in addition to ordinary held-out evaluation noise. 
This is the uncertainty ignored by winner-based reports that treat the selected artifact as fixed.

Theorem~\ref{thm:positive_smooth} explains what SIREN is estimating and motivates the repeated-split construction. 
The target is not the score of a single observed prompt, program, or demonstration set chosen after search, it is instead the performance of a frozen artifact shortlist plus a specified reporting rule. 
The theorem shows that this selected reporting rule still has a tractable large-sample distribution once the effect of selection is included. The following theorem further explains the validity of the bootstrapping approach.

\begin{theorem}[Multiplier bootstrap validity]
\label{thm:bootstrap_validity}
Suppose the computable item-level quantities used by the bootstrap consistently approximate the corresponding first-order item effects in the linearization, in the sense that their average squared discrepancy over benchmark items vanishes as $M$ grows. 
Then the conditional law of the multiplier process $\widehat{\mathbf G}_M^\ast$ consistently approximates the joint law of $\mathbf Z_M$. 
Consequently, the bootstrap quantiles used in Section~\ref{sec:siren} yield asymptotically valid pointwise confidence intervals and simultaneous bands for 
$\theta_{j,M}^{\mathrm{RS}\mid\mathcal G}(B)$ over  the budget grid $\mathcal{B}$.
\end{theorem}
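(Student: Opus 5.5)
We will prove Theorem~\ref{thm:bootstrap_validity} conditionally on the tuning stage and on $\mathcal G_M$, working with the finite-dimensional vector indexed by $(j,B)\in\{1,\dots,N\}\times\mathcal B$; write $d=N L$. From the proof of Theorem~\ref{thm:positive_smooth} we take as given the linearization $\mathbf Z_M=M^{-1/2}\sum_{i=1}^M \boldsymbol\psi_{i,M}+o_P(1)$. Here the $\boldsymbol\psi_{i,M}\in\mathbb R^d$ are independent across $i$, mean zero, and uniformly bounded, since scores lie in $[0,1]$, $g_{j,B}$ has bounded derivatives, and split fractions are bounded away from zero. We also take as given that $\Sigma_M=M^{-1}\sum_i \mathbb E[\boldsymbol\psi_{i,M}\boldsymbol\psi_{i,M}^\top\mid\mathcal G_M]$ satisfies $d_{\mathrm{BL}}(\mathcal L(\mathbf Z_M\mid\mathcal G_M),N(0,\Sigma_M))\to0$. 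We argue along subsequences on which $\Sigma_M\to\Sigma$, which is possible because $\Sigma_M$ is bounded; this also covers a degenerate limit without requiring invertibility.

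\textbf{Step 1: the bootstrap law is exactly Gaussian.} Conditionally on the data, $\widehat{\mathbf G}_M^\ast$ is a linear combination of i.i.d.\ $N(0,1)$ multipliers. Hence it has law $N(0,\widehat\Sigma_M)$ exactly, with
\[
\widehat\Sigma_M=M^{-1}\sum_i(\widehat{\boldsymbol\psi}_{i}-\bar{\widehat{\boldsymbol\psi}})(\widehat{\boldsymbol\psi}_{i}-\bar{\widehat{\boldsymbol\psi}})^\top .
\]
Bootstrap consistency therefore reduces to showing $\widehat\Sigma_M-\Sigma_M\to0$ in probability. Gaussian laws depend continuously on the covariance, so this gives $d_{\mathrm{BL}}(N(0,\widehat\Sigma_M),N(0,\Sigma_M))\to_P0$. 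Combined with the CLT, this yields $\sup_{h\in\mathrm{BL}_1}|\mathbb E^\ast h(\widehat{\mathbf G}_M^\ast)-\mathbb E h(\mathbf Z_M)|\to_P0$.

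\textbf{Step 2: covariance consistency.} This is the main step. We split it in two.

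First, consider the oracle matrix $\Sigma_M^o=M^{-1}\sum_i(\boldsymbol\psi_{i}-\bar{\boldsymbol\psi})(\boldsymbol\psi_{i}-\bar{\boldsymbol\psi})^\top$. Because the summands are independent and bounded, a weak law of large numbers applied entrywise gives $\Sigma_M^o-\Sigma_M\to_P0$. The centering term is $O_P(M^{-1/2})$ because each $\boldsymbol\psi_i$ has mean zero.

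Second, we compare the estimated and oracle matrices. The hypothesis $M^{-1}\sum_i\|\widehat{\boldsymbol\psi}_i-\boldsymbol\psi_i\|^2\to_P0$ lets us use the elementary bound
\[
\|\widehat\Sigma_M-\Sigma_M^o\|\le \Delta_M+2\Delta_M^{1/2}\bigl(M^{-1}\textstyle\sum_i\|\boldsymbol\psi_i\|^2\bigr)^{1/2},\qquad \Delta_M=M^{-1}\textstyle\sum_i\|\widehat{\boldsymbol\psi}_i-\boldsymbol\psi_i\|^2 .
\]
This follows from Cauchy--Schwarz applied to $ab^\top-cd^\top$, and centering only reduces the discrepancy. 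Both terms vanish in probability because $M^{-1}\sum_i\|\boldsymbol\psi_i\|^2=O_P(1)$. The delicate point is that the $\boldsymbol\psi_i$ themselves depend on $M$ through the split design. Boundedness of the summands keeps the triangular-array law of large numbers valid, so we rely on it rather than on any i.i.d.\ structure.

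\textbf{Step 3: quantiles and bands.} Pointwise intervals come from coordinate projections, simultaneous bands from $\max_{j,B}|\cdot|$, and pre-specified contrasts from linear maps $c^\top(\cdot)$; all three are continuous functionals. The continuous mapping theorem transfers the conditional weak convergence from Step 1 to each of them.

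To pass from distributions to quantiles we need the limiting distribution functions to be continuous at the relevant quantiles. Whenever the limiting variance of the statistic is positive, this holds: for the maximum of absolute Gaussian coordinates, Anderson's inequality or the absolute continuity of Gaussian maxima gives continuity and strict increase on $(0,\infty)$. A Pólya-type argument then shows the bootstrap quantiles converge in probability to the true quantiles. Therefore $\Pr\{\theta^{\mathrm{RS}\mid\mathcal G}_{j,M}(B)\in \mathrm{CI}\ \forall (j,B)\}\to1-\alpha$.

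We expect the main obstacle to be the non-degeneracy needed for quantile continuity, because $\Sigma$ may be singular, for instance for a fixed baseline. We would handle this by stating the coverage conclusion for coordinates and contrasts with positive limiting variance. The band result would be restricted to $\max$ over the coordinates with $\Sigma_{(j,B),(j,B)}>0$; degenerate coordinates satisfy $\mathbf Z_{M,(j,B)}\to_P0$ trivially.
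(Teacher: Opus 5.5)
Your proposal is correct and follows essentially the same route as the paper. In both, the multiplier process is exactly Gaussian given the data, the empirical covariance is shown consistent by comparing the oracle centered covariance with $\Sigma_M$ and then the plug-in covariance with the oracle via Cauchy--Schwarz under the average-squared-discrepancy condition, and the triangle inequality with the selection-aware CLT finishes the argument. Your Step 3 (quantile convergence, excluding degenerate coordinates) goes beyond the paper, whose formal theorem stops at bounded-Lipschitz convergence of the conditional laws. One correction there: a fixed baseline is not an example of degeneracy, because its selection term vanishes but its held-out term $q^\top\Gamma^{T}$ still has positive variance in general.
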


Theorem~\ref{thm:bootstrap_validity} gives the computational payoff. 
SIREN resamples benchmark-item contributions rather than rerunning the tuner or regenerating candidates. 
Because each item contribution already encodes both evaluation noise and selection effects, one multiplier per item preserves dependence across systems, budgets, and splits. 
Thus the same bootstrap supports confidence intervals, simultaneous bands, and fixed grid contrasts. Conceptually, the condition in Theorem~\ref{thm:bootstrap_validity} requires the computable item-level contributions to recover the same two first-order effects from Theorem~\ref{thm:positive_smooth}: the direct held-out score effect and the derivative-weighted selection effect. 
The proof compares the feasible multiplier process with the infeasible linear expansion; conditional on the observed contributions, the bootstrap is Gaussian with empirical covariance, and plug-in consistency makes this covariance match the limiting covariance of $\mathbf Z_M$. 

\section{Experiments}
\label{sec:exp_setup}

Adaptive LLM benchmarking is vulnerable to a winner's curse: when benchmark items are reused during prompt or program search, the reported winner can overstate the fresh-data performance of the full procedure. Our experiments evaluate whether this bias appears after real tuning and whether SIREN effectively estimates the corresponding procedure-level target. We organize these studies around the following three Research Questions (\textbf{RQ}s):

\textbf{(RQ1)} Does SIREN estimate the procedure-level target without winner-induced optimism?

\textbf{(RQ2)} Does selection-aware reporting change tuned-versus-default deployment decisions?

\textbf{(RQ3)} Is budget-aware prompt evaluation over a pool sufficient after adaptive tuning, or is selection-aware inference still needed?

We evaluate SIREN on MMLU-Pro~\cite{wang2024mmlupro} \emph{Math} and \emph{Law} across two tuner families, random search and DSPy, and eleven open-weight instruction-tuned LLMs. For each subject, the tuner uses a small training pool to generate candidate artifacts under the shared token-budget grid $\mathcal B=\{500\mathrm{K},1.5\mathrm{M},3\mathrm{M},6.5\mathrm{M}\}$, and the reporting layer uses a disjoint evaluation pool to estimate procedure-level performance. All methods are compared on the same retained shortlist and evaluation pool, differing only in the final reporting rule. Unless otherwise stated, SIREN uses $R{=}10$ repeated splits, scoring fraction $\rho_{\mathrm{score}}{=}0.5$, softmax selector temperature $\tau{=}1$, uniform split weights $\omega_r$, and $2{,}000$ Gaussian multiplier bootstrap draws.

We compare SIREN with four internal baselines: 
\begin{itemize}
    \item \textbf{M1} (\emph{naive max}) evaluates every retained candidate on the full evaluation pool and reports the empirical winner, $\widehat\theta^{\mathrm{M1}}=\max_k \bar Z_{\cdot,k}$.
    \item \textbf{M2} (\emph{M1 + Wald}) uses the same selected-winner point estimate as M1, with a conventional Wald interval computed on the same items used for selection;
    \item \textbf{M3} (\emph{single-split holdout}) partitions the evaluation set into a selection fold and a held-out scoring fold, picks the empirical winner on the selection fold, and reports its mean accuracy on the held-out fold;
    \item \textbf{M4} (\emph{$R$-split argmax with Student-$t$ CI}) repeats the selection-and-evaluation procedure $R$ times with independent splits, takes the average of the per-split held-out means as the point estimate, and forms a Student-$t$ confidence interval from the variance across splits.
\end{itemize}

Appendix~\ref{sec:track1} provides controlled validation of the inference mechanism. Study~A checks multiplier-bootstrap calibration and the predicted $M^{-1/2}$ width scaling. Study~B shows the near-tie instability of hard argmax selection. Study~C shows that same-data best-of reporting becomes more optimistic as the number of searched artifacts grows. We also compare against \textbf{PromptEval}~\cite{polo2024efficient}, an IRT-based external baseline for budget-aware prompt evaluation. For the real-data comparisons, $\theta^\star$ denotes a $10,000$-redraw Monte Carlo approximation of the finite-sample reporting target $\theta_{j,M}^{\mathrm{RS}\mid\mathcal{G}}(B)$ in~\eqref{eq:siren_target}. This reference lets us separate point-estimation bias from tuned-versus-default directional error. We note that all models are served via vLLM on a single NVIDIA RTX~4090 GPU (24\,GB VRAM) with sequential model loading, and the interval calibration is evaluated separately.

\subsection{RQ1: Procedure-level target and winner-induced bias}
\label{sec:exp_main}

We first answer \textbf{RQ1} by comparing each reported tuned estimate $\widehat\theta_A$ with the Monte Carlo reference $\theta_A^\star$. Each cell is a (subject, tuner, model, budget) configuration on the grid described above, with the model ranging over the 11 open-weight models that appear in all four subject vs tuner combinations. We report two summaries. The column ``bias'' is $\widehat\theta_A-\theta_A^\star$, in percentage points; positive values indicate optimistic reporting of the tuned procedure. The column ``dir'' counts the number of cells in which the point-estimate ordering agrees with the Monte Carlo ordering $\operatorname{sign}(\widehat\theta_A-\theta_B^\star)
=
\operatorname{sign}(\theta_A^\star-\theta_B^\star)$. Tables~\ref{tab:math_directional} and~\ref{tab:law_directional} summarize these quantities on MMLU-Pro \emph{Math} and \emph{Law}.

\begin{table}[h]
\centering
\caption{Per-budget directional accuracy on MMLU-Pro Math.}\vspace{-5pt}
\label{tab:math_directional}
\small
\setlength{\tabcolsep}{4pt}
\resizebox{1\textwidth}{!}{\begin{tabular}{ll|cc|cc|cc|cc|cc}
\toprule
& & \multicolumn{2}{c|}{M1 (Naive max)} & \multicolumn{2}{c|}{M2 (M1+Wald)} & \multicolumn{2}{c|}{M3 (single split)} & \multicolumn{2}{c|}{M4 ($R$-split $t$)} & \multicolumn{2}{c}{\textbf{SIREN}} \\
Tuner & $B$ & dir & bias & dir & bias & dir & bias & dir & bias & dir & bias \\
\midrule
Random search & 500K & $9/11$ & $+3.70$ & $9/11$ & $+3.70$ & $8/11$ & $+3.99$ & $7/11$ & $+4.35$ & $\mathbf{11/11}$ & $\mathbf{-0.09}$ \\
 & 1.5M & $9/11$ & $+0.42$ & $9/11$ & $+0.42$ & $9/11$ & $+1.41$ & $6/11$ & $+1.80$ & $\mathbf{10/11}$ & $\mathbf{+0.10}$ \\
 & 3M & $9/11$ & $+0.71$ & $9/11$ & $+0.71$ & $9/11$ & $+0.80$ & $8/11$ & $+1.39$ & $\mathbf{10/11}$ & $\mathbf{+0.06}$ \\
 & 6.5M & $10/11$ & $+0.66$ & $10/11$ & $+0.66$ & $9/11$ & $+0.30$ & $8/11$ & $+1.09$ & $\mathbf{11/11}$ & $\mathbf{+0.08}$ \\
\textit{All} & \textit{(44 cells)} & \textit{$37/44$} & \textit{$+1.37$} & \textit{$37/44$} & \textit{$+1.37$} & \textit{$35/44$} & \textit{$+1.62$} & \textit{$29/44$} & \textit{$+2.16$} & $\mathbf{42/44}$ & $\mathbf{+0.04}$ \\
\midrule
DSPy & 500K & $10/11$ & $+1.37$ & $10/11$ & $+1.37$ & $6/11$ & $-0.76$ & $9/11$ & $+0.55$ & $\mathbf{10/11}$ & $\mathbf{-0.09}$ \\
 & 1.5M & $8/11$ & $+1.52$ & $8/11$ & $+1.52$ & $8/11$ & $-0.54$ & $10/11$ & $+0.78$ & $\mathbf{11/11}$ & $\mathbf{-0.03}$ \\
 & 3M & $8/11$ & $+1.49$ & $8/11$ & $+1.49$ & $9/11$ & $+0.00$ & $10/11$ & $+0.70$ & $\mathbf{11/11}$ & $\mathbf{-0.05}$ \\
 & 6.5M & $8/11$ & $+1.33$ & $8/11$ & $+1.33$ & $8/11$ & $-0.17$ & $10/11$ & $+0.63$ & $\mathbf{11/11}$ & $\mathbf{-0.03}$ \\
\textit{All} & \textit{(44 cells)} & \textit{$34/44$} & \textit{$+1.43$} & \textit{$34/44$} & \textit{$+1.43$} & \textit{$31/44$} & \textit{$-0.37$} & \textit{$39/44$} & \textit{$+0.67$} & $\mathbf{43/44}$ & $\mathbf{-0.05}$ \\
\bottomrule
\end{tabular}\vspace{-15pt}
}
\end{table}
\begin{table}[h]
\centering
\caption{Per-budget directional accuracy on MMLU-Pro Law.}\vspace{-5pt}
\label{tab:law_directional}
\small
\setlength{\tabcolsep}{4pt}
\resizebox{1\textwidth}{!}{
\begin{tabular}{ll|cc|cc|cc|cc|cc}
\toprule
& & \multicolumn{2}{c|}{M1 (Naive max)} & \multicolumn{2}{c|}{M2 (M1+Wald)} & \multicolumn{2}{c|}{M3 (single split)} & \multicolumn{2}{c|}{M4 ($R$-split $t$)} & \multicolumn{2}{c}{\textbf{SIREN}} \\
Tuner & $B$ & dir & bias & dir & bias & dir & bias & dir & bias & dir & bias \\
\midrule
Random search & 500K & $4/11$ & $+1.50$ & $4/11$ & $+1.50$ & $3/11$ & $+2.13$ & $8/11$ & $+1.00$ & $\mathbf{11/11}$ & $\mathbf{+0.05}$ \\
 & 1.5M & $2/11$ & $+2.24$ & $2/11$ & $+2.24$ & $5/11$ & $+1.59$ & $5/11$ & $+1.37$ & $\mathbf{10/11}$ & $\mathbf{+0.11}$ \\
 & 3M & $4/11$ & $+2.10$ & $4/11$ & $+2.10$ & $6/11$ & $+1.47$ & $6/11$ & $+1.18$ & $\mathbf{11/11}$ & $\mathbf{+0.07}$ \\
 & 6.5M & $5/11$ & $+1.77$ & $5/11$ & $+1.77$ & $6/11$ & $+1.74$ & $8/11$ & $+1.13$ & $\mathbf{10/11}$ & $\mathbf{+0.10}$ \\
\textit{All} & \textit{(44 cells)} & \textit{$15/44$} & \textit{$+1.90$} & \textit{$15/44$} & \textit{$+1.90$} & \textit{$20/44$} & \textit{$+1.73$} & \textit{$27/44$} & \textit{$+1.17$} & $\mathbf{42/44}$ & $\mathbf{+0.08}$ \\
\midrule
DSPy & 500K & $10/11$ & $+0.77$ & $10/11$ & $+0.77$ & $8/11$ & $-0.01$ & $8/11$ & $-0.15$ & $\mathbf{9/11}$ & $\mathbf{-0.21}$ \\
 & 1.5M & $10/11$ & $+1.10$ & $10/11$ & $+1.10$ & $9/11$ & $+0.72$ & $11/11$ & $+0.25$ & $\mathbf{10/11}$ & $\mathbf{-0.17}$ \\
 & 3M & $10/11$ & $+1.13$ & $10/11$ & $+1.13$ & $9/11$ & $+0.77$ & $11/11$ & $+0.22$ & $\mathbf{10/11}$ & $\mathbf{-0.20}$ \\
 & 6.5M & $10/11$ & $+1.04$ & $10/11$ & $+1.04$ & $9/11$ & $+0.25$ & $10/11$ & $-0.01$ & $\mathbf{11/11}$ & $\mathbf{-0.21}$ \\
\textit{All} & \textit{(44 cells)} & \textit{$40/44$} & \textit{$+1.01$} & \textit{$40/44$} & \textit{$+1.01$} & \textit{$35/44$} & \textit{$+0.43$} & \textit{$40/44$} & \textit{$+0.08$} & $\mathbf{40/44}$ & $\mathbf{-0.20}$ \\
\bottomrule
\end{tabular}
}
\end{table}

Tables~\ref{tab:math_directional} and~\ref{tab:law_directional} show a consistent positive bias for winner-based reporting. M1 is optimistic in all four subject and tuner summaries. M2 has the same point estimate as M1, and therefore inherits the same signed bias; the Wald interval changes only the uncertainty report, not the selected-winner estimate.
M3 and M4 reduce direct same-data reuse, but they still use hard winner selection and do not account for the first-order effect of the scoring subset on the selected artifact. SIREN remains close to the Monte Carlo reference across budgets and tuners, with mean signed errors between $-0.20$ and $+0.08$ percentage points in the four aggregate rows. These results answer \textbf{RQ1}: the main source of error is not the choice of a particular confidence interval, but the selected-winner point estimate itself. SIREN changes the reported object from the naive best artifact to the selected repeated-split procedure, which removes most of the optimistic shift seen in M1-M4.

\subsection{Comparison against reporting baselines M1--M4}
\label{sec:exp_baselines}
We next answer \textbf{RQ2} by examining whether the bias corrections in Tables~\ref{tab:math_directional}-\ref{tab:law_directional} change the tuned-versus-default conclusion. For each model and budget, we compare the sign of the reported gap 
$\operatorname{sign}(\widehat\theta_A-\theta_B^\star)$ with the ground-truth direction 
$\operatorname{sign}(\theta_A^\star-\theta_B^\star)$. 
Table~\ref{tab:siren_math_dspy_all_budgets} gives one representative comparison on MMLU-Pro {Math} with the DSPy tuner for eleven models shown as columns. The default reference $\theta_B^\star$ is fixed across budgets and is shown at the top row. For each budget, $\theta_A^\star$ is the Monte Carlo tuned-procedure reference, and each method row reports an estimate $\widehat\theta_A$ of this tuned target. The \cmark/\xmark marker records whether $\operatorname{sign}(\widehat\theta_A-\theta_B^\star)$ matches $\operatorname{sign}(\theta_A^\star-\theta_B^\star)$.

\begin{table*}[h]
\centering
\caption{
Unified per-model comparison across budgets on MMLU-Pro Math with the \textbf{DSPy} tuner.}
\label{tab:siren_math_dspy_all_budgets}
\scriptsize
\setlength{\tabcolsep}{2.5pt}
\renewcommand{\arraystretch}{0.92}
\resizebox{\textwidth}{!}{%
\begin{tabular}{llccccccccccc}
\toprule
$B$ & Row
& Qwen3 & Phi-3.5 & Qwen2.5-7B & Llama3.1 & Yi-1.5
& InternLM & Qwen2-7B & Qwen2.5-3B & GLM-4 & Mistr-v0.3 & Mistr-v0.1 \\
\midrule

Ref. & $\theta_B^\star$
& 0.373 & 0.263 & 0.306 & 0.222 & 0.196 & 0.249 & 0.257 & 0.249 & 0.255 & 0.169 & 0.130 \\

\midrule
\multirow{6}{*}{$500K$}
& $\theta_A^\star$
& 0.379 & 0.267 & 0.339 & 0.230 & 0.161 & 0.253 & 0.258 & 0.283 & 0.254 & 0.165 & 0.140 \\
& True dir.
& A{>}B & A{>}B & A{>}B & A{>}B & A{<}B & A{>}B & A{>}B & A{>}B & A{<}B & A{<}B & A{>}B \\
& M1/M2
& 0.388\,\cmark & 0.285\,\cmark & 0.349\,\cmark & 0.240\,\cmark & 0.196\,\cmark & 0.265\,\cmark & 0.265\,\cmark & 0.322\,\cmark & 0.257\,\xmark & 0.168\,\cmark & 0.146\,\cmark \\
& M3
& 0.369\,\xmark & 0.281\,\cmark & 0.345\,\cmark & 0.216\,\xmark & 0.177\,\cmark & 0.232\,\xmark & 0.217\,\xmark & 0.304\,\cmark & 0.232\,\cmark & 0.152\,\cmark & 0.121\,\xmark \\
& M4
& 0.382\,\cmark & 0.274\,\cmark & 0.343\,\cmark & 0.230\,\cmark & 0.196\,\xmark & 0.262\,\cmark & 0.253\,\xmark & 0.306\,\cmark & 0.245\,\cmark & 0.160\,\cmark & 0.138\,\cmark \\
& \textbf{SIREN}
& \textbf{0.380}\,\cmark & \textbf{0.264}\,\cmark & \textbf{0.342}\,\cmark & \textbf{0.230}\,\cmark & \textbf{0.162}\,\cmark & \textbf{0.254}\,\cmark & \textbf{0.256}\,\xmark & \textbf{0.278}\,\cmark & \textbf{0.249}\,\cmark & \textbf{0.164}\,\cmark & \textbf{0.139}\,\cmark \\

\midrule
\multirow{6}{*}{$1.5M$}
& $\theta_A^\star$
& 0.383 & 0.278 & 0.344 & 0.251 & 0.193 & 0.254 & 0.262 & 0.290 & 0.255 & 0.166 & 0.142 \\
& True dir.
& A{>}B & A{>}B & A{>}B & A{>}B & A{<}B & A{>}B & A{>}B & A{>}B & A{<}B & A{<}B & A{>}B \\
& M1/M2
& 0.398\,\cmark & 0.305\,\cmark & 0.350\,\cmark & 0.282\,\cmark & 0.214\,\xmark & 0.262\,\cmark & 0.267\,\cmark & 0.325\,\cmark & 0.257\,\xmark & 0.174\,\xmark & 0.149\,\cmark \\
& M3
& 0.382\,\cmark & 0.267\,\cmark & 0.343\,\cmark & 0.257\,\cmark & 0.198\,\xmark & 0.222\,\xmark & 0.240\,\xmark & 0.312\,\cmark & 0.238\,\cmark & 0.168\,\cmark & 0.133\,\cmark \\
& M4
& 0.396\,\cmark & 0.295\,\cmark & 0.344\,\cmark & 0.274\,\cmark & 0.211\,\xmark & 0.262\,\cmark & 0.258\,\cmark & 0.316\,\cmark & 0.248\,\cmark & 0.163\,\cmark & 0.137\,\cmark \\
& \textbf{SIREN}
& \textbf{0.387}\,\cmark & \textbf{0.275}\,\cmark & \textbf{0.348}\,\cmark & \textbf{0.250}\,\cmark & \textbf{0.193}\,\cmark & \textbf{0.257}\,\cmark & \textbf{0.260}\,\cmark & \textbf{0.284}\,\cmark & \textbf{0.253}\,\cmark & \textbf{0.165}\,\cmark & \textbf{0.143}\,\cmark \\

\midrule
\multirow{6}{*}{$3M$}
& $\theta_A^\star$
& 0.384 & 0.279 & 0.342 & 0.251 & 0.193 & 0.250 & 0.264 & 0.294 & 0.254 & 0.164 & 0.142 \\
& True dir.
& A{>}B & A{>}B & A{>}B & A{>}B & A{<}B & A{>}B & A{>}B & A{>}B & A{<}B & A{<}B & A{>}B \\
& M1/M2
& 0.398\,\cmark & 0.305\,\cmark & 0.349\,\cmark & 0.285\,\cmark & 0.211\,\xmark & 0.263\,\cmark & 0.269\,\cmark & 0.321\,\cmark & 0.257\,\xmark & 0.176\,\xmark & 0.149\,\cmark \\
& M3
& 0.382\,\cmark & 0.302\,\cmark & 0.342\,\cmark & 0.262\,\cmark & 0.193\,\cmark & 0.246\,\xmark & 0.240\,\xmark & 0.310\,\cmark & 0.241\,\cmark & 0.169\,\cmark & 0.133\,\cmark \\
& M4
& 0.396\,\cmark & 0.293\,\cmark & 0.342\,\cmark & 0.275\,\cmark & 0.205\,\xmark & 0.266\,\cmark & 0.259\,\cmark & 0.307\,\cmark & 0.246\,\cmark & 0.168\,\cmark & 0.139\,\cmark \\
& \textbf{SIREN}
& \textbf{0.389}\,\cmark & \textbf{0.275}\,\cmark & \textbf{0.347}\,\cmark & \textbf{0.249}\,\cmark & \textbf{0.194}\,\cmark & \textbf{0.252}\,\cmark & \textbf{0.263}\,\cmark & \textbf{0.287}\,\cmark & \textbf{0.252}\,\cmark & \textbf{0.163}\,\cmark & \textbf{0.143}\,\cmark \\

\midrule
\multirow{6}{*}{$6.5M$}
& $\theta_A^\star$
& 0.384 & 0.274 & 0.343 & 0.252 & 0.193 & 0.252 & 0.263 & 0.302 & 0.255 & 0.169 & 0.143 \\
& True dir.
& A{>}B & A{>}B & A{>}B & A{>}B & A{<}B & A{>}B & A{>}B & A{>}B & A{<}B & A{<}B & A{>}B \\
& M1/M2
& 0.398\,\cmark & 0.293\,\cmark & 0.349\,\cmark & 0.284\,\cmark & 0.208\,\xmark & 0.263\,\cmark & 0.269\,\cmark & 0.330\,\cmark & 0.257\,\xmark & 0.176\,\xmark & 0.149\,\cmark \\
& M3
& 0.382\,\cmark & 0.288\,\cmark & 0.342\,\cmark & 0.262\,\cmark & 0.196\,\xmark & 0.246\,\xmark & 0.240\,\xmark & 0.315\,\cmark & 0.238\,\cmark & 0.169\,\cmark & 0.133\,\cmark \\
& M4
& 0.396\,\cmark & 0.283\,\cmark & 0.345\,\cmark & 0.273\,\cmark & 0.202\,\xmark & 0.266\,\cmark & 0.259\,\cmark & 0.320\,\cmark & 0.248\,\cmark & 0.167\,\cmark & 0.139\,\cmark \\
& \textbf{SIREN}
& \textbf{0.389}\,\cmark & \textbf{0.270}\,\cmark & \textbf{0.347}\,\cmark & \textbf{0.251}\,\cmark & \textbf{0.194}\,\cmark & \textbf{0.254}\,\cmark & \textbf{0.261}\,\cmark & \textbf{0.296}\,\cmark & \textbf{0.253}\,\cmark & \textbf{0.168}\,\cmark & \textbf{0.143}\,\cmark \\

\bottomrule
\end{tabular}}
\end{table*}

The table shows that winner-based reporting can change the deployment verdict, especially in small-gap cases. Because M1 and M2 use the same selected-winner point estimate, they inherit the same optimistic shift and can turn near ties or weak regressions into apparent gains. M3 and M4 reduce direct reuse of the evaluation pool, but their hard splitwise selections remain sensitive to the realized split. SIREN instead tracks the procedure-level target, so its estimates remain close to the Monte Carlo tuned target even in near ties. This is the deployment-level consequence of the bias reductions in Tables~\ref{tab:math_directional}--\ref{tab:law_directional}: the correction is not merely a numerical calibration improvement, but can change which system would be deployed. We refer to the remaining subject vs tuner cells in Tables~\ref{tab:siren_math_rs_all_budgets}, \ref{tab:siren_law_rs_all_budgets}, and~\ref{tab:siren_law_dspy_all_budgets} in the appendix. These results answer \textbf{RQ2} affirmatively. Selection-aware reporting can change tuned-versus-default conclusions because the same-data winner and the tune-then-deploy procedure are different statistical objects. SIREN preserves gains that remain after held-out splitwise evaluation, while removing gains that are mainly due to adaptive selection on the evaluation pool.

\subsection{Comparison against PromptEval}
\label{sec:exp_prompteval}

To answer \textbf{RQ3}, we compared PromptEval~\cite{polo2024efficient}, the closest external baseline to our setting since it also studies budget-aware evaluation over a prompt pool. We sweep PromptEval's cell-observation fraction $f \in \{0.05,0.10,0.20,0.40,0.60,0.80,1.00\}$, where $f$ is the fraction of the $K\times M$ prompt--item score matrix observed before fitting the Rasch IRT model and imputing the remaining cells. Small $f$ corresponds to the sparse budget-constrained regime. Notably, when $f{=}1$, PromptEval degenerates back into M1.
\begin{table*}[h]
\centering
\caption{
Unified PromptEval~\cite{polo2024efficient} vs SIREN comparison on MMLU-Pro Math with the \textbf{DSPy} tuner.}
\label{tab:prompteval_math_dspy_all_budgets}
\scriptsize
\setlength{\tabcolsep}{2.5pt}
\renewcommand{\arraystretch}{0.92}
\resizebox{\textwidth}{!}{%
\begin{tabular}{llccccccccccc}
\toprule
$B$ & Row
& Qwen3 & Phi-3.5 & Qwen2.5-7B & Llama3.1 & Yi-1.5
& InternLM & Qwen2-7B & Qwen2.5-3B & GLM-4 & Mistr-v0.3 & Mistr-v0.1 \\
\midrule

Ref. & $\theta_B^\star$
& 0.373 & 0.263 & 0.306 & 0.222 & 0.196 & 0.249 & 0.257 & 0.249 & 0.255 & 0.169 & 0.130 \\

\midrule
\multirow{10}{*}{$500K$}
& $\theta_A^\star$
& 0.379 & 0.267 & 0.339 & 0.230 & 0.161 & 0.253 & 0.258 & 0.283 & 0.254 & 0.165 & 0.140 \\
& True dir.
& A{>}B & A{>}B & A{>}B & A{>}B & A{<}B & A{>}B & A{>}B & A{>}B & A{<}B & A{<}B & A{>}B \\
& PE $f{=}0.05$
& 0.254\,\xmark & 0.193\,\xmark & 0.267\,\xmark & 0.136\,\xmark & 0.109\,\cmark & 0.180\,\xmark & 0.157\,\xmark & 0.181\,\xmark & 0.164\,\cmark & 0.092\,\cmark & 0.076\,\xmark \\
& PE $f{=}0.10$
& 0.305\,\xmark & 0.204\,\xmark & 0.267\,\xmark & 0.180\,\xmark & 0.115\,\cmark & 0.199\,\xmark & 0.194\,\xmark & 0.227\,\xmark & 0.226\,\cmark & 0.125\,\cmark & 0.117\,\xmark \\
& PE $f{=}0.20$
& 0.401\,\cmark & 0.267\,\cmark & 0.349\,\cmark & 0.245\,\cmark & 0.164\,\cmark & 0.265\,\cmark & 0.274\,\cmark & 0.305\,\cmark & 0.286\,\xmark & 0.175\,\xmark & 0.162\,\cmark \\
& PE $f{=}0.40$
& 0.399\,\cmark & 0.281\,\cmark & 0.346\,\cmark & 0.252\,\cmark & 0.164\,\cmark & 0.266\,\cmark & 0.270\,\cmark & 0.322\,\cmark & 0.268\,\xmark & 0.183\,\xmark & 0.149\,\cmark \\
& PE $f{=}0.60$
& 0.399\,\cmark & 0.285\,\cmark & 0.346\,\cmark & 0.240\,\cmark & 0.182\,\cmark & 0.265\,\cmark & 0.269\,\cmark & 0.327\,\cmark & 0.267\,\xmark & 0.174\,\xmark & 0.153\,\cmark \\
& PE $f{=}0.80$
& 0.391\,\cmark & 0.286\,\cmark & 0.352\,\cmark & 0.237\,\cmark & 0.192\,\cmark & 0.263\,\cmark & 0.268\,\cmark & 0.326\,\cmark & 0.265\,\xmark & 0.171\,\xmark & 0.149\,\cmark \\
& PE $f{=}1.00$ (M1)
& 0.388\,\cmark & 0.285\,\cmark & 0.349\,\cmark & 0.240\,\cmark & 0.196\,\cmark & 0.265\,\cmark & 0.265\,\cmark & 0.322\,\cmark & 0.257\,\xmark & 0.168\,\cmark & 0.146\,\cmark \\
& \textbf{SIREN}
& \textbf{0.380}\,\cmark & \textbf{0.264}\,\cmark & \textbf{0.342}\,\cmark & \textbf{0.230}\,\cmark & \textbf{0.162}\,\cmark & \textbf{0.254}\,\cmark & \textbf{0.256}\,\xmark & \textbf{0.278}\,\cmark & \textbf{0.249}\,\cmark & \textbf{0.164}\,\cmark & \textbf{0.139}\,\cmark \\

\midrule
\multirow{10}{*}{$1.5M$}
& $\theta_A^\star$
& 0.383 & 0.278 & 0.344 & 0.251 & 0.193 & 0.254 & 0.262 & 0.290 & 0.255 & 0.166 & 0.142 \\
& True dir.
& A{>}B & A{>}B & A{>}B & A{>}B & A{<}B & A{>}B & A{>}B & A{>}B & A{<}B & A{<}B & A{>}B \\
& PE $f{=}0.05$
& 0.271\,\xmark & 0.175\,\xmark & 0.212\,\xmark & 0.174\,\xmark & 0.123\,\cmark & 0.173\,\xmark & 0.168\,\xmark & 0.227\,\xmark & 0.151\,\cmark & 0.102\,\cmark & 0.079\,\xmark \\
& PE $f{=}0.10$
& 0.304\,\xmark & 0.226\,\xmark & 0.265\,\xmark & 0.218\,\xmark & 0.172\,\cmark & 0.205\,\xmark & 0.207\,\xmark & 0.250\,\cmark & 0.209\,\cmark & 0.133\,\cmark & 0.112\,\xmark \\
& PE $f{=}0.20$
& 0.393\,\cmark & 0.299\,\cmark & 0.353\,\cmark & 0.273\,\cmark & 0.224\,\xmark & 0.268\,\cmark & 0.279\,\cmark & 0.321\,\cmark & 0.264\,\xmark & 0.181\,\xmark & 0.151\,\cmark \\
& PE $f{=}0.40$
& 0.403\,\cmark & 0.302\,\cmark & 0.345\,\cmark & 0.291\,\cmark & 0.229\,\xmark & 0.269\,\cmark & 0.276\,\cmark & 0.318\,\cmark & 0.264\,\xmark & 0.180\,\xmark & 0.151\,\cmark \\
& PE $f{=}0.60$
& 0.403\,\cmark & 0.307\,\cmark & 0.346\,\cmark & 0.278\,\cmark & 0.218\,\xmark & 0.265\,\cmark & 0.273\,\cmark & 0.320\,\cmark & 0.259\,\xmark & 0.171\,\xmark & 0.154\,\cmark \\
& PE $f{=}0.80$
& 0.403\,\cmark & 0.308\,\cmark & 0.349\,\cmark & 0.283\,\cmark & 0.210\,\xmark & 0.262\,\cmark & 0.267\,\cmark & 0.324\,\cmark & 0.257\,\xmark & 0.170\,\xmark & 0.153\,\cmark \\
& PE $f{=}1.00$ (M1)
& 0.398\,\cmark & 0.305\,\cmark & 0.350\,\cmark & 0.282\,\cmark & 0.214\,\xmark & 0.262\,\cmark & 0.267\,\cmark & 0.325\,\cmark & 0.257\,\xmark & 0.174\,\xmark & 0.149\,\cmark \\
& \textbf{SIREN}
& \textbf{0.387}\,\cmark & \textbf{0.275}\,\cmark & \textbf{0.348}\,\cmark & \textbf{0.250}\,\cmark & \textbf{0.193}\,\cmark & \textbf{0.257}\,\cmark & \textbf{0.260}\,\cmark & \textbf{0.284}\,\cmark & \textbf{0.253}\,\cmark & \textbf{0.165}\,\cmark & \textbf{0.143}\,\cmark \\

\midrule
\multirow{10}{*}{$3M$}
& $\theta_A^\star$
& 0.384 & 0.279 & 0.342 & 0.251 & 0.193 & 0.250 & 0.264 & 0.294 & 0.254 & 0.164 & 0.142 \\
& True dir.
& A{>}B & A{>}B & A{>}B & A{>}B & A{<}B & A{>}B & A{>}B & A{>}B & A{<}B & A{<}B & A{>}B \\
& PE $f{=}0.05$
& 0.298\,\xmark & 0.181\,\xmark & 0.221\,\xmark & 0.182\,\xmark & 0.129\,\cmark & 0.190\,\xmark & 0.167\,\xmark & 0.229\,\xmark & 0.151\,\cmark & 0.107\,\cmark & 0.067\,\xmark \\
& PE $f{=}0.10$
& 0.326\,\xmark & 0.215\,\xmark & 0.272\,\xmark & 0.230\,\cmark & 0.153\,\cmark & 0.193\,\xmark & 0.197\,\xmark & 0.261\,\cmark & 0.213\,\cmark & 0.139\,\cmark & 0.113\,\xmark \\
& PE $f{=}0.20$
& 0.412\,\cmark & 0.281\,\cmark & 0.365\,\cmark & 0.273\,\cmark & 0.228\,\xmark & 0.263\,\cmark & 0.274\,\cmark & 0.335\,\cmark & 0.272\,\xmark & 0.177\,\xmark & 0.151\,\cmark \\
& PE $f{=}0.40$
& 0.397\,\cmark & 0.310\,\cmark & 0.353\,\cmark & 0.280\,\cmark & 0.217\,\xmark & 0.268\,\cmark & 0.273\,\cmark & 0.322\,\cmark & 0.279\,\xmark & 0.171\,\xmark & 0.154\,\cmark \\
& PE $f{=}0.60$
& 0.410\,\cmark & 0.306\,\cmark & 0.348\,\cmark & 0.284\,\cmark & 0.203\,\xmark & 0.262\,\cmark & 0.273\,\cmark & 0.316\,\cmark & 0.260\,\xmark & 0.172\,\xmark & 0.149\,\cmark \\
& PE $f{=}0.80$
& 0.402\,\cmark & 0.309\,\cmark & 0.346\,\cmark & 0.290\,\cmark & 0.212\,\xmark & 0.261\,\cmark & 0.272\,\cmark & 0.325\,\cmark & 0.257\,\xmark & 0.176\,\xmark & 0.153\,\cmark \\
& PE $f{=}1.00$ (M1)
& 0.398\,\cmark & 0.305\,\cmark & 0.349\,\cmark & 0.285\,\cmark & 0.211\,\xmark & 0.263\,\cmark & 0.269\,\cmark & 0.321\,\cmark & 0.257\,\xmark & 0.176\,\xmark & 0.149\,\cmark \\
& \textbf{SIREN}
& \textbf{0.389}\,\cmark & \textbf{0.275}\,\cmark & \textbf{0.347}\,\cmark & \textbf{0.249}\,\cmark & \textbf{0.194}\,\cmark & \textbf{0.252}\,\cmark & \textbf{0.263}\,\cmark & \textbf{0.287}\,\cmark & \textbf{0.252}\,\cmark & \textbf{0.163}\,\cmark & \textbf{0.143}\,\cmark \\

\midrule
\multirow{10}{*}{$6.5M$}
& $\theta_A^\star$
& 0.384 & 0.274 & 0.343 & 0.252 & 0.193 & 0.252 & 0.263 & 0.302 & 0.255 & 0.169 & 0.143 \\
& True dir.
& A{>}B & A{>}B & A{>}B & A{>}B & A{<}B & A{>}B & A{>}B & A{>}B & A{<}B & A{<}B & A{>}B \\
& PE $f{=}0.05$
& 0.302\,\xmark & 0.169\,\xmark & 0.215\,\xmark & 0.179\,\xmark & 0.115\,\cmark & 0.163\,\xmark & 0.192\,\xmark & 0.231\,\xmark & 0.153\,\cmark & 0.110\,\cmark & 0.087\,\xmark \\
& PE $f{=}0.10$
& 0.309\,\xmark & 0.216\,\xmark & 0.269\,\xmark & 0.227\,\cmark & 0.151\,\cmark & 0.196\,\xmark & 0.207\,\xmark & 0.260\,\cmark & 0.204\,\cmark & 0.140\,\cmark & 0.126\,\xmark \\
& PE $f{=}0.20$
& 0.397\,\cmark & 0.283\,\cmark & 0.353\,\cmark & 0.276\,\cmark & 0.224\,\xmark & 0.274\,\cmark & 0.269\,\cmark & 0.330\,\cmark & 0.264\,\xmark & 0.184\,\xmark & 0.164\,\cmark \\
& PE $f{=}0.40$
& 0.411\,\cmark & 0.296\,\cmark & 0.347\,\cmark & 0.295\,\cmark & 0.210\,\xmark & 0.269\,\cmark & 0.272\,\cmark & 0.338\,\cmark & 0.268\,\xmark & 0.187\,\xmark & 0.150\,\cmark \\
& PE $f{=}0.60$
& 0.403\,\cmark & 0.294\,\cmark & 0.345\,\cmark & 0.287\,\cmark & 0.211\,\xmark & 0.264\,\cmark & 0.273\,\cmark & 0.328\,\cmark & 0.260\,\xmark & 0.184\,\xmark & 0.148\,\cmark \\
& PE $f{=}0.80$
& 0.401\,\cmark & 0.291\,\cmark & 0.347\,\cmark & 0.290\,\cmark & 0.212\,\xmark & 0.262\,\cmark & 0.270\,\cmark & 0.332\,\cmark & 0.258\,\xmark & 0.180\,\xmark & 0.150\,\cmark \\
& PE $f{=}1.00$ (M1)
& 0.398\,\cmark & 0.293\,\cmark & 0.349\,\cmark & 0.284\,\cmark & 0.208\,\xmark & 0.263\,\cmark & 0.269\,\cmark & 0.330\,\cmark & 0.257\,\xmark & 0.176\,\xmark & 0.149\,\cmark \\
& \textbf{SIREN}
& \textbf{0.389}\,\cmark & \textbf{0.270}\,\cmark & \textbf{0.347}\,\cmark & \textbf{0.251}\,\cmark & \textbf{0.194}\,\cmark & \textbf{0.254}\,\cmark & \textbf{0.261}\,\cmark & \textbf{0.296}\,\cmark & \textbf{0.253}\,\cmark & \textbf{0.168}\,\cmark & \textbf{0.143}\,\cmark \\

\bottomrule
\end{tabular}}
\end{table*}
The comparison 
in Table~\ref{tab:prompteval_math_dspy_all_budgets} shows that PromptEval faces a bias trade-off that is not resolved by choosing a different cell-budget fraction. The sign of PromptEval's error depends on the observation fraction: sparse PromptEval is too pessimistic, while dense PromptEval approaches the optimistic same-data winner.  SIREN does not interpolate between these two regimes. It evaluates the post-search reporting rule on held-out split evaluations and uses item-level resampling to account for the first-order effect of selection. Across the four budgets in Table~\ref{tab:prompteval_per_budget_math_dspy}, SIREN's signed error stays between $-0.09$ and $-0.03$ percentage points, while every PromptEval fraction has substantially larger absolute bias. Full  PromptEval tables for the other subject-tuner cells appear in Appendix~\ref{sec:additional_experiments}.
\begin{wraptable}{r}{0.60\linewidth}
\vspace{-1.0em}
\centering
\caption{PromptEval on \textbf{MMLU-Pro Math} for DSPy.}
\label{tab:prompteval_per_budget_math_dspy}
\scriptsize
\setlength{\tabcolsep}{2.2pt}
\renewcommand{\arraystretch}{0.82}
\resizebox{\linewidth}{!}{%
\begin{tabular}{@{}c|cc|cc|cc|cc@{}}
\toprule
& \multicolumn{2}{c|}{500K} 
& \multicolumn{2}{c|}{1.5M} 
& \multicolumn{2}{c|}{3M} 
& \multicolumn{2}{c}{6.5M} \\
$f$ 
& dir & bias 
& dir & bias 
& dir & bias 
& dir & bias \\
\midrule
$0.05$ & $3/11$ & $-8.37$ & $3/11$ & $-8.75$ & $3/11$ & $-8.15$ & $3/11$ & $-8.29$ \\
$0.10$ & $3/11$ & $-5.20$ & $4/11$ & $-4.69$ & $5/11$ & $-4.59$ & $5/11$ & $-4.76$ \\
$0.20$ & $9/11$ & $+1.49$ & $8/11$ & $+1.70$ & $8/11$ & $+1.93$ & $8/11$ & $+1.71$ \\
$0.40$ & $9/11$ & $+1.56$ & $8/11$ & $+1.91$ & $8/11$ & $+1.86$ & $8/11$ & $+1.95$ \\
$0.60$ & $9/11$ & $+1.62$ & $8/11$ & $+1.60$ & $8/11$ & $+1.48$ & $8/11$ & $+1.52$ \\
$0.80$ & $9/11$ & $+1.55$ & $8/11$ & $+1.53$ & $8/11$ & $+1.68$ & $8/11$ & $+1.49$ \\
$1.00$ & $10/11$ & $+1.37$ & $8/11$ & $+1.52$ & $8/11$ & $+1.49$ & $8/11$ & $+1.33$ \\
\midrule
\textbf{SIREN} 
& $\mathbf{10/11}$ & $\mathbf{-0.09}$ 
& $\mathbf{11/11}$ & $\mathbf{-0.03}$ 
& $\mathbf{11/11}$ & $\mathbf{-0.05}$ 
& $\mathbf{11/11}$ & $\mathbf{-0.03}$ \\
\bottomrule
\end{tabular}}
\vspace{-1.0em}
\end{wraptable}

\vspace{-5mm}
Therefore, budget-aware prompt scoring is not the same as selection-aware inference. PromptEval is designed for efficient evaluation over a fixed prompt pool. In our post-tuning setting, however, the shortlist has already been produced by an adaptive tuner, and the target is the fresh-data performance of a specified reporting rule rather than the imputed performance distribution over prompts. In the results above, sparse PromptEval can have high directional accuracy because shrinkage moves estimates to the same side of $\theta_B^\star$ as the truth, rather than because it estimates $\theta_A^\star$ accurately. Dense PromptEval has the opposite failure mode: it approaches the full-matrix selected winner and inherits the winner's-curse bias. SIREN targets the reporting quantity directly by evaluating the selected repeated-split procedure and accounting for the first-order effect of selection.

\vspace{-2mm}
\section{Conclusion}
\vspace{-2mm}
Adaptive prompt and program search changes what an LLM benchmark score estimates. Once benchmark items are reused for tuning, the reported winner is a data-dependent artifact, not a fixed system evaluated on fresh data. SIREN addresses this tune-then-deploy setting by freezing the post-search shortlist, separating splitwise selection from held-out evaluation, and bootstrapping item-level contributions without rerunning the tuner. In the fixed-shortlist smooth-selector regime, this supports valid simultaneous inference over the budget grid. Experiments show that same-data winner reports are optimistic, can change tuned-versus-default conclusions, and are not fixed by budget-aware prompt scoring alone. Fully open-ended adaptive search remains future work.

\bibliographystyle{plain}
\bibliography{main}

\newpage
\appendix
\onecolumn

\section{Related works}
\label{app:related_work}

A first line of related work develops methods that use data to search over prompts, demonstrations, and larger language-model programs. Early automatic prompting methods such as AutoPrompt optimize discrete trigger tokens for masked language models, while Automatic Prompt Engineer treats instructions as candidate programs generated and selected by an LLM \cite{shin2020autoprompt,zhou2023largepromptengineers}. More recent prompt optimizers make the search loop itself increasingly adaptive: OPRO uses an LLM as a derivative-free optimizer that proposes new instructions from previous scores, and EvoPrompt connects LLM prompt generation with evolutionary search \cite{yang2024large,guo2024evoprompt}. At the program level, DSPy abstracts LM pipelines into parameterized modules and compiles them against a task metric, while MIPRO extends this idea by jointly optimizing instructions and few-shot demonstrations for multi-stage LM programs \cite{khattab2023dspy,opsahlong2024optimizing}. Other work makes the evaluation budget explicit: TRIPLE formulates prompt selection as fixed-budget best-arm identification, and OPTS uses bandit-based strategy selection inside prompt optimization \cite{shi2024triple,ashizawa2025opts}. These methods are natural upstream tuners for our setting, including the random-search and DSPy tuners used in our experiments. Our contribution is complementary: rather than proposing another optimizer, we study how to report the fresh-data performance of the full budgeted tune-then-deploy procedure after such adaptive search has selected a candidate artifact.

A second line of work asks how to make model evaluation more stable, cheaper, or more statistically meaningful. Benchmarks such as MMLU-Pro increase task difficulty and robustness by emphasizing reasoning-focused questions and larger answer sets, but even fixed-benchmark reports can be fragile because prompt templates affect both absolute scores and model rankings \cite{wang2024mmlupro,mizrahi2024state}. PromptEval addresses this prompt sensitivity by estimating a performance distribution over a prespecified prompt pool with an IRT-style model under limited cell budgets, making it the closest external baseline in our experiments \cite{polo2024efficient}. ReliableEval instead treats evaluation as stochastic over meaning-preserving prompt perturbations and uses method-of-moments reasoning to determine how many perturbations are needed, while Prompt Risk Control selects prompts using upper bounds on deployment-relevant risk measures \cite{lior2025reliableeval,zollo2024prompt}. A parallel body of efficient-evaluation work reduces benchmark cost through curated subsets, anchor points, matrix factorization, collaborative filtering, stratified or model-assisted sampling, and active querying with valid confidence intervals \cite{polo2024tinybenchmarks,vivek2024anchor,zhou2025speeding,zhong2025efficient,fogliato2024framework,zrnic2024active,gligoric2025can,angelopoulos2025cost,zhang2025benchmark,wu2026efficient}. Our internal baselines M1--M4 instantiate common reporting choices in this space, same-data winner reporting, a Wald interval around the winner, a single holdout split, and repeated split-level argmax with a Student-$t$ interval, and illustrate that budget awareness or data splitting alone is not the same as selection-aware inference. Our paper targets a different estimand from these works: the post-search, procedure-level performance of a frozen shortlist plus a specified reporting rule, together with uncertainty that accounts for both held-out evaluation noise and the first-order effect of selection.

A third line of work studies how repeated data use and data-dependent selection invalidate naive uncertainty estimates. The reusable holdout and related adaptive data-analysis theory show that repeatedly querying the same validation data can destroy nominal guarantees unless the reporting protocol controls information leakage; leaderboard-oriented methods such as the Ladder pursue similar goals for machine-learning competitions \cite{dwork2015reusable,dwork2015preserving,blum2015ladder}. Post-selection inference studies valid inference after a model, coefficient, or hypothesis has been chosen from the data, using tools such as simultaneous inference, selective error control, exact conditioning for the lasso, and randomized selection \cite{berk2013valid,fithian2014optimal,lee2016exact,tian2018selective}. The winner's curse literature makes the bias mechanism explicit: the empirically best option is likely to have benefited from positive noise, so conventional estimates and confidence intervals for selected winners can be optimistic \cite{efron2011tweedie,andrews2024inference}. Finally, bootstrap and resampling methods provide the computational machinery for quantifying uncertainty in complex estimators, including classical bootstrap corrections and Gaussian multiplier bootstraps for maxima over many coordinates \cite{efron1997improvements,chernozhukov2013gaussian}. Our work brings these ideas into adaptive LLM benchmarking by defining the selected tune-then-deploy procedure as the inferential target and by deriving an item-level repeated-split linearization whose multiplier bootstrap simultaneously covers systems, budgets, and fixed contrasts without rerunning the tuner.

\section{Proofs for Section~\ref{sec:siren}}
\label{app:siren_proofs}

\subsection{Formal setup and assumptions}
\label{app:siren_formal_setup}

This appendix gives the formal version of the guarantees stated informally in Section~\ref{sec:theory}. 

Let $\mathcal F^{\mathrm{tune}}$ denote the information generated by the completed tuning stage, including the retained shortlists and selectors. 
All arguments are conditional on $\mathcal F^{\mathrm{tune}}$; consequently, the retained shortlists and selectors are treated as fixed reporting-layer objects, and this conditioning is suppressed below. Asymptotics are taken as $M\to\infty$ with the number of systems, budgets, repeated splits, and retained artifacts fixed.

For each split, system, and budget, define the population score vectors
\begin{equation}
s_{r,j,M}(B)
=
\mathbb E\big[\widehat S_{r,j}(B)\mid \mathcal G_M\big],
\qquad
t_{r,j,M}(B)
=
\mathbb E\big[\widehat T_{r,j}(B)\mid \mathcal G_M\big],
\qquad
q_{r,j,M}(B)
=
g_{j,B}\big(s_{r,j,M}(B)\big).
\label{eq:app_population_score_vectors}
\end{equation}
For a differentiable selector $g_{j,B}$, let $Dg_{j,B}(x)$ denote its Jacobian at $x$. 
We also define the centered repeated-split score table
\begin{equation}
\mathbb U_M
=
\Big\{
\widehat S_{r,j}(B)-s_{r,j,M}(B),\
\widehat T_{r,j}(B)-t_{r,j,M}(B)
\Big\}_{r\le R,\ j\le N,\ B\in\mathcal B}.
\label{eq:app_siren_U}
\end{equation}
Finally, define the joint scaled error vector over the finite system-budget grid as
\begin{equation}
\mathbf Z_M
=
\Big\{
\sqrt M
\big(
\widetilde\theta_{j,R}(B)
-
\theta_{j,M}^{\mathrm{RS}\mid\mathcal G}(B)
\big)
\Big\}_{j\le N,\ B\in\mathcal B}.
\label{eq:app_siren_ZM}
\end{equation}

\begin{assumption}[Fixed-shortlist smooth-selector regime]
\label{ass:siren_regime}
Fix the number of systems $N$, budget grid $\mathcal B$, and number of repeated splits $R$, and assume $\sup_{j\le N,\ B\in\mathcal B}K_{j,B}<\infty$. 
The split design $\mathcal G_M$ is independent of the benchmark items and their execution randomness. 
There exist deterministic integers $m_M,\ell_M$ and constants $\rho_{\mathrm{score}},\rho_{\mathrm{eval}}\in(0,1)$ such that
$|D_r^{\mathrm{score}}|=m_M$, $|E_r|=\ell_M$, $m_M/M\to\rho_{\mathrm{score}}$, and $\ell_M/M\to\rho_{\mathrm{eval}}$ for every $r\le R$. 
All score variables take values in $[0,1]$.

For each item $i$, let $\mathcal R_i$ collect $W_i$ together with all execution randomness used to score item $i$ across splits, systems, budgets, and retained artifacts. 
The tuples $\mathcal R_1,\dots,\mathcal R_M$ are i.i.d. and independent of $\mathcal G_M$. 
No independence is required among the coordinates within a fixed $\mathcal R_i$.

There exist constants $\delta_0>0$ and $C_g<\infty$ such that, for all sufficiently large $M$ and all $r,j,B$, the selector $g_{j,B}$ is twice continuously differentiable on
$\{x:\|x-s_{r,j,M}(B)\|\le\delta_0\}$, with first and second derivatives uniformly bounded by $C_g$. 
The weights $\omega_r$ are $\mathcal G_M$-measurable, nonnegative, and sum to one.
\end{assumption}

\subsection{Coordinate formulas and plug-in contributions}
\label{app:siren_coordinates}

The main text uses item-level contributions only conceptually. 
We now give the exact coordinate formulas used in the formal statements and proofs. 
Let
\begin{equation}
d_U
=
2R\sum_{j\le N,\ B\in\mathcal B}K_{j,B}
\label{eq:app_du}
\end{equation}
denote the dimension of the stacked score table. For the score vectors in \eqref{eq:siren_dev_score} and \eqref{eq:siren_hold_score}, define the coordinatewise item-level contributions
\begin{equation}
\Gamma^{S}_{i,r,j,k,M}(B)
=
\frac{M}{|D_r^{\mathrm{score}}|}
\mathbf 1\{i\in D_r^{\mathrm{score}}\}
\Big(
Z_j\big(W_i,a_{j,B,k},\xi_{irjBk}^{S}\big)-s_{r,j,k,M}(B)
\Big),
\label{eq:siren_gamma_s_coord}
\end{equation}
\begin{equation}
\Gamma^{T}_{i,r,j,k,M}(B)
=
\frac{M}{|E_r|}
\mathbf 1\{i\in E_r\}
\Big(
Z_j\big(W_i,a_{j,B,k},\xi_{irjBk}^{T}\big)-t_{r,j,k,M}(B)
\Big),
\label{eq:siren_gamma_t_coord}
\end{equation}
and their block vectors
\begin{equation}
\Gamma^{S}_{i,r,j,M}(B)
=
\big(
\Gamma^{S}_{i,r,j,1,M}(B),\dots,
\Gamma^{S}_{i,r,j,K_{j,B},M}(B)
\big)^\top,
\label{eq:siren_gamma_s_block}
\end{equation}
\begin{equation}
\Gamma^{T}_{i,r,j,M}(B)
=
\big(
\Gamma^{T}_{i,r,j,1,M}(B),\dots,
\Gamma^{T}_{i,r,j,K_{j,B},M}(B)
\big)^\top .
\label{eq:siren_gamma_t_block}
\end{equation}
The stacked vector $\sqrt M\mathbb U_M
=
\frac{1}{\sqrt M}\sum_{i=1}^M \Gamma_{i,M}$ is obtained by stacking the blocks in \eqref{eq:siren_gamma_s_block} and \eqref{eq:siren_gamma_t_block} over the finite index set $\{(r,j,B):r\le R,\ j\le N,\ B\in\mathcal B\}$ under any fixed deterministic ordering. The coordinate definitions imply the score-table identities
\begin{equation}
\sqrt M\big(\widehat S_{r,j}(B)-s_{r,j,M}(B)\big)
=
\frac{1}{\sqrt M}\sum_{i=1}^M\Gamma^{S}_{i,r,j,M}(B),
\label{eq:siren_score_expansion_s}
\end{equation}
\begin{equation}
\sqrt M\big(\widehat T_{r,j}(B)-t_{r,j,M}(B)\big)
=
\frac{1}{\sqrt M}\sum_{i=1}^M\Gamma^{T}_{i,r,j,M}(B).
\label{eq:siren_score_expansion_t}
\end{equation}

For the selected repeated-split estimator, define the population item-level contribution
\begin{equation}
\psi_{i,j,B,M}
=
\sum_{r=1}^R \omega_r
\Big[
q_{r,j,M}(B)^\top \Gamma^{T}_{i,r,j,M}(B)
+
t_{r,j,M}(B)^\top
Dg_{j,B}\big(s_{r,j,M}(B)\big)
\Gamma^{S}_{i,r,j,M}(B)
\Big].
\label{eq:siren_psi}
\end{equation}
The first term is the direct held-out evaluation contribution of item $W_i$. 
The second term is the first-order selection contribution: when $W_i$ appears in a scoring subset, it perturbs the selection scores and hence the splitwise weights through the derivative of $g_{j,B}$.

Stack these contributions over the finite system-budget grid as
\begin{equation}
\boldsymbol\Psi_{i,M}
=
\{\psi_{i,j,B,M}\}_{j\le N,\ B\in\mathcal B},
\qquad
\Omega_M
=
\frac{1}{M}
\sum_{i=1}^M
\mathbb E\big[
\boldsymbol\Psi_{i,M}\boldsymbol\Psi_{i,M}^{\top}
\mid \mathcal G_M
\big].
\label{eq:app_Psi_Omega}
\end{equation}
For the computable bootstrap contributions, define the sample-centered coordinatewise analogues
\begin{equation}
\widehat\Gamma^{S}_{i,r,j,k,M}(B)
=
\frac{M}{|D_r^{\mathrm{score}}|}
\mathbf 1\{i\in D_r^{\mathrm{score}}\}
\Big(
Z_j\big(W_i,a_{j,B,k},\xi_{irjBk}^{S}\big)-\widehat S_{r,j,k}(B)
\Big),
\label{eq:siren_hat_gamma_s_coord}
\end{equation}
\begin{equation}
\widehat\Gamma^{T}_{i,r,j,k,M}(B)
=
\frac{M}{|E_r|}
\mathbf 1\{i\in E_r\}
\Big(
Z_j\big(W_i,a_{j,B,k},\xi_{irjBk}^{T}\big)-\widehat T_{r,j,k}(B)
\Big),
\label{eq:siren_hat_gamma_t_coord}
\end{equation}
and the corresponding block vectors
\begin{equation}
\widehat\Gamma^{S}_{i,r,j,M}(B)
=
\big(
\widehat\Gamma^{S}_{i,r,j,1,M}(B),\dots,
\widehat\Gamma^{S}_{i,r,j,K_{j,B},M}(B)
\big)^\top,
\label{eq:siren_hat_gamma_s_block}
\end{equation}
\begin{equation}
\widehat\Gamma^{T}_{i,r,j,M}(B)
=
\big(
\widehat\Gamma^{T}_{i,r,j,1,M}(B),\dots,
\widehat\Gamma^{T}_{i,r,j,K_{j,B},M}(B)
\big)^\top .
\label{eq:siren_hat_gamma_t_block}
\end{equation}
The plug-in contribution estimator used in the bootstrap subsection is
\begin{equation}
\widehat\psi_{i,j,B,M}
=
\sum_{r=1}^R\omega_r
\Big[
\widehat q_{r,j}(B)^\top\widehat\Gamma^{T}_{i,r,j,M}(B)
+
\widehat T_{r,j}(B)^\top Dg_{j,B}\big(\widehat S_{r,j}(B)\big)
\widehat\Gamma^{S}_{i,r,j,M}(B)
\Big].
\label{eq:siren_hat_psi}
\end{equation}
Here the first term is the sample analogue of the direct held-out contribution, and the second term is the sample analogue of the first-order selection contribution. We also write
\begin{equation}
\widehat{\boldsymbol\Psi}_{i,M}
=
\{\widehat\psi_{i,j,B,M}\}_{j\le N,\ B\in\mathcal B},
\qquad
\mathcal H_M
=
\sigma\big(
\mathcal G_M,
\widehat{\boldsymbol\Psi}_{1,M},\dots,\widehat{\boldsymbol\Psi}_{M,M}
\big).
\label{eq:app_hatPsi_H}
\end{equation}

\subsection{Conditional convergence convention and score-table CLT}
\label{app:siren_score_table_clt}

For completeness, we state the conditional convergence convention used in the formal results below. We write $X_M\rightsquigarrow X$ conditionally on $\mathcal G_M$ in probability if
\begin{equation}
\sup_{h\in BL_1}
\left|
\mathbb E\big[h(X_M)\mid\mathcal G_M\big]
-
\mathbb E\big[h(X)\big]
\right|
\to_p 0,
\label{eq:app_conditional_weak}
\end{equation}
where $BL_1$ is the class of real-valued functions bounded by one and Lipschitz with constant at most one.

\begin{lemma}[Exact score-table representation]
\label{lem:siren_input_representation}
Under Assumption~\ref{ass:siren_regime},
\begin{equation}
\sqrt M\,\mathbb U_M
=
\frac{1}{\sqrt M}\sum_{i=1}^M\Gamma_{i,M},
\label{eq:lem_input_repr_main}
\end{equation}
where $\Gamma_{i,M}$ is the stacked vector obtained from \eqref{eq:siren_gamma_s_coord} and \eqref{eq:siren_gamma_t_coord}. Conditional on $\mathcal G_M$, the vectors $\Gamma_{1,M},\dots,\Gamma_{M,M}$ are independent, mean zero, and satisfy
\begin{equation}
\max_{1\le i\le M}
\mathbb E\big[\|\Gamma_{i,M}\|^2\mid\mathcal G_M\big]
\le C
\label{eq:lem_input_repr_moment}
\end{equation}
for a deterministic constant $C<\infty$ and all sufficiently large $M$. Consequently,
\begin{equation}
\max_{r,j,B,k}
M\,\mathbb E\Big[
\big(\widehat S_{r,j,k}(B)-s_{r,j,k,M}(B)\big)^2
\mid\mathcal G_M
\Big]
\le C
\label{eq:lem_input_repr_S2}
\end{equation}
and
\begin{equation}
\max_{r,j,B,k}
M\,\mathbb E\Big[
\big(\widehat T_{r,j,k}(B)-t_{r,j,k,M}(B)\big)^2
\mid\mathcal G_M
\Big]
\le C.
\label{eq:lem_input_repr_T2}
\end{equation}
\end{lemma}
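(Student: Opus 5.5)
The proof is mostly bookkeeping, organized in three steps: verify the identity coordinatewise, check independence and mean zero conditional on $\mathcal G_M$, and then bound second moments using boundedness of scores together with the proportional split sizes.

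\emph{Step 1: the representation \eqref{eq:lem_input_repr_main}.} Fix $(r,j,B,k)$. By \eqref{eq:siren_gamma_s_coord},
\[
\frac{1}{\sqrt M}\sum_{i=1}^M\Gamma^{S}_{i,r,j,k,M}(B)
=\frac{\sqrt M}{|D_r^{\mathrm{score}}|}\sum_{i\in D_r^{\mathrm{score}}}\Big(Z_j(W_i,a_{j,B,k},\xi^S_{irjBk})-s_{r,j,k,M}(B)\Big).
\]
By \eqref{eq:siren_dev_score}, the right-hand side equals $\sqrt M(\widehat S_{r,j,k}(B)-s_{r,j,k,M}(B))$. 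The held-out coordinates are handled identically. Stacking under the fixed ordering then gives \eqref{eq:siren_score_expansion_s}, \eqref{eq:siren_score_expansion_t}, and hence \eqref{eq:lem_input_repr_main}.

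\emph{Step 2: conditional independence and mean zero.} Conditional on $\mathcal G_M$, the index sets $D_r^{\mathrm{score}}$ and $E_r$ and the weights are fixed. $\Gamma_{i,M}$ is then a measurable function of $\mathcal R_i$ alone. Since the $\mathcal R_i$ are i.i.d. and independent of $\mathcal G_M$, the $\Gamma_{i,M}$ are conditionally independent. For mean zero, note that $\mathbb E[Z_j(W_i,a_{j,B,k},\xi^S_{irjBk})\mid\mathcal G_M]$ equals a common value $m^S_{r,j,k}$ across $i$, namely the marginal law of $\mathcal R_i$. Hence $s_{r,j,k,M}(B)=\mathbb E[\widehat S_{r,j,k}(B)\mid\mathcal G_M]=m^S_{r,j,k}$. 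It follows that $\mathbb E[\Gamma^S_{i,r,j,k,M}\mid\mathcal G_M]=\frac{M}{|D_r^{\mathrm{score}}|}\mathbf 1\{i\in D_r^{\mathrm{score}}\}(m^S-m^S)=0$, and similarly for the $T$ coordinates. One point needs care here: the execution-randomness labels $\xi^S_{irjk}$ must have the same distribution for every $i$. This holds because each $\mathcal R_i$ is identically distributed, and it is the step I would check most carefully against Assumption~\ref{ass:siren_regime}.

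\emph{Step 3: the moment bounds.} Scores lie in $[0,1]$, so every centered difference is bounded by $1$ in absolute value. Hence $|\Gamma^S_{i,r,j,k,M}|\le M/m_M$ and $|\Gamma^T_{i,r,j,k,M}|\le M/\ell_M$. Because $m_M/M\to\rho_{\mathrm{score}}>0$ and $\ell_M/M\to\rho_{\mathrm{eval}}>0$, for large $M$ these are at most $2/\rho_{\mathrm{score}}$ and $2/\rho_{\mathrm{eval}}$. Combining this with $d_U<\infty$ from \eqref{eq:app_du}, which is finite because $R$, $N$, $|\mathcal B|$ and $\sup K_{j,B}$ are fixed, gives
\[
\|\Gamma_{i,M}\|^2\le d_U\cdot 4\max(\rho_{\mathrm{score}}^{-2},\rho_{\mathrm{eval}}^{-2})=:C,
\]
deterministically, which proves \eqref{eq:lem_input_repr_moment}. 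Finally, by conditional independence and mean zero,
\[
M\,\mathbb E\big[(\widehat S_{r,j,k}-s_{r,j,k,M})^2\mid\mathcal G_M\big]=\frac1M\sum_{i=1}^M\mathbb E\big[(\Gamma^S_{i,r,j,k,M})^2\mid\mathcal G_M\big]\le C,
\]
and likewise for $T$. This yields \eqref{eq:lem_input_repr_S2} and \eqref{eq:lem_input_repr_T2}. No step is a genuine obstacle; the only subtlety is the identical-distribution point in Step 2.
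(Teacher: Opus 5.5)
Your proposal is correct and follows the paper's proof essentially step by step. Both arguments use the same coordinatewise identity, and both get conditional independence from the $\mathcal G_M$-measurable split indicators together with the i.i.d.\ tuples $\mathcal R_i$. Both obtain mean zero by identical conditional distribution, and uniform boundedness from $[0,1]$ scores with split fractions bounded away from zero. The identical-distribution point you flagged is exactly what the paper invokes, and it is guaranteed by the $\mathcal R_i$ being i.i.d.\ and independent of $\mathcal G_M$.
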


\begin{proof}
For a scoring coordinate,
\begin{equation}
\widehat S_{r,j,k}(B)
=
\frac{1}{|D_r^{\mathrm{score}}|}
\sum_{i=1}^M
\mathbf 1\{i\in D_r^{\mathrm{score}}\}
Z_j\big(W_i,a_{j,B,k},\xi_{irjBk}^{S}\big).
\label{eq:proof_input_repr_S_def}
\end{equation}
Subtracting $s_{r,j,k,M}(B)$ and multiplying by $\sqrt M$ gives
\begin{align}
\sqrt M\big(\widehat S_{r,j,k}(B)-s_{r,j,k,M}(B)\big)
&=
\frac{1}{\sqrt M}\sum_{i=1}^M
\frac{M}{|D_r^{\mathrm{score}}|}
\mathbf 1\{i\in D_r^{\mathrm{score}}\}
\notag\\
&\qquad\times
\Big(
Z_j\big(W_i,a_{j,B,k},\xi_{irjBk}^{S}\big)-s_{r,j,k,M}(B)
\Big),
\label{eq:proof_input_repr_S_expansion}
\end{align}
which is the corresponding coordinate of $M^{-1/2}\sum_{i=1}^M\Gamma_{i,M}$. The same calculation with \eqref{eq:siren_hold_score} gives the held-out coordinates. Stacking all coordinates proves \eqref{eq:lem_input_repr_main}.

Conditional on $\mathcal G_M$, the split membership indicators and denominators are deterministic. The vector $\Gamma_{i,M}$ depends only on $W_i$ and the execution randomness attached to item $i$. By Assumption~\ref{ass:siren_regime}, these item-level tuples are independent across $i$, and therefore $\Gamma_{1,M},\dots,\Gamma_{M,M}$ are conditionally independent.

For mean zero, consider a scoring coordinate. If $i\notin D_r^{\mathrm{score}}$, then the corresponding contribution is zero. If $i\in D_r^{\mathrm{score}}$, then the included summands in \eqref{eq:proof_input_repr_S_def} are conditionally identically distributed given $\mathcal G_M$, so
\begin{equation}
s_{r,j,k,M}(B)
=
\mathbb E\Big[
Z_j\big(W_i,a_{j,B,k},\xi_{irjBk}^{S}\big)
\mid\mathcal G_M
\Big].
\label{eq:proof_input_repr_s_id}
\end{equation}
The same argument applies to held-out coordinates. Hence
\begin{equation}
\mathbb E\big[\Gamma_{i,M}\mid\mathcal G_M\big]=0.
\label{eq:proof_input_repr_meanzero}
\end{equation}

The dimension $d_U$ is fixed by Assumption~\ref{ass:siren_regime}. Also, the split fractions stay away from zero, so there exists $L<\infty$ such that
\begin{equation}
\max_{r\le R}\frac{M}{|D_r^{\mathrm{score}}|}\le L,
\qquad
\max_{r\le R}\frac{M}{|E_r|}\le L
\label{eq:proof_input_repr_L}
\end{equation}
for all sufficiently large $M$. Since scores lie in $[0,1]$, every coordinate of $\Gamma_{i,M}$ is bounded by $L$ in absolute value. Thus $\|\Gamma_{i,M}\|^2\le C$ for a deterministic constant $C<\infty$, proving \eqref{eq:lem_input_repr_moment}.

Finally, by \eqref{eq:proof_input_repr_S_expansion}, conditional independence, and mean zero,
\begin{align}
M\,\mathbb E\Big[
\big(\widehat S_{r,j,k}(B)-s_{r,j,k,M}(B)\big)^2
\mid\mathcal G_M
\Big]
&=
\frac{1}{M}\sum_{i=1}^M
\mathbb E\Big[
\big(\Gamma^{S}_{i,r,j,k,M}(B)\big)^2
\mid\mathcal G_M
\Big]
\notag\\
&\le C.
\label{eq:proof_input_repr_S2}
\end{align}
The proof of \eqref{eq:lem_input_repr_T2} is identical.
\end{proof}

\begin{assumption}[Score-table covariance stabilization]
\label{ass:siren_score_table_covariance}
Let
\begin{equation}
\Sigma_M
=
\frac{1}{M}
\sum_{i=1}^M
\mathbb E\big[\Gamma_{i,M}\Gamma_{i,M}^\top\mid\mathcal G_M\big].
\label{eq:siren_sigma_M}
\end{equation}
There exists a deterministic positive semidefinite matrix $\Sigma$ such that
\begin{equation}
\|\Sigma_M-\Sigma\|_{\mathrm{op}}\to_p 0.
\label{eq:siren_sigma_conv}
\end{equation}
\end{assumption}

\begin{proposition}[Score-table CLT]
\label{prop:input_clt}
Under Assumptions~\ref{ass:siren_regime} and \ref{ass:siren_score_table_covariance},
\begin{equation}
\sqrt M\,\mathbb U_M
\rightsquigarrow
N(0,\Sigma)
\qquad
\text{conditionally on }\mathcal G_M\text{ in probability}.
\label{eq:prop_input_clt}
\end{equation}
In particular, $\sqrt M\,\mathbb U_M=O_p(1)$.
\end{proposition}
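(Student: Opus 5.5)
We prove the score-table CLT by a conditional Lindeberg--Feller argument applied to the representation in Lemma~\ref{lem:siren_input_representation}, combined with the Cramér--Wold device. By that lemma, $\sqrt M\,\mathbb U_M=M^{-1/2}\sum_{i=1}^M\Gamma_{i,M}$. Conditionally on $\mathcal G_M$, the summands are independent and mean zero, and they are uniformly bounded. Indeed, the proof of the lemma shows every coordinate is bounded by a deterministic $L$, so $\|\Gamma_{i,M}\|\le \sqrt{d_U}\,L$ for all large $M$, with $d_U$ fixed. Their conditional covariance is $\Sigma_M$.

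\textbf{Step 1 (fix the split design).} Fix $\lambda\in\mathbb R^{d_U}$ and consider $X_M=\lambda^\top\sqrt M\,\mathbb U_M=M^{-1/2}\sum_i\lambda^\top\Gamma_{i,M}$. Since $\mathcal G_M$ is independent of the item tuples $\mathcal R_i$, the conditional law given $\mathcal G_M$ is the law of a triangular array of independent summands indexed by the realized design. The conditional variance is $\lambda^\top\Sigma_M\lambda$.

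\textbf{Step 2 (Lindeberg condition).} The Lindeberg condition holds deterministically, for every realization of the design and large $M$. Each summand satisfies $|M^{-1/2}\lambda^\top\Gamma_{i,M}|\le \|\lambda\|\sqrt{d_U}L/\sqrt M$, which is eventually smaller than any $\varepsilon>0$. Hence the Lindeberg sum is exactly zero for large $M$.

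\textbf{Step 3 (convergence along subsequences).} This is the main obstacle: upgrading to convergence "conditionally on $\mathcal G_M$ in probability" in the bounded-Lipschitz sense of \eqref{eq:app_conditional_weak} when the variance $\Sigma_M$ is random. We use a subsequence argument.

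Take any subsequence. By Assumption~\ref{ass:siren_score_table_covariance}, there is a further subsequence along which $\Sigma_M\to\Sigma$ almost surely. Along it, for almost every design sequence, the fixed-design Lindeberg--Feller theorem gives $X_M\rightsquigarrow N(0,\lambda^\top\Sigma\lambda)$. This covers the degenerate case $\lambda^\top\Sigma\lambda=0$: there the conditional variance tends to zero, so Chebyshev yields convergence to a point mass.

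Cramér--Wold in fixed dimension then gives conditional weak convergence of the full vector to $N(0,\Sigma)$, almost surely along the sub-subsequence. Weak convergence in $\mathbb R^{d_U}$ is equivalent to $\sup_{h\in BL_1}|\mathbb E[h(\cdot)\mid\mathcal G_M]-\mathbb E h(N(0,\Sigma))|\to0$ (Dudley's metrization). So this supremum tends to zero almost surely along the sub-subsequence. Because every subsequence has such a further subsequence, the supremum converges to zero in probability, which is \eqref{eq:prop_input_clt}.

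Alternatively, one can avoid subsequences by a quantitative route: a Berry--Esseen bound on smooth test functions gives a conditional error of order $M^{-1/2}$. One then adds the continuity of the Gaussian law in its covariance, controlled via $\|\Sigma_M-\Sigma\|_{\mathrm{op}}$.

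\textbf{Step 4 (tightness).} The claim $\sqrt M\,\mathbb U_M=O_p(1)$ follows directly without the CLT. Conditional Chebyshev and \eqref{eq:lem_input_repr_moment} give $\mathbb E[\|\sqrt M\,\mathbb U_M\|^2\mid\mathcal G_M]=\operatorname{tr}\Sigma_M\le C$. Taking expectations and applying Markov's inequality gives uniform tightness.
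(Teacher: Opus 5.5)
Your proposal is correct and follows essentially the same route as the paper: the exact representation from Lemma~\ref{lem:siren_input_representation}, a Lindeberg condition that holds trivially by uniform boundedness, the subsequence argument giving $\Sigma_M\to\Sigma$ almost surely along a further subsequence, and Cram\'er--Wold in fixed dimension. Your departures are minor refinements: you handle the degenerate direction $\lambda^\top\Sigma\lambda=0$ explicitly, and you derive $\sqrt M\,\mathbb U_M=O_p(1)$ directly from $\mathbb E[\|\sqrt M\,\mathbb U_M\|^2\mid\mathcal G_M]=\operatorname{tr}\Sigma_M\le C$ rather than reading it off the limit.
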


\begin{proof}
By Lemma~\ref{lem:siren_input_representation},
\begin{equation}
\sqrt M\,\mathbb U_M
=
\frac{1}{\sqrt M}\sum_{i=1}^M\Gamma_{i,M},
\label{eq:proof_prop_input_repr}
\end{equation}
where, conditional on $\mathcal G_M$, the summands are independent, mean zero, and uniformly bounded.

Fix a deterministic vector $u\in\mathbb R^{d_U}$ and define
\begin{equation}
Y_{i,M}(u)=\frac{1}{\sqrt M}u^\top\Gamma_{i,M}.
\label{eq:proof_prop_Y}
\end{equation}
Then
\begin{equation}
\sum_{i=1}^M\mathbb E\big[Y_{i,M}(u)^2\mid\mathcal G_M\big]
=
u^\top\Sigma_M u.
\label{eq:proof_prop_varsum}
\end{equation}
By Assumption~\ref{ass:siren_score_table_covariance}, this variance converges in probability to $u^\top\Sigma u$.

The Lindeberg condition is immediate. Lemma~\ref{lem:siren_input_representation} and bounded scores imply that there exists $C_0<\infty$ such that $\max_i\|\Gamma_{i,M}\|\le C_0$ for all sufficiently large $M$. Hence
\begin{equation}
\max_{1\le i\le M}|Y_{i,M}(u)|
\le
\frac{\|u\|C_0}{\sqrt M}
\to0,
\label{eq:proof_prop_ymax}
\end{equation}
which implies the conditional Lindeberg condition for every fixed $u$.

Along any subsequence there is a further subsequence on which $\|\Sigma_M-\Sigma\|_{\mathrm{op}}\to0$ almost surely. On that further subsequence, the conditional Lindeberg--Feller theorem gives
\begin{equation}
u^\top\sqrt M\,\mathbb U_M
\rightsquigarrow
N(0,u^\top\Sigma u)
\qquad
\text{conditionally on }\mathcal G_M
\label{eq:proof_prop_scalar}
\end{equation}
almost surely. Since the dimension is fixed, Cram\'er--Wold gives \eqref{eq:prop_input_clt}. The bound $\sqrt M\,\mathbb U_M=O_p(1)$ follows from the same display.
\end{proof}

\subsection{Selection-aware linearization}
\label{app:siren_linearization}

\begin{lemma}[Selector linearization]
\label{lem:siren_selector_linearization}
Under Assumption~\ref{ass:siren_regime},
\begin{equation}
\max_{r\le R,\ j\le N,\ B\in\mathcal B}
\left\|
\widehat q_{r,j}(B)
-
q_{r,j,M}(B)
-
Dg_{j,B}\big(s_{r,j,M}(B)\big)
\big(
\widehat S_{r,j}(B)-s_{r,j,M}(B)
\big)
\right\|
=
o_p(M^{-1/2}).
\label{eq:lem_selector_linearization}
\end{equation}
\end{lemma}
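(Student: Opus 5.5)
The plan is a second-order Taylor expansion of $g_{j,B}$ around $s_{r,j,M}(B)$, restricted to a high-probability event on which $\widehat S_{r,j}(B)$ stays inside the $\delta_0$-ball where Assumption~\ref{ass:siren_regime} supplies smoothness. The remainder will then be quadratic in $\widehat S_{r,j}(B)-s_{r,j,M}(B)$, and Lemma~\ref{lem:siren_input_representation} shows that this deviation is $O_p(M^{-1/2})$. Because the index set $\{(r,j,B)\}$ is finite and fixed, the maximum over $r,j,B$ costs nothing beyond a union bound.

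The steps, in order, are as follows.

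First, fix $(r,j,B)$ and write $\Delta_M=\widehat S_{r,j}(B)-s_{r,j,M}(B)$. By \eqref{eq:lem_input_repr_S2} and the fixed dimension $K_{j,B}$, we have $\mathbb E[\|\Delta_M\|^2\mid\mathcal G_M]\le K_{j,B}C/M$. Conditional Markov (or Chebyshev) then gives
\[
\mathbb P\big(\|\Delta_M\|>\delta_0\mid\mathcal G_M\big)\le K_{j,B}C/(M\delta_0^2)\to0 ,
\]
and similarly $\|\Delta_M\|=O_p(M^{-1/2})$. Taking expectations removes the conditioning.

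Second, on the event $A_M=\{\|\Delta_M\|\le\delta_0\}$ the whole segment between $s_{r,j,M}(B)$ and $\widehat S_{r,j}(B)$ lies in the ball, since balls are convex. Applying Taylor's theorem with integral remainder componentwise to the $C^2$ map $g_{j,B}$ gives
\[
\big\|g_{j,B}(\widehat S_{r,j}(B))-g_{j,B}(s_{r,j,M}(B))-Dg_{j,B}(s_{r,j,M}(B))\Delta_M\big\|\le C'\,C_g\|\Delta_M\|^2 .
\]
The constant $C'$ depends only on $K_{j,B}$, because the second-derivative bound is uniform in $M,r,j,B$. Note also that $\widehat q_{r,j}(B)=g_{j,B}(\widehat S_{r,j}(B))$ and $q_{r,j,M}(B)=g_{j,B}(s_{r,j,M}(B))$ by definition.

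Third, conclude. On $A_M$ the remainder is $O_p(M^{-1})=o_p(M^{-1/2})$. Off $A_M$ we need no bound at all, because $\mathbb P(A_M^c)\to0$. Formally, for any $\epsilon>0$,
\[
\mathbb P\big(\sqrt M\|\mathrm{rem}\|>\epsilon\big)\le\mathbb P(A_M^c)+\mathbb P\big(C'C_g\sqrt M\|\Delta_M\|^2>\epsilon\big)\to0 .
\]
A finite union over $(r,j,B)$ gives the maximum. As an aside, off $A_M$ the remainder is in any case bounded by $2+C_g\|\Delta_M\|$, since $g$ maps into the simplex.

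The only delicate point is the first step. We must check that the smoothness neighborhood is centered at the possibly $M$-dependent point $s_{r,j,M}(B)$, with a radius $\delta_0$ and bound $C_g$ that do not depend on $M$. Assumption~\ref{ass:siren_regime} states exactly this for all sufficiently large $M$. We must also check that the conditional moment bound \eqref{eq:lem_input_repr_S2} has a deterministic constant, so that the conditional probability bound converts to an unconditional one.
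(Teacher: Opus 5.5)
Your proposal is correct and follows essentially the same route as the paper. Both use a second-order Taylor expansion of $g_{j,B}$ around $s_{r,j,M}(B)$ on the high-probability event $\{\|\widehat S_{r,j}(B)-s_{r,j,M}(B)\|\le\delta_0\}$, control the quadratic remainder with the $O_p(M^{-1/2})$ rate from Lemma~\ref{lem:siren_input_representation}, and absorb the maximum via the finite index set, arriving at the same $O_p(M^{-1})$ bound. Your explicit convexity and complement-event arguments just make the paper's step ``the event has probability tending to one'' more careful.
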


\begin{proof}
By Lemma~\ref{lem:siren_input_representation}, each coordinate of $\widehat S_{r,j}(B)-s_{r,j,M}(B)$ has conditional second moment bounded by $C/M$. Since the index set is finite,
\begin{equation}
\max_{r,j,B}
\big\|
\widehat S_{r,j}(B)-s_{r,j,M}(B)
\big\|
=
O_p(M^{-1/2}).
\label{eq:proof_selector_Sh}
\end{equation}
Fix $r,j,B$ and write $\widehat S=\widehat S_{r,j}(B)$, $s=s_{r,j,M}(B)$, $g=g_{j,B}$, and $h=\widehat S-s$. Assumption~\ref{ass:siren_regime} gives a neighborhood of $s$ on which $g$ is twice continuously differentiable with uniformly bounded second derivatives. On the event $\{\|h\|\le\delta_0\}$, Taylor's theorem yields
\begin{equation}
g(s+h)
=
g(s)+Dg(s)h+R(h),
\qquad
\|R(h)\|\le C\|h\|^2,
\label{eq:proof_selector_taylor}
\end{equation}
with $C$ uniform over the finite index set. By \eqref{eq:proof_selector_Sh}, the event on which all relevant $\|h\|$ are at most $\delta_0$ has probability tending to one. Hence
\begin{align}
&\max_{r,j,B}
\left\|
\widehat q_{r,j}(B)
-
q_{r,j,M}(B)
-
Dg_{j,B}\big(s_{r,j,M}(B)\big)
\big(
\widehat S_{r,j}(B)-s_{r,j,M}(B)
\big)
\right\|
\notag\\
&\qquad\le
C
\left(
\max_{r,j,B}
\big\|
\widehat S_{r,j}(B)-s_{r,j,M}(B)
\big\|
\right)^2
=
O_p(M^{-1}),
\label{eq:proof_selector_bound}
\end{align}
which proves the claim.
\end{proof}

\begin{lemma}[Selected split-level expansion]
\label{lem:siren_split_expansion}
Under Assumption~\ref{ass:siren_regime}, jointly over $r\le R$, $j\le N$, and $B\in\mathcal B$,
\begin{align}
\widehat Y_{r,j}(B)-\mathbb E\big[\widehat Y_{r,j}(B)\mid\mathcal G_M\big]
&=
q_{r,j,M}(B)^\top
\big(
\widehat T_{r,j}(B)-t_{r,j,M}(B)
\big)
\notag\\
&\quad+
t_{r,j,M}(B)^\top
Dg_{j,B}\big(s_{r,j,M}(B)\big)
\big(
\widehat S_{r,j}(B)-s_{r,j,M}(B)
\big)
\notag\\
&\quad+
o_p(M^{-1/2}).
\label{eq:lem_split_expansion}
\end{align}
\end{lemma}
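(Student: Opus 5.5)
The plan is to write $\widehat Y_{r,j}(B)=\widehat q_{r,j}(B)^\top\widehat T_{r,j}(B)$ as a perturbation of its population analogue $q_{r,j,M}(B)^\top t_{r,j,M}(B)$. Abbreviate $\widehat q=\widehat q_{r,j}(B)$, $q=q_{r,j,M}(B)$, $\widehat T,t,\widehat S,s$ similarly, and $D=Dg_{j,B}(s)$. The exact algebraic identity
\[
\widehat q^\top\widehat T-q^\top t
= q^\top(\widehat T-t) + t^\top(\widehat q-q) + (\widehat q-q)^\top(\widehat T-t)
\]
splits the error into a direct held-out term, a selection term and a cross term. The proof then has two parts. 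First, show that the random fluctuation of the right-hand side equals the two linear terms in \eqref{eq:lem_split_expansion} up to $o_p(M^{-1/2})$. Second, show that the centering $\mathbb E[\widehat Y_{r,j}(B)\mid\mathcal G_M]$ differs from $q^\top t$ by $o(M^{-1/2})$, so that subtracting the conditional mean instead of $q^\top t$ costs nothing. Since the index set $\{(r,j,B)\}$ is finite, every bound below is made uniform by taking a maximum over finitely many terms, which gives the ``jointly'' clause.

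\textbf{Fluctuation part.}
\begin{itemize}[leftmargin=*]
\item By Lemma~\ref{lem:siren_selector_linearization}, $\widehat q-q=D(\widehat S-s)+o_p(M^{-1/2})$. Since $t\in[0,1]^{K_{j,B}}$ is bounded, the second term equals $t^\top D(\widehat S-s)+o_p(M^{-1/2})$.
\item For the cross term, Lemma~\ref{lem:siren_input_representation} gives $\|\widehat T-t\|=O_p(M^{-1/2})$ and $\|\widehat S-s\|=O_p(M^{-1/2})$.
\item Combined with the linearization and the bound $\|D\|\le C_g$, this gives $\|\widehat q-q\|=O_p(M^{-1/2})$, so the product is $O_p(M^{-1})=o_p(M^{-1/2})$.
\end{itemize}
This yields $\widehat Y_{r,j}(B)-q^\top t = q^\top(\widehat T-t)+t^\top D(\widehat S-s)+o_p(M^{-1/2})$.

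\textbf{Centering part (the main obstacle).} Here the argument must control an expectation, not merely a stochastic order, and it uses the structure of the design. Conditional on $\mathcal G_M$, $D_r^{\mathrm{score}}$ and $E_r$ are disjoint. By Assumption~\ref{ass:siren_regime} the item tuples $\mathcal R_i$ are independent across $i$ and independent of $\mathcal G_M$. Hence $\widehat S_{r,j}(B)$, which depends only on items in $D_r^{\mathrm{score}}$, and $\widehat T_{r,j}(B)$, which depends only on items in $E_r$, are conditionally independent. Therefore
\[
\mathbb E[\widehat Y_{r,j}(B)\mid\mathcal G_M]=\mathbb E[\widehat q\mid\mathcal G_M]^\top t .
\]
It then suffices to show $\|\mathbb E[\widehat q\mid\mathcal G_M]-q\|=O(M^{-1})$, which I would do as follows.
\begin{itemize}[leftmargin=*]
\item Write $\widehat q-q=Dh+\rho$ with $h=\widehat S-s$, and note $\mathbb E[h\mid\mathcal G_M]=0$.
\item On the event $A=\{\|h\|\le\delta_0\}$, the Taylor bound from the proof of Lemma~\ref{lem:siren_selector_linearization} gives $\|\rho\|\le C\|h\|^2$.
\item On $A^c$, $\|\rho\|$ is bounded by a constant, because $\widehat q$ and $q$ lie in the simplex and $\|Dh\|\le C_g\|h\|$ with $\|h\|\le\sqrt{K_{j,B}}$ since scores lie in $[0,1]$.
\item Hence $\mathbb E[\|\rho\|\mid\mathcal G_M]\le C\,\mathbb E[\|h\|^2\mid\mathcal G_M]+C'\,\mathbb P(A^c\mid\mathcal G_M)$.
\item The bounds \eqref{eq:lem_input_repr_S2} and Chebyshev's inequality make both terms at most $C''/M$ for $M$ large.
\end{itemize}
So the conditional bias is $O(M^{-1})$, uniformly over the finite index set.

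Finally, subtract the centering from the fluctuation expansion; the $O(M^{-1})$ bias is absorbed into $o_p(M^{-1/2})$, which gives \eqref{eq:lem_split_expansion}. The delicate step is the conditional-independence argument in the centering part. Without the disjointness of $D_r^{\mathrm{score}}$ and $E_r$, a covariance term $\mathbb E[(\widehat q-q)^\top(\widehat T-t)\mid\mathcal G_M]$ would remain. It would still be $O(M^{-1})$ by Cauchy--Schwarz, so even that fallback suffices, but the independence route is cleaner.
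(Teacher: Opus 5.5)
Your proposal is correct and follows essentially the same route as the paper: expand $\widehat Y_{r,j}(B)$ around $q^\top t$, substitute the selector linearization, and show the cross and Taylor-remainder terms are $O_p(M^{-1})$. It then controls the conditional mean of the remainder at order $M^{-1}$ using the Taylor bound on $\{\|h\|\le\delta_0\}$, and boundedness plus Chebyshev off that event. The only variation is in the cross term: you remove its conditional mean using the conditional independence of $\widehat S_{r,j}(B)$ and $\widehat T_{r,j}(B)$, which follows from the disjointness of $D_r^{\mathrm{score}}$ and $E_r$. The paper instead bounds the conditional mean of each remainder piece by Cauchy--Schwarz and moment bounds without using disjointness, and both give the required $O(M^{-1})$ centering error.
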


\begin{proof}
Fix $r,j,B$ and abbreviate
\small
\begin{equation}
\widehat S=\widehat S_{r,j}(B),\;
s=s_{r,j,M}(B),\;
\widehat T=\widehat T_{r,j}(B),\;
t=t_{r,j,M}(B),\;
A=Dg_{j,B}(s),\;
\widehat q=\widehat q_{r,j}(B),\;
q=q_{r,j,M}(B).
\label{eq:proof_split_notation}
\end{equation}
\normalsize
By Lemma~\ref{lem:siren_selector_linearization},
\begin{equation}
\widehat q=q+A(\widehat S-s)+R_q,
\qquad
\|R_q\|=o_p(M^{-1/2}).
\label{eq:proof_split_qexpansion}
\end{equation}
The proof of Lemma~\ref{lem:siren_selector_linearization} also gives the sharper bound $\|R_q\|=O_p(M^{-1})$. Moreover, since $g$ maps into a simplex, the scores are bounded, and $A$ is uniformly bounded, the Taylor remainder is uniformly bounded off the local event. Combining this boundedness with the conditional second-moment bound for $\widehat S-s$ gives
\begin{equation}
\mathbb E\big[\|R_q\|\mid\mathcal G_M\big]=O_p(M^{-1}),
\qquad
\mathbb E\big[\|R_q\|^2\mid\mathcal G_M\big]=O_p(M^{-1}).
\label{eq:proof_split_rq_moments}
\end{equation}

Expanding $\widehat Y=\widehat q^\top\widehat T$ gives
\begin{align}
\widehat Y
&=
\big(q+A(\widehat S-s)+R_q\big)^\top
\big(t+(\widehat T-t)\big)
\notag\\
&=
q^\top t
+
q^\top(\widehat T-t)
+
t^\top A(\widehat S-s)
+
\mathcal R,
\label{eq:proof_split_expand}
\end{align}
where
\begin{equation}
\mathcal R
=
t^\top R_q
+
\big(A(\widehat S-s)\big)^\top(\widehat T-t)
+
R_q^\top(\widehat T-t).
\label{eq:proof_split_R}
\end{equation}
Since $q,t,A$ are $\mathcal G_M$-measurable and $s,t$ are the conditional means,
\begin{equation}
\mathbb E\big[q^\top(\widehat T-t)\mid\mathcal G_M\big]=0,
\qquad
\mathbb E\big[t^\top A(\widehat S-s)\mid\mathcal G_M\big]=0.
\label{eq:proof_split_center_lead}
\end{equation}
Therefore
\begin{equation}
\widehat Y-
\mathbb E\big[\widehat Y\mid\mathcal G_M\big]
=
q^\top(\widehat T-t)
+
t^\top A(\widehat S-s)
+
\mathcal R-
\mathbb E\big[\mathcal R\mid\mathcal G_M\big].
\label{eq:proof_split_centered}
\end{equation}

It remains to show that the centered remainder is $o_p(M^{-1/2})$. Lemma~\ref{lem:siren_input_representation} implies
\begin{equation}
\|\widehat S-s\|=O_p(M^{-1/2}),
\qquad
\|\widehat T-t\|=O_p(M^{-1/2}),
\label{eq:proof_split_ST_rate}
\end{equation}
jointly over the finite index set, and also gives conditional second moments of order $M^{-1}$. The first remainder term is $O_p(M^{-1})$ with conditional expectation $O_p(M^{-1})$ by \eqref{eq:proof_split_rq_moments}. For the second term, Cauchy--Schwarz and the conditional second-moment bounds give
\begin{equation}
\mathbb E\Big[
\big|
\big(A(\widehat S-s)\big)^\top(\widehat T-t)
\big|
\mid\mathcal G_M
\Big]
=
O_p(M^{-1}),
\label{eq:proof_split_R2}
\end{equation}
and the term itself is $O_p(M^{-1})$. The third term is handled similarly using \eqref{eq:proof_split_rq_moments} and the conditional second-moment bound for $\widehat T-t$. Thus
\begin{equation}
\mathcal R-
\mathbb E\big[\mathcal R\mid\mathcal G_M\big]
=
o_p(M^{-1/2}).
\label{eq:proof_split_Rsmall}
\end{equation}
Substituting \eqref{eq:proof_split_Rsmall} into \eqref{eq:proof_split_centered} proves the lemma.
\end{proof}

\begin{lemma}[Aggregation over repeated splits]
\label{lem:siren_aggregate}
Under Assumption~\ref{ass:siren_regime}, jointly over $j\le N$ and $B\in\mathcal B$,
\begin{equation}
\sqrt M
\Big(
\widetilde\theta_{j,R}(B)-\theta_{j,M}^{\mathrm{RS}\mid\mathcal G}(B)
\Big)
=
\frac{1}{\sqrt M}\sum_{i=1}^M\psi_{i,j,B,M}
+
o_p(1).
\label{eq:lem_aggregate_linear}
\end{equation}
If $ \Omega_M
=
\frac{1}{M}
\sum_{i=1}^M
\mathbb E\big[\boldsymbol\Psi_{i,M}\boldsymbol\Psi_{i,M}^\top\mid \mathcal G_M\big]$ satisfies $\|\Omega_M-\Omega\|_{\mathrm{op}}\to_p 0$, then
\begin{equation}
\mathbf Z_M
\rightsquigarrow
N(0,\Omega)
\qquad
\text{conditionally on }\mathcal G_M\text{ in probability}.
\label{eq:lem_aggregate_clt}
\end{equation}
\end{lemma}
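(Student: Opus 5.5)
The plan is to sum the split-level expansions of Lemma~\ref{lem:siren_split_expansion} with the weights $\omega_r$, and then rewrite the resulting leading terms as a sum over items using the exact score-table identities \eqref{eq:siren_score_expansion_s}--\eqref{eq:siren_score_expansion_t}.

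\emph{Step 1 (linearization).} Since $\omega_r$ is $\mathcal G_M$-measurable, conditional expectation commutes with the weighted sum:
\[
\theta_{j,M}^{\mathrm{RS}\mid\mathcal G}(B)=\sum_{r}\omega_r\,\mathbb E[\widehat Y_{r,j}(B)\mid\mathcal G_M].
\]
Hence
\[
\sqrt M\big(\widetilde\theta_{j,R}(B)-\theta_{j,M}^{\mathrm{RS}\mid\mathcal G}(B)\big)=\sum_r\omega_r\sqrt M\big(\widehat Y_{r,j}(B)-\mathbb E[\widehat Y_{r,j}(B)\mid\mathcal G_M]\big).
\]
Now insert Lemma~\ref{lem:siren_split_expansion}. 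The weights satisfy $0\le\omega_r\le1$ and $R$ is fixed, so the $R$ remainders, each $o_p(M^{-1/2})$, contribute only $o_p(1)$ after scaling by $\sqrt M$. This holds jointly over the finite $(j,B)$ grid.

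For the leading terms, $q_{r,j,M}(B)$, $t_{r,j,M}(B)$ and $Dg_{j,B}(s_{r,j,M}(B))$ are all $\mathcal G_M$-measurable and do not depend on $i$. The identities \eqref{eq:siren_score_expansion_s}--\eqref{eq:siren_score_expansion_t} give, for example,
\[
\sqrt M\,q^\top(\widehat T-t)=M^{-1/2}\sum_i q^\top\Gamma^T_{i},
\]
and the selection term is handled the same way. Exchanging the finite sum over $r$ with the sum over $i$ then yields exactly $M^{-1/2}\sum_i\psi_{i,j,B,M}$ as defined in \eqref{eq:siren_psi}. This proves \eqref{eq:lem_aggregate_linear}.

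\emph{Step 2 (CLT).} Observe that $\boldsymbol\Psi_{i,M}=A_M\Gamma_{i,M}$, where $A_M$ is a $\mathcal G_M$-measurable linear map. Its entries are built from $\omega_r q$ and $\omega_r t^\top Dg$, so they are bounded by a constant depending only on $K_{j,B}$ and $C_g$. By Lemma~\ref{lem:siren_input_representation}, conditional on $\mathcal G_M$, the $\boldsymbol\Psi_{i,M}$ are therefore independent, mean zero, and uniformly bounded.

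Fix a deterministic $u$. The conditional variance of $M^{-1/2}\sum_i u^\top\boldsymbol\Psi_{i,M}$ is $u^\top\Omega_M u\to_p u^\top\Omega u$. The Lindeberg condition holds because $\max_i|u^\top\boldsymbol\Psi_{i,M}|/\sqrt M\to0$ deterministically. We then follow the proof of Proposition~\ref{prop:input_clt}: pass to a.s.-convergent sub-subsequences of $\Omega_M$, apply the conditional Lindeberg--Feller theorem, and use Cram\'er--Wold to obtain the conditional weak limit $N(0,\Omega)$ for the linear term.

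\emph{Step 3 (transferring the remainder).} We need conditional convergence for $\mathbf Z_M$ itself, and the remainder $\Delta_M$ is $o_p(1)$ only unconditionally. For $h\in BL_1$,
\[
|\mathbb E[h(\mathbf Z_M)\mid\mathcal G_M]-\mathbb E[h(\text{linear term})\mid\mathcal G_M]|\le\mathbb E[\min(\|\Delta_M\|,2)\mid\mathcal G_M].
\]
The right-hand side has expectation $\mathbb E\min(\|\Delta_M\|,2)\to0$ by bounded convergence, so by Markov's inequality it is $o_p(1)$. The bound is uniform in $h$, which completes \eqref{eq:lem_aggregate_clt}.

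I expect this conditional-versus-unconditional transfer to be the main subtlety. The remainder bounds in Lemma~\ref{lem:siren_split_expansion} are stated as $o_p$ rather than conditionally, and the truncation-plus-Markov argument is what bridges the gap.
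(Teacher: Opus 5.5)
Your proposal is correct and follows the paper's proof: the same $\mathcal G_M$-measurable weighted decomposition of $\widetilde\theta_{j,R}(B)-\theta_{j,M}^{\mathrm{RS}\mid\mathcal G}(B)$, substitution of Lemma~\ref{lem:siren_split_expansion}, rewriting via the score-table identities into $\psi_{i,j,B,M}$, and a conditional Lindeberg--Feller, Cram\'er--Wold and subsequence argument for the linear term. Your Step~3 adds one thing the paper does not spell out. It bounds $\sup_{h\in BL_1}$ by $\mathbb E[\min(\|\Delta_M\|,2)\mid\mathcal G_M]$ and applies Markov's inequality, which transfers the unconditional $o_p(1)$ remainder to the conditional statement; the paper compresses this step into the single word ``combining.''
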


\begin{proof}
By definition of the target and the $\mathcal G_M$-measurability of the weights,
\begin{equation}
\widetilde\theta_{j,R}(B)-\theta_{j,M}^{\mathrm{RS}\mid\mathcal G}(B)
=
\sum_{r=1}^R\omega_r
\Big(
\widehat Y_{r,j}(B)-
\mathbb E\big[\widehat Y_{r,j}(B)\mid\mathcal G_M\big]
\Big).
\label{eq:proof_aggregate_start}
\end{equation}
Multiplying by $\sqrt M$ and substituting Lemma~\ref{lem:siren_split_expansion} yields
\begin{align}
\sqrt M
\Big(
\widetilde\theta_{j,R}(B)-\theta_{j,M}^{\mathrm{RS}\mid\mathcal G}(B)
\Big)
&=
\sum_{r=1}^R\omega_r q_{r,j,M}(B)^\top
\sqrt M\big(\widehat T_{r,j}(B)-t_{r,j,M}(B)\big)
\notag\\
&\quad+
\sum_{r=1}^R\omega_r t_{r,j,M}(B)^\top Dg_{j,B}\big(s_{r,j,M}(B)\big)
\sqrt M\big(\widehat S_{r,j}(B)-s_{r,j,M}(B)\big)
\notag\\
&\quad+
o_p(1).
\label{eq:proof_aggregate_after_split}
\end{align}
Using \eqref{eq:siren_score_expansion_s} and \eqref{eq:siren_score_expansion_t}, the right-hand side becomes
\begin{align}
\frac{1}{\sqrt M}\sum_{i=1}^M
\sum_{r=1}^R\omega_r
\Big[
q_{r,j,M}(B)^\top\Gamma^{T}_{i,r,j,M}(B)
+
t_{r,j,M}(B)^\top Dg_{j,B}\big(s_{r,j,M}(B)\big)
\Gamma^{S}_{i,r,j,M}(B)
\Big]
+
o_p(1).
\label{eq:proof_aggregate_switch}
\end{align}
The inner sum is exactly $\psi_{i,j,B,M}$ in \eqref{eq:siren_psi}, proving \eqref{eq:lem_aggregate_linear}.

For the conditional Gaussian limit, let $d=N|\mathcal B|$ and fix $u\in\mathbb R^d$. Define
\begin{equation}
Z_{i,M}(u)=\frac{1}{\sqrt M}u^\top\boldsymbol\Psi_{i,M}.
\label{eq:proof_aggregate_Z}
\end{equation}
Conditional on $\mathcal G_M$, the variables $Z_{1,M}(u),\dots,Z_{M,M}(u)$ are independent and mean zero, because each $\boldsymbol\Psi_{i,M}$ is a $\mathcal G_M$-measurable linear transformation of $\Gamma_{i,M}$. Their variance sum is
\begin{equation}
\sum_{i=1}^M\mathbb E\big[Z_{i,M}(u)^2\mid\mathcal G_M\big]
=
u^\top\Omega_M u.
\label{eq:proof_aggregate_varsum}
\end{equation}
By $\|\Omega_M-\Omega\|_{\mathrm{op}}\to_p 0$, this variance converges in probability to $u^\top\Omega u$. Since scores, shortlist sizes, and selector derivatives are uniformly bounded, there exists $C_\Psi<\infty$ such that
\begin{equation}
\max_{1\le i\le M}\|\boldsymbol\Psi_{i,M}\|
\le C_\Psi
\label{eq:proof_aggregate_psibound}
\end{equation}
for all sufficiently large $M$. Hence $\max_i|Z_{i,M}(u)|\le\|u\|C_\Psi/\sqrt M\to0$, giving the conditional Lindeberg condition. The conditional Lindeberg--Feller theorem, followed by Cram\'er--Wold and the usual subsequence argument for convergence in probability of the conditional laws, gives
\begin{equation}
\frac{1}{\sqrt M}\sum_{i=1}^M\boldsymbol\Psi_{i,M}
\rightsquigarrow
N(0,\Omega)
\qquad
\text{conditionally on }\mathcal G_M\text{ in probability}.
\label{eq:proof_aggregate_sumclt}
\end{equation}
Combining \eqref{eq:proof_aggregate_sumclt} with \eqref{eq:lem_aggregate_linear} proves \eqref{eq:lem_aggregate_clt}.
\end{proof}

\begin{theorem}[Formal selection-aware linearization]
\label{thm:siren_linearization_formal}
Under Assumption~\ref{ass:siren_regime}, jointly over $j\le N$ and $B\in\mathcal B$,
\begin{equation}
\mathbf Z_M
=
\frac{1}{\sqrt M}
\sum_{i=1}^M
\boldsymbol\Psi_{i,M}
+
o_p(1).
\label{eq:thm_formal_linearization}
\end{equation}
Furthermore, if there exists a deterministic positive semidefinite matrix $\Omega$ such that
$\|\Omega_M-\Omega\|_{\mathrm{op}}\to_p0$, then
\begin{equation}
\mathbf Z_M
\rightsquigarrow
N(0,\Omega)
\qquad
\text{conditionally on }\mathcal G_M\text{ in probability}.
\label{eq:thm_formal_clt}
\end{equation}
\end{theorem}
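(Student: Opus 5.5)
This theorem is essentially a repackaging of Lemma~\ref{lem:siren_aggregate}, so the plan is to assemble the chain of earlier results rather than introduce a new argument. We begin from the identity \eqref{eq:proof_aggregate_start}. Because the weights $\omega_r$ are $\mathcal G_M$-measurable, the centered estimator splits exactly into a weighted sum of split-level centered quantities $\widehat Y_{r,j}(B)-\mathbb E[\widehat Y_{r,j}(B)\mid\mathcal G_M]$. Each of these is then expanded by Lemma~\ref{lem:siren_split_expansion}. That lemma rests on the selector linearization of Lemma~\ref{lem:siren_selector_linearization}, which in turn follows from the smoothness in Assumption~\ref{ass:siren_regime} and the $O_p(M^{-1/2})$ score-table rates of Lemma~\ref{lem:siren_input_representation}.

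The first step is to multiply the split-level expansions by $\sqrt M$. Since $R$, $N$, $|\mathcal B|$ are fixed and $\sum_r\omega_r=1$ with $\omega_r\ge 0$, the finitely many $o_p(M^{-1/2})$ remainders remain $o_p(1)$ uniformly over the grid.

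The second step substitutes the exact score-table identities \eqref{eq:siren_score_expansion_s} and \eqref{eq:siren_score_expansion_t}. This converts $\sqrt M(\widehat T-t)$ and $\sqrt M(\widehat S-s)$ into normalized sums of $\Gamma^T_{i,\cdot}$ and $\Gamma^S_{i,\cdot}$. We then interchange the finite sum over $r$ with the sum over items $i$. The resulting summand is, by definition \eqref{eq:siren_psi}, exactly $\psi_{i,j,B,M}$, and stacking over $(j,B)$ gives \eqref{eq:thm_formal_linearization}. Joint validity over the grid is automatic here, since a finite collection of $o_p(1)$ terms is jointly $o_p(1)$.

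For the Gaussian limit \eqref{eq:thm_formal_clt}, we note that $\boldsymbol\Psi_{i,M}$ is a $\mathcal G_M$-measurable linear image of $\Gamma_{i,M}$. Consequently the $\boldsymbol\Psi_{i,M}$ are conditionally independent and mean zero by Lemma~\ref{lem:siren_input_representation}. They are also uniformly bounded, because the $\Gamma$ coordinates are bounded by $L$, $\|q\|\le 1$, $\|t\|_\infty\le 1$, and $\|Dg\|\le C_g$.

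With this in hand we fix a direction $u$ and verify the conditional Lindeberg condition via the $C_\Psi/\sqrt M$ bound. The variance of the projected sum is $u^\top\Omega_M u\to_p u^\top\Omega u$. We then apply Lindeberg--Feller along almost-surely convergent subsubsequences and conclude with Cram\'er--Wold. Finally, a conditional Slutsky step transfers the limit from the linear term to $\mathbf Z_M$. For this we use that $h\in BL_1$ is Lipschitz, so $|\mathbb E[h(\mathbf Z_M)-h(\text{linear term})\mid\mathcal G_M]|\le \mathbb E[\min(1,\|o_p(1)\|)\mid\mathcal G_M]\to_p 0$.

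The main obstacle is ensuring that the remainder in Lemma~\ref{lem:siren_split_expansion} is $o_p(M^{-1/2})$ \emph{after centering by its conditional mean}. This is needed because the target $\theta^{\mathrm{RS}\mid\mathcal G}$ involves $\mathbb E[\widehat Y\mid\mathcal G_M]$ rather than $q^\top t$. Controlling the centered remainder requires that $\mathbb E[\mathcal R\mid\mathcal G_M]=O_p(M^{-1})$, including the contribution from the off-neighborhood event where the Taylor expansion fails. We would handle that event by combining the boundedness of $g$ (it maps into the simplex) with Chebyshev's inequality via the conditional second-moment bound \eqref{eq:lem_input_repr_S2}. This gives $P(\|h\|>\delta_0\mid\mathcal G_M)\le C/(M\delta_0^2)$, which is $O(M^{-1})$ and suffices.
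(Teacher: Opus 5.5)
Your proposal is correct and follows the paper's proof essentially exactly. The paper proves this theorem by citing the two conclusions of Lemma~\ref{lem:siren_aggregate}, and that lemma's proof is precisely your chain: the centering identity, the split-level expansion of Lemma~\ref{lem:siren_split_expansion}, the exact score-table identities and interchange of sums giving $\psi_{i,j,B,M}$, then conditional Lindeberg--Feller with the subsequence argument and Cram\'er--Wold; your Chebyshev bound for the off-neighborhood event and your conditional Slutsky step spell out what the paper states tersely, with the harmless correction that $|h(x)-h(y)|\le\min(2,\|x-y\|)$ for $h\in BL_1$.
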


\begin{proof}
The linear representation \eqref{eq:thm_formal_linearization} is exactly the first conclusion of Lemma~\ref{lem:siren_aggregate}. 
The conditional Gaussian limit \eqref{eq:thm_formal_clt} is the second conclusion of Lemma~\ref{lem:siren_aggregate}.
\end{proof}

\subsection{Multiplier bootstrap validity}
\label{app:siren_bootstrap_proof}

\begin{theorem}[Formal multiplier bootstrap validity]
\label{thm:siren_bootstrap_formal}
Work under Assumption~\ref{ass:siren_regime}, and suppose there exists a deterministic positive semidefinite matrix $\Omega$ such that
$\|\Omega_M-\Omega\|_{\mathrm{op}}\to_p0$. 
Assume further that the computable contributions satisfy
\begin{equation}
\frac{1}{M}
\sum_{i=1}^M
\big\|
\widehat{\boldsymbol\Psi}_{i,M}
-
\boldsymbol\Psi_{i,M}
\big\|^2
\to_p0.
\label{eq:app_plugin_condition}
\end{equation}
Let
\begin{equation}
\bar{\widehat{\boldsymbol\Psi}}_M
=
\frac{1}{M}\sum_{i=1}^M\widehat{\boldsymbol\Psi}_{i,M},
\qquad
\widehat{\mathbf G}_M^\ast
=
\frac{1}{\sqrt M}
\sum_{i=1}^M
\zeta_i
\big(
\widehat{\boldsymbol\Psi}_{i,M}
-
\bar{\widehat{\boldsymbol\Psi}}_M
\big),
\label{eq:app_boot_process}
\end{equation}
where $\zeta_1,\dots,\zeta_M\stackrel{\mathrm{i.i.d.}}{\sim}N(0,1)$ are independent of the data. 
Then, with $G\sim N(0,\Omega)$ and $d=N|\mathcal B|$,
\begin{equation}
\sup_{h\in BL_1(\mathbb R^d)}
\left|
\mathbb E\big[h(\widehat{\mathbf G}_M^\ast)\mid \mathcal H_M\big]
-
\mathbb E\big[h(G)\big]
\right|
\to_p0.
\label{eq:app_boot_to_gaussian}
\end{equation}
Moreover,
\begin{equation}
\sup_{h\in BL_1(\mathbb R^d)}
\left|
\mathbb E\big[h(\widehat{\mathbf G}_M^\ast)\mid \mathcal H_M\big]
-
\mathbb E\big[h(\mathbf Z_M)\mid \mathcal G_M\big]
\right|
\to_p0.
\label{eq:app_boot_valid}
\end{equation}
\end{theorem}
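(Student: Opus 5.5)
The plan is to prove \eqref{eq:app_boot_to_gaussian} first and then deduce \eqref{eq:app_boot_valid} from it together with Theorem~\ref{thm:siren_linearization_formal}.

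Conditional on $\mathcal H_M$, the vectors $\widehat{\boldsymbol\Psi}_{i,M}$ are fixed. Since the $\zeta_i$ are i.i.d.\ $N(0,1)$ and independent of the data, $\widehat{\mathbf G}_M^\ast$ is exactly Gaussian given $\mathcal H_M$, with mean zero and covariance
\[
\widehat\Omega_M=\frac1M\sum_{i=1}^M\big(\widehat{\boldsymbol\Psi}_{i,M}-\bar{\widehat{\boldsymbol\Psi}}_M\big)\big(\widehat{\boldsymbol\Psi}_{i,M}-\bar{\widehat{\boldsymbol\Psi}}_M\big)^\top .
\]
For Gaussian laws on the fixed-dimensional space $\mathbb R^d$, the bounded-Lipschitz distance between $N(0,A)$ and $N(0,A')$ is controlled by $\|A^{1/2}-A'^{1/2}\|$ through the coupling $A^{1/2}\epsilon$ versus $A'^{1/2}\epsilon$. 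It is also continuous in $A$, since the matrix square root is continuous on PSD matrices; one can use the Powers--St\o rmer type bound $\|A^{1/2}-A'^{1/2}\|_F^2\le \sqrt d\,\|A-A'\|_{\mathrm{op}}$ or any continuity argument. Hence \eqref{eq:app_boot_to_gaussian} reduces to showing $\|\widehat\Omega_M-\Omega\|_{\mathrm{op}}\to_p0$.

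To prove that covariance consistency, compare $\widehat\Omega_M$ with the infeasible empirical second-moment matrix $\widetilde\Omega_M=M^{-1}\sum_i\boldsymbol\Psi_{i,M}\boldsymbol\Psi_{i,M}^\top$, and proceed in three steps.
\begin{enumerate}
\item \textbf{Law of large numbers for $\widetilde\Omega_M$.} Conditionally on $\mathcal G_M$, the summands are independent and uniformly bounded by $C_\Psi^2$ (see \eqref{eq:proof_aggregate_psibound}). A conditional Chebyshev bound entrywise gives $\widetilde\Omega_M-\Omega_M=O_p(M^{-1/2})$, and with the hypothesis on $\Omega_M$ this yields $\widetilde\Omega_M\to_p\Omega$.
\item \textbf{Plug-in error.} Write $\widehat{\boldsymbol\Psi}_{i}=\boldsymbol\Psi_i+\Delta_i$. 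Then
\[
\Big\|M^{-1}\sum_i\big(\widehat{\boldsymbol\Psi}_i\widehat{\boldsymbol\Psi}_i^\top-\boldsymbol\Psi_i\boldsymbol\Psi_i^\top\big)\Big\|\le M^{-1}\sum_i\|\Delta_i\|^2+2\Big(M^{-1}\sum_i\|\Delta_i\|^2\Big)^{1/2}\Big(M^{-1}\sum_i\|\boldsymbol\Psi_i\|^2\Big)^{1/2},
\]
which is $o_p(1)$ by \eqref{eq:app_plugin_condition} and boundedness.
\item \textbf{Centering term.} We need $\bar{\widehat{\boldsymbol\Psi}}_M\to_p0$. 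By Cauchy--Schwarz, $\|\bar{\widehat{\boldsymbol\Psi}}_M-\bar{\boldsymbol\Psi}_M\|\le (M^{-1}\sum_i\|\Delta_i\|^2)^{1/2}\to_p0$. Also $\bar{\boldsymbol\Psi}_M=O_p(M^{-1/2})$, since the summands are conditionally mean zero with bounded variance by Lemma~\ref{lem:siren_input_representation}. The rank-one correction $\bar{\widehat{\boldsymbol\Psi}}_M\bar{\widehat{\boldsymbol\Psi}}_M^\top$ is therefore negligible.
\end{enumerate}

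For the second claim \eqref{eq:app_boot_valid}, apply the triangle inequality:
\[
\big|\mathbb E[h(\widehat{\mathbf G}_M^\ast)\mid\mathcal H_M]-\mathbb E[h(\mathbf Z_M)\mid\mathcal G_M]\big|\le \big|\mathbb E[h(\widehat{\mathbf G}_M^\ast)\mid\mathcal H_M]-\mathbb E h(G)\big|+\big|\mathbb E h(G)-\mathbb E[h(\mathbf Z_M)\mid\mathcal G_M]\big|.
\]
The first term vanishes uniformly over $BL_1$ by \eqref{eq:app_boot_to_gaussian}. The second vanishes by \eqref{eq:thm_formal_clt} of Theorem~\ref{thm:siren_linearization_formal}, which applies because $\|\Omega_M-\Omega\|_{\mathrm{op}}\to_p0$ is assumed.

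The main obstacle is the uniformity over $BL_1$ in the Gaussian comparison step. Pointwise convergence of conditional characteristic functions would not directly give a uniform bound, so I would use the explicit coupling argument instead. For $h\in BL_1$, $|\mathbb E h(A^{1/2}\epsilon)-\mathbb E h(\Omega^{1/2}\epsilon)|\le \mathbb E\|(A^{1/2}-\Omega^{1/2})\epsilon\|\le\sqrt d\,\|A^{1/2}-\Omega^{1/2}\|_{\mathrm{op}}$. This bound is uniform in $h$ and tends to zero in probability by continuity of the square root. A minor technical point is that $\mathcal H_M\supseteq\mathcal G_M$, which causes no issue because both limits are deterministic.
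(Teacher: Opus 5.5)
Your proposal is correct and follows essentially the same route as the paper. Both arguments use exact conditional Gaussianity of the multiplier process, covariance consistency via an infeasible comparison combined with the plug-in condition, the coupling bound $\|A_M-A\|_{\mathrm{op}}\,\mathbb E\|U\|$ for uniformity over $BL_1$, and the triangle inequality with Theorem~\ref{thm:siren_linearization_formal}. The paper compares centered covariances directly, whereas you compare uncentered second moments and handle the centering separately; this difference is cosmetic.

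One small slip: the Powers--St\o rmer bound should read $\|A^{1/2}-A'^{1/2}\|_F^2\le\|A-A'\|_{\mathrm{tr}}\le d\,\|A-A'\|_{\mathrm{op}}$. Your $\sqrt d$ version fails for $A=I_d$, $A'=0$ when $d\ge 2$. This does not matter for the argument, since plain continuity of the matrix square root, which is what the paper uses, is enough.
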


\begin{proof}
Use the notation in \eqref{eq:app_boot_process}. 
Conditional on $\mathcal H_M$, the vectors $\widehat{\boldsymbol\Psi}_{i,M}-\bar{\widehat{\boldsymbol\Psi}}_M$ are fixed and the multipliers are i.i.d. standard Gaussian. Hence
\begin{equation}
\widehat{\mathbf G}_M^\ast\mid\mathcal H_M
\sim
N\big(0,\widehat\Omega_M^\ast\big),
\label{eq:proof_boot_cond_gaussian}
\end{equation}
where
\begin{equation}
\widehat\Omega_M^\ast
=
\frac{1}{M}\sum_{i=1}^M
\big(
\widehat{\boldsymbol\Psi}_{i,M}-\bar{\widehat{\boldsymbol\Psi}}_M
\big)
\big(
\widehat{\boldsymbol\Psi}_{i,M}-\bar{\widehat{\boldsymbol\Psi}}_M
\big)^\top.
\label{eq:proof_boot_cov}
\end{equation}
Define the infeasible centered covariance
\begin{equation}
\bar{\boldsymbol\Psi}_M
=
\frac{1}{M}\sum_{i=1}^M\boldsymbol\Psi_{i,M},
\qquad
\widetilde\Omega_M
=
\frac{1}{M}\sum_{i=1}^M
\big(
\boldsymbol\Psi_{i,M}-\bar{\boldsymbol\Psi}_M
\big)
\big(
\boldsymbol\Psi_{i,M}-\bar{\boldsymbol\Psi}_M
\big)^\top .
\label{eq:proof_boot_truecov}
\end{equation}

We first show that $\widetilde\Omega_M$ is close to $\Omega$. Since the $\boldsymbol\Psi_{i,M}$ are conditionally independent, mean zero, and uniformly bounded, each entry of
\begin{equation}
\frac{1}{M}\sum_{i=1}^M
\boldsymbol\Psi_{i,M}\boldsymbol\Psi_{i,M}^\top-
\Omega_M
\label{eq:proof_boot_uncentered_obj}
\end{equation}
has conditional variance of order $M^{-1}$. The dimension is fixed, so
\begin{equation}
\left\|
\frac{1}{M}\sum_{i=1}^M
\boldsymbol\Psi_{i,M}\boldsymbol\Psi_{i,M}^\top-
\Omega_M
\right\|_{\mathrm{op}}
\to_p0.
\label{eq:proof_boot_uncentered}
\end{equation}
Moreover,
\begin{equation}
\mathbb E\big[\|\sqrt M\,\bar{\boldsymbol\Psi}_M\|^2\mid\mathcal G_M\big]
=
\mathrm{tr}(\Omega_M)
=
O_p(1),
\label{eq:proof_boot_barPsi_second}
\end{equation}
where the last equality follows from $\|\Omega_M-\Omega\|_{\mathrm{op}}\to_p 0$. Hence $\bar{\boldsymbol\Psi}_M\bar{\boldsymbol\Psi}_M^\top=o_p(1)$, and
\begin{equation}
\|\widetilde\Omega_M-\Omega\|_{\mathrm{op}}
\to_p0.
\label{eq:proof_boot_truecov_conv}
\end{equation}

We next compare $\widehat\Omega_M^\ast$ with $\widetilde\Omega_M$. Let
\begin{equation}
a_{i,M}=\widehat{\boldsymbol\Psi}_{i,M}-\bar{\widehat{\boldsymbol\Psi}}_M,
\qquad
b_{i,M}=\boldsymbol\Psi_{i,M}-\bar{\boldsymbol\Psi}_M.
\label{eq:proof_boot_ab}
\end{equation}
Then
\begin{align}
\|\widehat\Omega_M^\ast-\widetilde\Omega_M\|_{\mathrm{op}}
&\le
\frac{1}{M}\sum_{i=1}^M
\|a_{i,M}-b_{i,M}\|\,\|a_{i,M}\|
+
\frac{1}{M}\sum_{i=1}^M
\|b_{i,M}\|\,\|a_{i,M}-b_{i,M}\|
\notag\\
&\le
\left(
\frac{1}{M}\sum_{i=1}^M\|a_{i,M}-b_{i,M}\|^2
\right)^{1/2}
\left(
\frac{1}{M}\sum_{i=1}^M\|a_{i,M}\|^2
\right)^{1/2}
\notag\\
&\quad+
\left(
\frac{1}{M}\sum_{i=1}^M\|b_{i,M}\|^2
\right)^{1/2}
\left(
\frac{1}{M}\sum_{i=1}^M\|a_{i,M}-b_{i,M}\|^2
\right)^{1/2}.
\label{eq:proof_boot_cov_compare}
\end{align}
Since
\begin{equation}
a_{i,M}-b_{i,M}
=
\big(\widehat{\boldsymbol\Psi}_{i,M}-\boldsymbol\Psi_{i,M}\big)
-
\big(\bar{\widehat{\boldsymbol\Psi}}_M-\bar{\boldsymbol\Psi}_M\big),
\label{eq:proof_boot_diffab}
\end{equation}
the plug-in condition of $\frac{1}{M}\sum_{i=1}^M
\big\|
\widehat{\boldsymbol\Psi}_{i,M}-\boldsymbol\Psi_{i,M}
\big\|^2
\to_p 0$ implies
\begin{equation}
\frac{1}{M}\sum_{i=1}^M\|a_{i,M}-b_{i,M}\|^2
\to_p0.
\label{eq:proof_boot_absmall}
\end{equation}
The remaining two factors in \eqref{eq:proof_boot_cov_compare} are $O_p(1)$: this is immediate for the $b_{i,M}$ factor from uniform boundedness, and for the $a_{i,M}$ factor from the condition of  $\frac{1}{M}\sum_{i=1}^M
\big\|
\widehat{\boldsymbol\Psi}_{i,M}-\boldsymbol\Psi_{i,M}
\big\|^2
\to_p 0$. Thus
\begin{equation}
\|\widehat\Omega_M^\ast-\widetilde\Omega_M\|_{\mathrm{op}}
\to_p0.
\label{eq:proof_boot_cov_est}
\end{equation}
Combining \eqref{eq:proof_boot_truecov_conv} and \eqref{eq:proof_boot_cov_est} gives
\begin{equation}
\|\widehat\Omega_M^\ast-\Omega\|_{\mathrm{op}}
\to_p0.
\label{eq:proof_boot_cov_final}
\end{equation}

Let $U\sim N(0,I_d)$, $d=N|\mathcal B|$, be independent of everything else, and write
\begin{equation}
A_M=(\widehat\Omega_M^\ast)^{1/2},
\qquad
A=\Omega^{1/2}.
\label{eq:proof_boot_roots}
\end{equation}
The conditional law of $\widehat{\mathbf G}_M^\ast$ given $\mathcal H_M$ is the law of $A_MU$, while $G\sim N(0,\Omega)$ has the law of $AU$. Since matrix square roots are continuous on the cone of positive semidefinite matrices in fixed dimension, \eqref{eq:proof_boot_cov_final} implies $\|A_M-A\|_{\mathrm{op}}\to_p0$. Therefore
\begin{align}
\sup_{h\in BL_1}
\left|
\mathbb E\big[h(\widehat{\mathbf G}_M^\ast)\mid\mathcal H_M\big]
-
\mathbb E\big[h(G)\big]
\right|
&\le
\|A_M-A\|_{\mathrm{op}}\,\mathbb E\|U\|
\to_p0.
\label{eq:proof_boot_bl}
\end{align}
This proves that $\sup_{h\in BL_1}
\left|
\mathbb E\big[h(\widehat{\mathbf G}_M^\ast)\mid \mathcal H_M\big]
-
\mathbb E\big[h(G)\big]
\right|
\to_p 0$. The validity statement $\sup_{h\in BL_1}
\left|
\mathbb E\big[h(\widehat{\mathbf G}_M^\ast)\mid \mathcal H_M\big]
-
\mathbb E\big[h(\mathbf Z_M)\mid \mathcal G_M\big]
\right|
\to_p 0$
follows by the triangle inequality and Theorem~\ref{thm:siren_linearization_formal}.
\end{proof}

The plug-in condition \eqref{eq:app_plugin_condition} is stated abstractly in Theorem~\ref{thm:siren_bootstrap_formal}. 
The next proposition gives a sufficient condition for this requirement in the smooth-selector regime.

\subsection{Plug-in consistency for the computable contributions}
\label{app:siren_plugin_consistency}

\begin{proposition}[Sufficient condition for plug-in consistency]
\label{prop:siren_plugin}
Work under Assumption~\ref{ass:siren_regime}. Suppose there exist constants $L_1,L_2<\infty$ such that, for every $j\le N$, $B\in\mathcal B$, and all $x,y\in[0,1]^{K_{j,B}}$,
\begin{equation}
\|Dg_{j,B}(x)\|_{\mathrm{op}}\le L_1,
\qquad
\|Dg_{j,B}(x)-Dg_{j,B}(y)\|_{\mathrm{op}}
\le L_2\|x-y\|.
\label{eq:prop_plugin_lip}
\end{equation}
Then
\begin{equation}
\frac{1}{M}\sum_{i=1}^M
\big\|
\widehat{\boldsymbol\Psi}_{i,M}-\boldsymbol\Psi_{i,M}
\big\|^2
=
O_p(M^{-1}),
\label{eq:prop_plugin_rate}
\end{equation}
and in particular $\frac{1}{M}\sum_{i=1}^M
\big\|
\widehat{\boldsymbol\Psi}_{i,M}-\boldsymbol\Psi_{i,M}
\big\|^2
\to_p 0$ holds.
\end{proposition}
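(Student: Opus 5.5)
The plan is to decompose $\widehat\psi_{i,j,B,M}-\psi_{i,j,B,M}$ term by term and show that every piece has the form (item-level bounded factor) $\times$ (global $O_p(M^{-1/2})$ factor). Averaging the squares over $i$ then gives $O_p(M^{-1})$. Because $N$, $|\mathcal B|$, $R$ and the $K_{j,B}$ are fixed, it suffices to treat a single coordinate $(j,B)$ and a single split $r$. The factor $\sum_r\omega_r\le 1$ and Cauchy--Schwarz over the finitely many $r$ cost only constants.

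\textbf{Step 1: centering difference.} From \eqref{eq:siren_gamma_s_coord} and \eqref{eq:siren_hat_gamma_s_coord},
\[
\widehat\Gamma^{S}_{i,r,j,M}(B)-\Gamma^{S}_{i,r,j,M}(B)=-\frac{M}{|D_r^{\mathrm{score}}|}\mathbf 1\{i\in D_r^{\mathrm{score}}\}\big(\widehat S_{r,j}(B)-s_{r,j,M}(B)\big).
\]
This is bounded in norm by $L\|\widehat S_{r,j}(B)-s_{r,j,M}(B)\|$ uniformly in $i$, with $L$ from \eqref{eq:proof_input_repr_L}. The same holds for the held-out blocks with $\widehat T-t$. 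Lemma~\ref{lem:siren_input_representation} gives that both $\|\widehat S-s\|$ and $\|\widehat T-t\|$ are $O_p(M^{-1/2})$ jointly over the finite index set. Both $\Gamma$ and $\widehat\Gamma$ are also uniformly bounded by a deterministic constant, since scores and their means lie in $[0,1]$.

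\textbf{Step 2: telescoping the two terms.} For the held-out term I write
\[
\widehat q^\top\widehat\Gamma^T-q^\top\Gamma^T=(\widehat q-q)^\top\widehat\Gamma^T+q^\top(\widehat\Gamma^T-\Gamma^T).
\]
By the mean value theorem and $\|Dg\|_{\mathrm{op}}\le L_1$ on $[0,1]^{K}$, which is convex and contains both $\widehat S$ and $s$, we get $\|\widehat q-q\|\le L_1\|\widehat S-s\|$. This holds globally, so the local neighborhood of Assumption~\ref{ass:siren_regime} is not needed. For the selection term I write
\[
\widehat T^\top Dg(\widehat S)\widehat\Gamma^S-t^\top Dg(s)\Gamma^S
=(\widehat T-t)^\top Dg(\widehat S)\widehat\Gamma^S+t^\top\big(Dg(\widehat S)-Dg(s)\big)\widehat\Gamma^S+t^\top Dg(s)(\widehat\Gamma^S-\Gamma^S).
\]
These three pieces are bounded respectively by
\[
C L_1\|\widehat T-t\|,\qquad C L_2\|\widehat S-s\|,\qquad C L_1 L\|\widehat S-s\|,
\]
using the Lipschitz condition \eqref{eq:prop_plugin_lip} and $\|t\|\le\sqrt K$.

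\textbf{Step 3: combining.} This yields $|\widehat\psi_{i,j,B,M}-\psi_{i,j,B,M}|\le C\,\Delta_M$ for every $i$, where
\[
\Delta_M=\max_{r,j,B}\big(\|\widehat S_{r,j}(B)-s_{r,j,M}(B)\|+\|\widehat T_{r,j}(B)-t_{r,j,M}(B)\|\big)
\]
and $C$ is deterministic. Hence
\[
\frac1M\sum_i\|\widehat{\boldsymbol\Psi}_{i,M}-\boldsymbol\Psi_{i,M}\|^2\le C' \Delta_M^2=O_p(M^{-1}),
\]
which proves \eqref{eq:prop_plugin_rate}, and the convergence to zero follows. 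The only point needing care is the uniform boundedness of the $\Gamma$'s and the use of the global derivative bounds on $[0,1]^K$. Conditional statements given $\mathcal G_M$ transfer to unconditional $O_p$ by Markov's inequality applied to the conditional second moments in \eqref{eq:lem_input_repr_S2} and \eqref{eq:lem_input_repr_T2}. I do not expect a genuine obstacle: the sup-norm bound in Step 3 does the work.
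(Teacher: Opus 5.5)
Your proof is correct and takes essentially the same route as the paper: the same telescoping of the held-out term $\widehat q^\top\widehat\Gamma^T$ and of the selection term $\widehat T^\top Dg(\widehat S)\widehat\Gamma^S$, the same centering identity for $\widehat\Gamma-\Gamma$, uniform boundedness of the $\Gamma$'s, and the derivative and Lipschitz bounds on $[0,1]^{K}$. The only difference is bookkeeping: you derive the bound $|\widehat\psi_{i,j,B,M}-\psi_{i,j,B,M}|\le C\Delta_M$ uniformly in $i$ and then average, whereas the paper bounds the averaged squared differences of $\widehat U-U$ and $\widehat V-V$ separately and then combines them over $r$.
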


\begin{proof}
By Lemma~\ref{lem:siren_input_representation}, the coordinatewise conditional second moments of $\widehat S_{r,j}(B)-s_{r,j,M}(B)$ and $\widehat T_{r,j}(B)-t_{r,j,M}(B)$ are of order $M^{-1}$. Since the index set is finite,
\begin{equation}
\big\|
\widehat S_{r,j}(B)-s_{r,j,M}(B)
\big\|
=
O_p(M^{-1/2}),
\qquad
\big\|
\widehat T_{r,j}(B)-t_{r,j,M}(B)
\big\|
=
O_p(M^{-1/2})
\label{eq:proof_plugin_ST}
\end{equation}
jointly over the finite index set. The bound \eqref{eq:prop_plugin_lip} gives
\begin{equation}
\big\|
\widehat q_{r,j}(B)-q_{r,j,M}(B)
\big\|
=
O_p(M^{-1/2}),
\;
\big\|
Dg_{j,B}\big(\widehat S_{r,j}(B)\big)-Dg_{j,B}\big(s_{r,j,M}(B)\big)
\big\|_{\mathrm{op}}
=
O_p(M^{-1/2}).
\label{eq:proof_plugin_qA}
\end{equation}

Since the scores are bounded and the split fractions stay away from zero, there exists $C_\Gamma<\infty$ such that
\begin{equation}
\max_{i,r,j,B}
\Big(
\|\Gamma^{S}_{i,r,j,M}(B)\|,
\|\Gamma^{T}_{i,r,j,M}(B)\|,
\|\widehat\Gamma^{S}_{i,r,j,M}(B)\|,
\|\widehat\Gamma^{T}_{i,r,j,M}(B)\|
\Big)
\le C_\Gamma
\label{eq:proof_plugin_Gammabound}
\end{equation}
for all sufficiently large $M$. Moreover,
\begin{equation}
\widehat\Gamma^{S}_{i,r,j,k,M}(B)-\Gamma^{S}_{i,r,j,k,M}(B)
=
\frac{M}{|D_r^{\mathrm{score}}|}
\mathbf 1\{i\in D_r^{\mathrm{score}}\}
\big(s_{r,j,k,M}(B)-\widehat S_{r,j,k}(B)\big),
\label{eq:proof_plugin_hatGammaS}
\end{equation}
and similarly for $\widehat\Gamma^{T}_{i,r,j,k,M}(B)-\Gamma^{T}_{i,r,j,k,M}(B)$. Therefore,
\begin{equation}
\frac{1}{M}\sum_{i=1}^M
\big\|
\widehat\Gamma^{S}_{i,r,j,M}(B)-\Gamma^{S}_{i,r,j,M}(B)
\big\|^2
=
O_p(M^{-1}),
\label{eq:proof_plugin_hatGammaS_rate}
\end{equation}
\begin{equation}
\frac{1}{M}\sum_{i=1}^M
\big\|
\widehat\Gamma^{T}_{i,r,j,M}(B)-\Gamma^{T}_{i,r,j,M}(B)
\big\|^2
=
O_p(M^{-1}).
\label{eq:proof_plugin_hatGammaT_rate}
\end{equation}

For fixed $r,j,B$, define
\begin{equation}
U_{i,r,j,B,M}
=
q_{r,j,M}(B)^\top\Gamma^{T}_{i,r,j,M}(B),
\qquad
\widehat U_{i,r,j,B,M}
=
\widehat q_{r,j}(B)^\top\widehat\Gamma^{T}_{i,r,j,M}(B).
\label{eq:proof_plugin_U}
\end{equation}
Using \eqref{eq:proof_plugin_qA}, \eqref{eq:proof_plugin_Gammabound}, and \eqref{eq:proof_plugin_hatGammaT_rate},
\begin{equation}
\frac{1}{M}\sum_{i=1}^M
\big|
\widehat U_{i,r,j,B,M}-U_{i,r,j,B,M}
\big|^2
=
O_p(M^{-1}).
\label{eq:proof_plugin_Urate}
\end{equation}
Likewise define
\begin{equation}
V_{i,r,j,B,M}
=
t_{r,j,M}(B)^\top
Dg_{j,B}\big(s_{r,j,M}(B)\big)
\Gamma^{S}_{i,r,j,M}(B),
\label{eq:proof_plugin_V}
\end{equation}
\begin{equation}
\widehat V_{i,r,j,B,M}
=
\widehat T_{r,j}(B)^\top
Dg_{j,B}\big(\widehat S_{r,j}(B)\big)
\widehat\Gamma^{S}_{i,r,j,M}(B).
\label{eq:proof_plugin_Vhat}
\end{equation}
The same bounds, together with \eqref{eq:proof_plugin_hatGammaS_rate}, imply
\begin{equation}
\frac{1}{M}\sum_{i=1}^M
\big|
\widehat V_{i,r,j,B,M}-V_{i,r,j,B,M}
\big|^2
=
O_p(M^{-1}).
\label{eq:proof_plugin_Vrate}
\end{equation}
Finally,
\begin{equation}
\widehat\psi_{i,j,B,M}-\psi_{i,j,B,M}
=
\sum_{r=1}^R\omega_r
\Big[
\big(\widehat U_{i,r,j,B,M}-U_{i,r,j,B,M}\big)
+
\big(\widehat V_{i,r,j,B,M}-V_{i,r,j,B,M}\big)
\Big].
\label{eq:proof_plugin_psidiff}
\end{equation}
Since $R$ is fixed and the weights are nonnegative and sum to one,
\begin{align}
\frac{1}{M}\sum_{i=1}^M
\big|
\widehat\psi_{i,j,B,M}-\psi_{i,j,B,M}
\big|^2
&\le
2\sum_{r=1}^R\omega_r
\frac{1}{M}\sum_{i=1}^M
\big|
\widehat U_{i,r,j,B,M}-U_{i,r,j,B,M}
\big|^2
\notag\\
&\quad+
2\sum_{r=1}^R\omega_r
\frac{1}{M}\sum_{i=1}^M
\big|
\widehat V_{i,r,j,B,M}-V_{i,r,j,B,M}
\big|^2
=
O_p(M^{-1}).
\label{eq:proof_plugin_single}
\end{align}
Summing over the finite index set $(j,B)$ proves \eqref{eq:prop_plugin_rate}.
\end{proof}

\begin{remark}
The multiplier bootstrap uses one multiplier per benchmark item rather than one multiplier per split because the same item can appear in different roles across repeated splits and because the same evaluation pool is used across systems and budgets. The item-level vector $\widehat{\boldsymbol\Psi}_{i,M}$ collects all these roles before resampling. Using a common multiplier for the whole vector preserves the dependence structure needed for simultaneous bands and fixed contrasts.
\end{remark}
%
%

\section{Additional experiments}\label{sec:additional_experiments}
 
This appendix collects three sets of supplementary results that support the
main paper's claims across all four (subject, tuner) configurations:
(i)~Track~I theory-validation studies in controlled simulation
(Section~\ref{sec:track1});
(ii)~per-model SIREN-vs-baseline comparisons on real data
(Section~\ref{sec:appendix_siren_per_model}); and
(iii)~per-model and per-budget PromptEval-vs-SIREN comparisons
(Section~\ref{sec:appendix_pe}). 
 
\FloatBarrier

\subsection{Theory Validation}\label{sec:track1}
 
Track~I validates the three core theoretical results in controlled settings where ground truth is known by construction.
All three studies use a common Bernoulli item-response DGP: item~$i$ has difficulty $\delta_i \sim \mathrm{Uniform}(-2,2)$, artifact~$k$ has quality~$q_k$, and $Z_{ik} \sim \mathrm{Bernoulli}(\sigma(q_k - \delta_i))$.
 
\subsubsection{Study~A: Coverage Validation of the Multiplier Bootstrap}
\label{sec:studyA}
 
Study~A validates the three central operational claims of Theorems~\ref{thm:positive_smooth} and~\ref{thm:bootstrap_validity} in a controlled setting where the ground truth is tractable by Monte~Carlo: (i)~the multiplier bootstrap on the plug-in influence function $\widehat\psi$ achieves nominal coverage for the repeated-split estimator $\widetilde\theta$; (ii)~its confidence bands shrink at the theoretical $M^{-1/2}$ rate; (iii)~it reproduces a nonparametric item-bootstrap's coverage and width at a fraction of the computational cost.
 
\paragraph{Data generating process.}
Items have i.i.d.\ difficulties $\delta_i\sim\text{Uniform}(-2,2)$ and $K$ artifacts have qualities equally spaced over $[0, 0.3]$; scores are $Z_{ik}\sim\text{Bernoulli}(\sigma(q_k-\delta_i))$.
The selector is softmax with $\tau{=}0.1$, the split fraction is $\rho{=}0.5$, and the bootstrap uses $B_{\mathrm{boot}}{=}500$ Gaussian multipliers.
Ground truth $\theta^\star = \mathbb E[\widetilde\theta]$ is estimated by $N_{\mathrm{gt}}{=}3{,}000$ independent Monte~Carlo replications; empirical coverage is then measured over $N_{\mathrm{sim}}{=}2{,}000$ trials per configuration, giving a Monte~Carlo standard error of approximately $\mathrm{SE}\approx 0.5$\,pp at the nominal $95\%$ level.
 
\paragraph{Main result: calibrated coverage across $M$, $K$, and $R$.}
Figure~\ref{fig:studyA_main} shows the coverage sweep at $R{=}5$ across $M\in\{100,200,500,1000,2000\}$ and $K\in\{2,5,10\}$.
All fifteen configurations fall within the Monte~Carlo $95\%$ band around nominal (coverages range $93.9\%$ to $95.9\%$; see Table~\ref{tab:studyA_main} for widths), with no point more than $1.1$\,pp from the nominal level.
The right panel confirms that CI width tracks the theoretical $M^{-1/2}$ scaling on a log--log plot: the empirical log--log slopes are $-0.510$, $-0.524$, and $-0.520$ for $K{=}2,5,10$, all within $5\%$ of the theoretical $-0.5$, and the three curves run parallel as predicted.
In the practitioner regime of $M\geq 500$ the coverage is always within $0.6$\,pp of nominal and the width is well below $0.1$, showing that the bootstrap is usable at sample sizes typical of LLM benchmarks.
 
\begin{figure}[h]
  \centering
  \includegraphics[width=\textwidth]{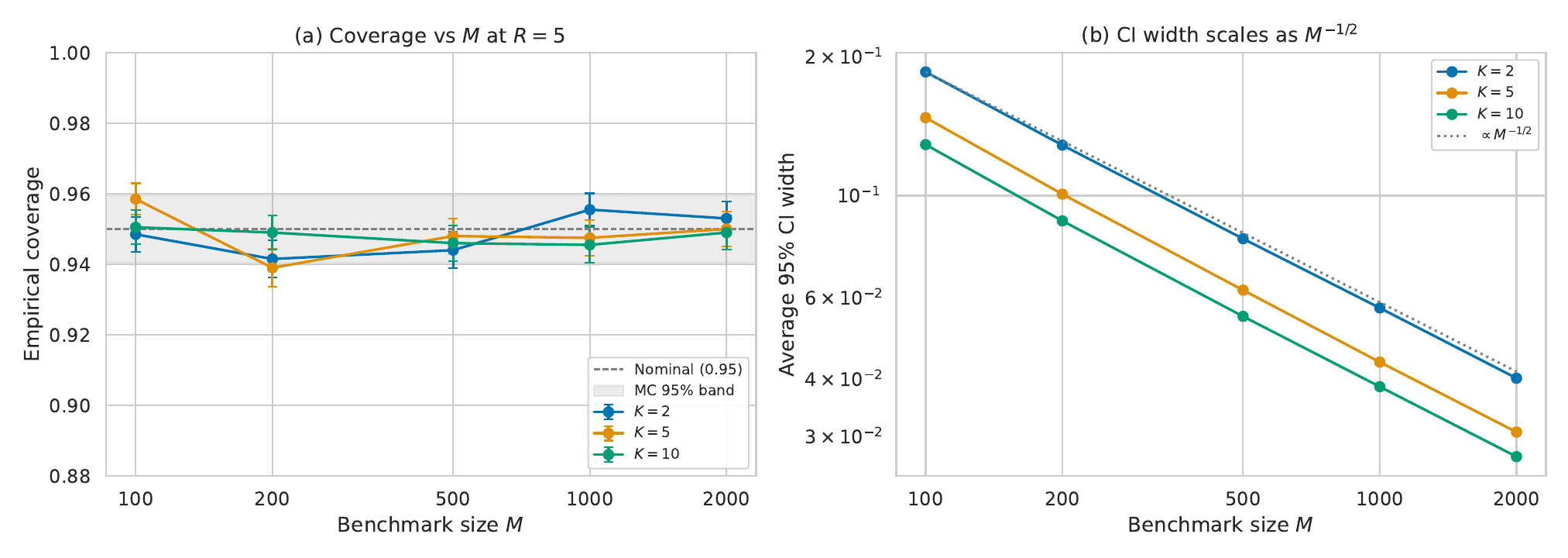}
  \caption{\textbf{Study~A: Coverage and width across $M$ at $R{=}5$.}
  (a)~Empirical coverage of the multiplier-bootstrap 95\% CI. Error bars are $\pm$SE over $N_{\mathrm{sim}}{=}2{,}000$ trials; the grey band is the Monte~Carlo $95\%$ tolerance region around the nominal level.
  (b)~Log--log plot of average CI width; dotted line shows the reference $M^{-1/2}$ slope.}
  \label{fig:studyA_main}
\end{figure}
 
\begin{table}[h]
\centering
\small
\caption{Study~A: Average CI width of the SIREN multiplier bootstrap at $R{=}5$.}
\label{tab:studyA_main}
\begin{tabular}{rccc}
\toprule
 & \multicolumn{3}{c}{Average CI width} \\
\cmidrule(lr){2-4}
$M$ & $K{=}2$ & $K{=}5$ & $K{=}10$ \\
\midrule
100   & 0.186 & 0.148 & 0.129 \\
200   & 0.129 & 0.101 & 0.088 \\
500   & 0.081 & 0.062 & 0.055 \\
1000  & 0.057 & 0.044 & 0.038 \\
2000  & 0.040 & 0.031 & 0.027 \\
\bottomrule
\end{tabular}
\end{table}
 
\paragraph{Repeated splits are cheap but not magical.}
Figure~\ref{fig:studyA_R} examines the effect of $R$ at fixed $M{=}500$ and $K{=}10$.
Coverage is stable across $R\in\{1,3,5,10\}$ (all within $1$\,pp of nominal), confirming that the influence-function bootstrap is already first-order valid at $R{=}1$.
What $R$ does buy is efficiency: increasing $R$ from $1$ to $5$ contracts the CI width by $22\%$ (from $0.0705$ to $0.0547$), but doubling $R$ further to $10$ contracts it by only another $4\%$ (to $0.0525$).
The curvature reflects the fact that the held-out sample at a given $R$ is not $R$-fold independent but merely $R$-fold reweighted, so the additional splits add vanishing information once $R$ is moderate.
This matches the practitioner recommendation $R{\in}[5,10]$ used throughout the remainder of the paper.
 
\begin{figure}[h]
  \centering
  \includegraphics[width=\textwidth]{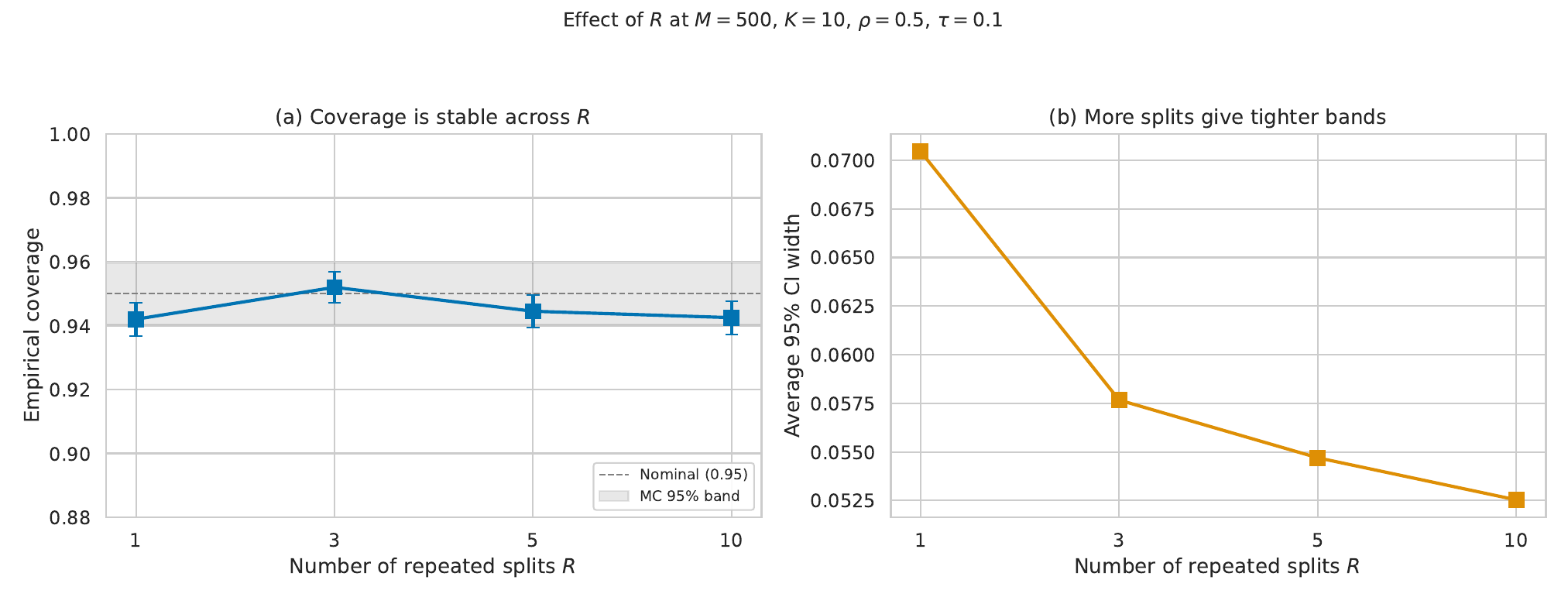}
  \caption{\textbf{Study~A: Effect of the number of repeated splits $R$} at fixed $M{=}500$, $K{=}10$.
  (a)~Coverage is stable across $R$ (all within the MC 95\% band).
  (b)~Width contracts quickly up to $R{=}5$, with diminishing returns beyond.}
  \label{fig:studyA_R}
\end{figure}
 
\paragraph{SIREN matches a nonparametric item-bootstrap at a fraction of the cost.}
A practitioner could, in principle, bypass the influence-function machinery and instead resample items with replacement and rerun the entire repeated-split pipeline on each resample---a nonparametric item bootstrap that respects split dependence by construction.
This baseline serves as a strong sanity check: if SIREN's influence-function shortcut is correct, its CIs should agree with the item-bootstrap's, up to Monte~Carlo noise.
Figure~\ref{fig:studyA_naive}~(a) confirms this: at $K{=}10,\,R{=}5$ and $M\in\{200,500,1000\}$, both methods cover within the MC band (SIREN: $94.6$--$95.0\%$; item-bootstrap: $95.0$--$96.0\%$), and their average widths agree to within $0.8\%$ in every configuration (Figure~\ref{fig:studyA_naive}~(b), ratios $0.994$--$1.008$).
However, the two methods are not equally cheap.
On a representative configuration ($M{=}500$, $K{=}10$, $R{=}5$, $B_{\mathrm{boot}}{=}1000$), the SIREN multiplier bootstrap runs in $7.5$\,ms, while the nonparametric item bootstrap takes $281$\,ms --- a $37.6\times$ speedup from replacing $B_{\mathrm{boot}}$ rerun pipelines with a single length-$M$ dot product per resample.
The gap grows with $R$, $K$, and $B_{\mathrm{boot}}$: at the LLM-evaluation scale of the main MMLU-Pro experiments (Section~\ref{sec:exp_prompteval}) the same speedup reaches $35\times$ over a paired bootstrap and $\sim{}10^5\times$ over the IRT-based PromptEval.
Beyond wall-clock efficiency, the influence function $\widehat\psi$ in \eqref{eq:siren_hat_psi} decomposes additively into a held-out evaluation contribution and a derivative-weighted selection contribution, which lets the same bootstrap deliver a variance decomposition by source and paired CIs across budget pairs --- neither of which has any counterpart in the item-resample baseline.
 
\begin{figure}[h]
  \centering
  \includegraphics[width=\textwidth]{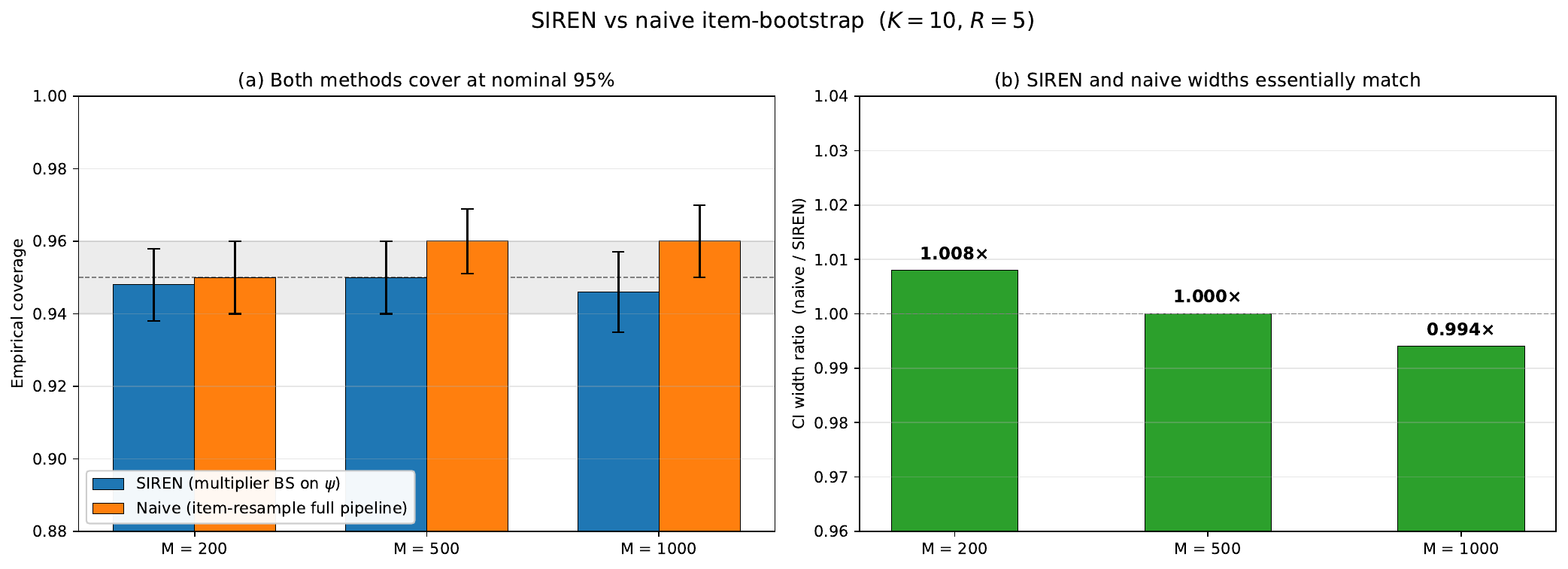}
  \caption{\textbf{Study~A: SIREN versus a nonparametric item-bootstrap baseline} at $K{=}10$, $R{=}5$.
  (a)~Both methods cover at the nominal level (error bars: $\pm 1.96\cdot$SE).
  (b)~CI width ratios are within $\pm 0.8\%$ of unity.
  The item-bootstrap is $\sim{}38\times$ slower because it reruns the full repeated-split pipeline on every resample, while SIREN runs it once and bootstraps a length-$M$ vector.}
  \label{fig:studyA_naive}
\end{figure}
 
\paragraph{Takeaway.}
Across $22$ trial-configurations totaling more than $40{,}000$ simulated trials, the SIREN multiplier bootstrap is empirically calibrated at the nominal level, scales with the theoretical $M^{-1/2}$ rate, and reproduces a nonparametric item-bootstrap's inference at a $\sim{}38\times$ speedup.
This establishes a clean empirical baseline before Studies~B and~C turn to the regimes (near-tie hard selection, same-data best-of reporting) where the theorem's regularity conditions are designed to fail.
 
\subsubsection{Study~B: Near-Tie Nonregularity of Hard Selection}
\label{sec:studyB}
 
Study~B validates three operational claims about hard-selection nonregularity in the same two-artifact setting where it predicts failure: (i)~hard argmax selection systematically undercovers when the top two artifacts are close in quality; (ii)~soft (softmax) selection, whose smooth Jacobian is exactly the regularity condition required by Theorem~\ref{thm:positive_smooth}, remains calibrated across all gaps; (iii)~the winner-instability diagnostic $\widehat\pi_{\mathrm{win}}$ serves as a practical trigger for switching between the two regimes.
 
\paragraph{Data generating process.}
Items have i.i.d.\ difficulties $\delta_i\sim\mathrm{Uniform}(-2,2)$.
The two artifacts have qualities $q_1 = 0.5 + \Delta$ and $q_2 = 0.5$, so that $\Delta$ directly controls the population margin.
Scores are $Z_{ik}\sim\mathrm{Bernoulli}(\sigma(q_k - \delta_i))$.
We fix $M{=}500$, $R{=}5$, $\rho{=}0.5$, $\tau{=}0.1$, and $B_{\mathrm{boot}}{=}500$, sweep $\Delta$ over $17$ values from $0.00$ to $0.80$, and use $N_{\mathrm{sim}}{=}2{,}000$ trials and $N_{\mathrm{gt}}{=}3{,}000$ Monte~Carlo replications per configuration (MC SE $\approx 0.5$\,pp at nominal coverage).
For each method, ground truth $\theta^\star$ is computed separately by taking the expectation of that method's $\widetilde\theta$ under the same DGP, so that coverage measures whether each method's CI covers \emph{its own} procedure-level target.
 
\paragraph{Main result: a U-shaped undercoverage curve for hard selection.}
Figure~\ref{fig:studyB_main}~(a) shows the coverage sweep.
Hard-selection coverage traces a clear U-shape: it is close to nominal at $\Delta{=}0$ (where the two artifacts are identical, so misselection is harmless), drops to a worst of \textbf{$89.8\%$} at $\Delta{=}0.20$ (\textbf{$5.2$\,pp} below nominal, more than ten Monte~Carlo SEs outside the band), and recovers to within MC noise once $\Delta\ge 0.35$.
Soft selection, in contrast, holds between $93.7\%$ and $96.0\%$ coverage across the entire sweep.
Table~\ref{tab:studyB} reports representative points.
The U-shape is the empirical signature of hard-selection nonregularity: at $\Delta{=}0$ the two artifacts are interchangeable; at large $\Delta$ the winner is stable; between these two extremes is a window where the winner flips frequently yet misselection carries a real cost, and that is exactly where hard bootstrap fails.
 
\begin{figure}[h]
  \centering
  \includegraphics[width=\textwidth]{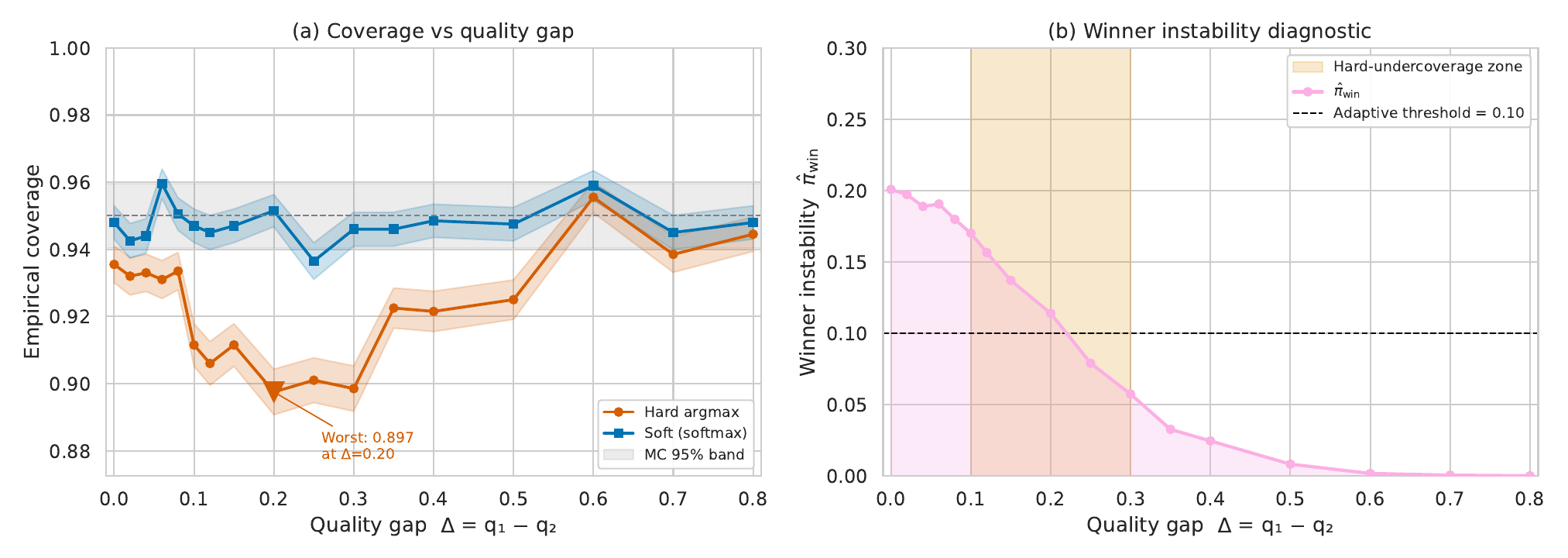}
  \caption{\textbf{Study~B: Hard-selection coverage fails in the near-tie regime.}
  (a)~Empirical coverage vs.\ quality gap $\Delta$; error bands are $\pm$SE over $N_{\mathrm{sim}}{=}2{,}000$ trials, grey band is the Monte~Carlo $95\%$ tolerance region around the nominal level.
  Hard (orange) drops to $89.8\%$ at $\Delta{=}0.20$; soft (blue) is stable across the sweep.
  (b)~Winner instability $\widehat\pi_{\mathrm{win}}$ and the adaptive threshold $0.10$. The shaded orange band marks the empirical hard-undercoverage zone $\Delta\in[0.10,0.30]$ and aligns with the regime where $\widehat\pi_{\mathrm{win}}$ crosses the threshold from above.}
  \label{fig:studyB_main}
\end{figure}
 
\begin{table}[h]
\centering
\small
\caption{Study~B: Empirical coverage (\%) and winner instability at selected gap levels ($M{=}500$, $K{=}2$, $R{=}5$, Monte~Carlo SE $\approx 0.5$\,pp at $N_{\mathrm{sim}}{=}2{,}000$). Hard-selection coverage drops up to $5.2$\,pp below nominal; soft is stable.}
\label{tab:studyB}
\begin{tabular}{r cc c c}
\toprule
Gap $\Delta$ & Hard (\%) & Soft (\%) & Adaptive (\%) & $\widehat\pi_{\mathrm{win}}$ \\
\midrule
0.00 & 93.6 & 94.8 & 93.8 & 0.201 \\
0.06 & 93.1 & 96.0 & 94.1 & 0.191 \\
0.10 & 91.1 & 94.7 & 92.8 & 0.170 \\
0.15 & 91.1 & 94.7 & 93.2 & 0.137 \\
\textbf{0.20} & \textbf{89.8} & \textbf{95.2} & \textbf{93.5} & \textbf{0.114} \\
0.30 & 89.8 & 94.6 & 93.5 & 0.057 \\
\bottomrule
\end{tabular}
\end{table}
 
\paragraph{Mechanism: hard bootstrap estimates the wrong variance.}
Figure~\ref{fig:studyB_mechanism} visualizes exactly why hard selection fails.
For each gap we compare the true standard deviation of $\widetilde\theta$ across trials (solid) with the standard deviation implied by the average CI width (dashed; half-width divided by $1.96$, i.e., the Gaussian-CI interpretation of the bootstrap quantile).
Under soft selection the two curves track each other to within Monte~Carlo noise at every $\Delta$, confirming Theorem~\ref{thm:bootstrap_validity}.
Under hard selection the two curves diverge in the near-tie window: the true std climbs from $\approx 0.023$ at $\Delta{=}0$ to a peak of $\approx 0.027$ around $\Delta{=}0.25$, while the bootstrap-estimated std remains essentially flat near $0.022$.
The shaded region is the \emph{missed variance}: a real source of randomness---the discrete jump in $\widehat k^\star$ across splits---that the plug-in influence function $\widehat\psi^{\mathrm{hard}}$ is blind to because the argmax selector has zero derivative almost everywhere.
At $\Delta{=}0.20$ the bootstrap underestimates the std by approximately $14\%$, with the maximum miss of $\approx 15\%$ occurring at $\Delta{=}0.25$; both lie squarely inside the empirical undercoverage zone $\Delta\in[0.10,0.30]$.
 
\begin{figure}[h]
  \centering
  \includegraphics[width=\textwidth]{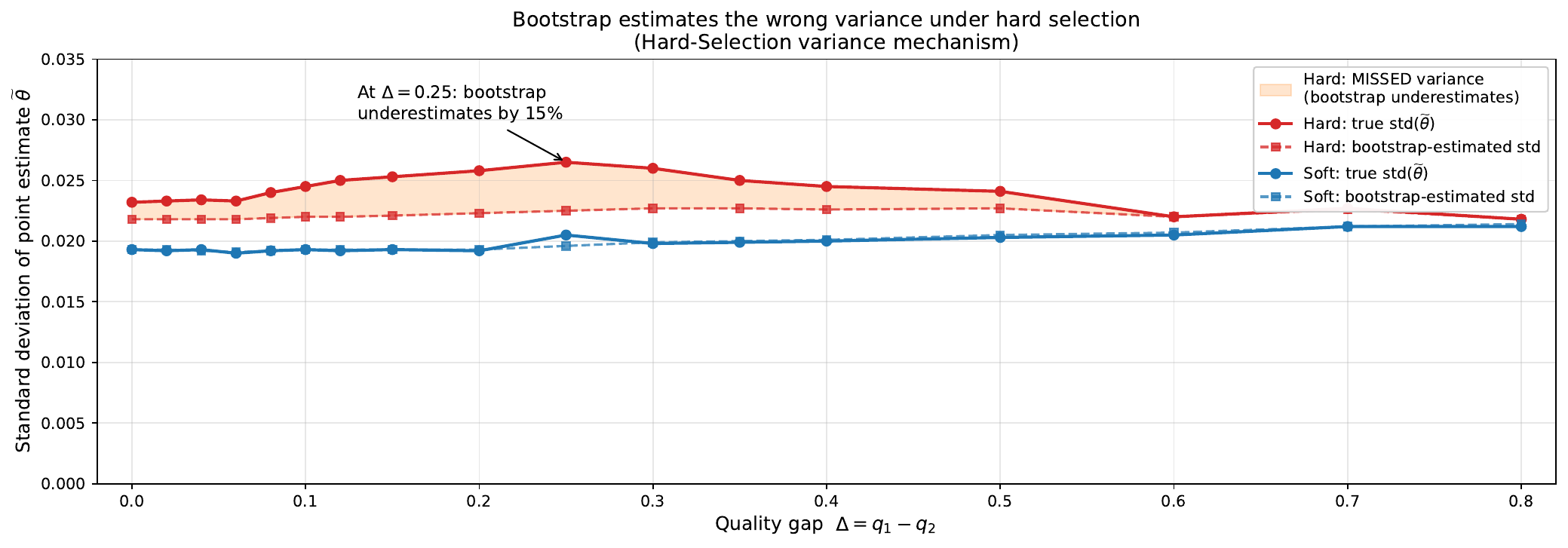}
  \caption{\textbf{Study~B: Why hard-selection bootstrap undercovers.}
  The orange shaded area is the variance that the hard influence function $\widehat\psi^{\mathrm{hard}}$ does not capture: the discontinuous jump of the argmax winner across splits.
  Soft selection has a non-zero Jacobian and therefore estimates its own variance correctly (blue lines track each other).}
  \label{fig:studyB_mechanism}
\end{figure}
 
\paragraph{Instability-triggered adaptive rule: useful, but not a free lunch.}
Figure~\ref{fig:studyB_adaptive} evaluates an instability-triggered adaptive rule: on each trial, use hard selection when the winner-instability frequency $\widehat\pi_{\mathrm{win}}$ (the fraction of repeated splits whose argmax differs from the across-split majority winner) satisfies $\widehat\pi_{\mathrm{win}} \le 0.10$, and switch to soft otherwise.
Across the sweep, adaptive coverage improves on hard by $2$--$4$\,pp in the near-tie window, recovering to $93.5\%$ at the worst gap $\Delta{=}0.20$ (vs.\ $89.8\%$ for hard).
Its CI width in the stable regime ($\Delta\ge 0.40$, where $\widehat\pi_{\mathrm{win}}$ is below the threshold on essentially every trial) coincides with hard's to within $1\%$, avoiding the $\sim 12\%$ width penalty that soft pays when the margin is large (Figure~\ref{fig:studyB_adaptive}~(b); at $\Delta{=}0.40$, $\bar w_{\mathrm{adapt}}/\bar w_{\mathrm{hard}}{=}0.99$ but $\bar w_{\mathrm{soft}}/\bar w_{\mathrm{hard}}{=}0.89$).
However, the rule does \emph{not} fully close the gap at boundary points: at $\Delta{=}0.20$ the population $\widehat\pi_{\mathrm{win}}$ is $0.114$, just above the threshold, so individual trials oscillate across the decision boundary and the ensemble coverage sits between hard and soft.
The practical takeaway is that the adaptive rule is a reasonable compromise when one wants tight intervals whenever possible, but a practitioner who needs guaranteed nominal coverage should use soft selection throughout.
 
\begin{figure}[h]
  \centering
  \includegraphics[width=\textwidth]{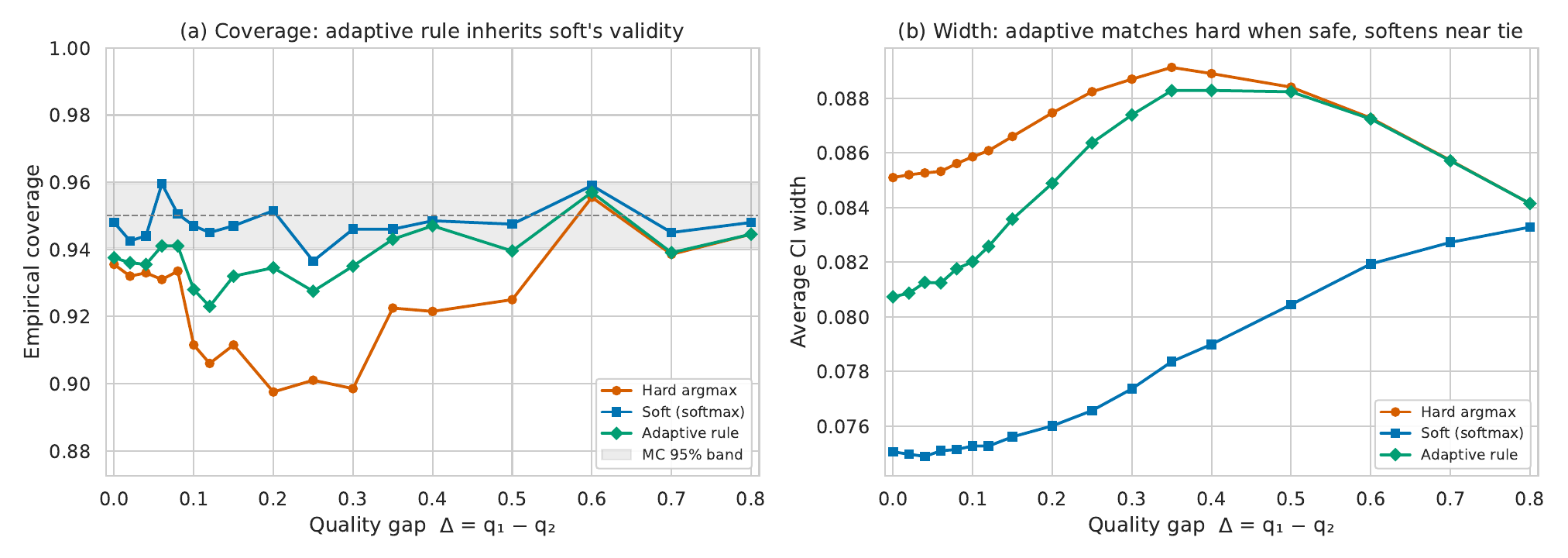}
  \caption{\textbf{Study~B: Instability-triggered adaptive rule.}
  (a)~Coverage of hard, soft, and adaptive across $\Delta$. Adaptive (green) improves on hard near ties but does not fully match soft at boundary points where $\widehat\pi_{\mathrm{win}}\approx 0.10$.
  (b)~Average CI width: adaptive coincides with hard in the stable regime (large $\Delta$) and pulls toward soft near ties, giving a dynamic trade-off between width and coverage.}
  \label{fig:studyB_adaptive}
\end{figure}
 
\paragraph{Takeaway.}
Across $17$ gap levels and $2{,}000$ trials each, hard argmax selection is empirically nonregular in a narrow but operationally relevant window around $\Delta\approx 0.10$--$0.30$, where its bootstrap CIs lose up to $5.2$\,pp of coverage.
The failure mechanism is straightforward: the plug-in influence function for argmax has zero derivative and therefore misses the selection-jump component of the true variance ($\approx 15\%$ underestimation of $\mathrm{std}(\widetilde\theta)$ at the peak).
Soft selection, which has a non-zero softmax Jacobian, eliminates this failure while paying a modest $\sim 12\%$ width cost in the stable regime.
The winner-instability diagnostic $\widehat\pi_{\mathrm{win}}$ correctly localizes the failure regime and powers an adaptive rule that partly reclaims the width advantage of hard in safe regimes.
Together, Studies~A and~B show that Theorem~\ref{thm:positive_smooth}'s smoothness assumption is not cosmetic: remove it and coverage breaks in a predictable, measurable way.
 
\subsubsection{Study~C: Same-Data Optimism}
\label{sec:studyC}
 
Study~C empirically validates the same-data optimism phenomenon in a setting that is adversarial to any method relying on same-data best-of reporting: two systems with \emph{identical} true performance but unequal search budgets. The question is whether the method under evaluation correctly declares them equivalent, or whether it inflates the score of whichever system happened to try more artifacts.
 
\paragraph{Data generating process.}
Items have i.i.d.\ difficulties $\delta_i \sim \text{Uniform}(-2, 2)$, and every artifact in every system shares the identical quality $q_k \equiv 0.5$, so all systems have common population mean $\theta^\star \approx 0.596$.
We fix $M{=}500$, $R{=}5$, $\rho{=}0.5$, $\tau{=}0.1$, and $B_{\mathrm{boot}}{=}500$, with $N_{\mathrm{sim}}{=}2{,}000$ trials and $N_{\mathrm{gt}}{=}10{,}000$ Monte~Carlo replications per configuration.
The bias sweep (Table~\ref{tab:studyC} and Figure~\ref{fig:studyC}~(left)) varies $H_A\in\{3,5,10,20,50\}$ against a fixed comparator System~B with $H_B{=}3$ artifacts under shared item draws, so that bias and false-winner rate are measured on the same paired comparison.
The unequal-search bar chart (Figure~\ref{fig:studyC}~(right)) sweeps six $(H_A,H_B)$ pairs with independent item draws per system, the regime of practical interest where two teams report on disjoint evaluation runs.
 
\begin{table}[h]
\centering
\caption{Study~C: Same-data optimism ($M{=}500$, all artifacts identical, $\theta^\star \approx 0.596$, comparator $H_B{=}3$, paired item draws). ``FWR'' = false-winner rate (\% of trials declaring System~A the winner). Fair baseline is $50\%$.}
\label{tab:studyC}
\small
\begin{tabular}{r cc cc}
\toprule
& \multicolumn{2}{c}{Optimism bias (pp)} & \multicolumn{2}{c}{FWR: A wins (\%)} \\
\cmidrule(lr){2-3} \cmidrule(lr){4-5}
$H_A$ & Same-data & SIREN & Same-data & SIREN \\
\midrule
3   & $+1.5$ & $-0.2$ & $48.3$ & $0.0$ \\
5   & $+2.3$ & $\,\,\,\,0.0$ & $62.1$ & $0.0$ \\
10  & $+3.1$ & $+0.1$ & $76.4$ & $0.0$ \\
20  & $+3.7$ & $+0.1$ & $86.3$ & $0.0$ \\
50  & $+4.0$ & $-0.3$ & $94.0$ & $0.0$ \\
\bottomrule
\end{tabular}
\end{table}
 
\paragraph{Main result: same-data bias tracks the $\sqrt{2\log H}$ theory.}
Table~\ref{tab:studyC} and the left panel of Figure~\ref{fig:studyC} report the optimism bias $\mathbb{E}[\widehat\theta - \theta^\star]$ of the two estimators on System~A.
Same-data best-of reporting inflates System~A's score from $+1.5$\,pp at $H_A{=}3$ to $+4.0$\,pp at $H_A{=}50$, and its growth tracks the theoretical rate $\sigma\sqrt{2\log H_A}/\sqrt{M}$ (dashed line) up to a constant factor; the empirical bias sits below the theoretical curve because the maximum-of-Gaussians bound is conservative in finite-$M$ Bernoulli scores.
The repeated-split estimator $\widetilde\theta$ shows bias bounded by $0.4$\,pp across all library sizes, because its held-out evaluation is independent of the development scores used to form the softmax weights.
 
\paragraph{Mechanism: the winner of a noisy ensemble is biased upward.}
The root cause is an elementary extreme-value fact.
When $H$ artifacts have identical true quality and i.i.d.\ empirical estimates with standard error $\sigma/\sqrt{M}$, the expected maximum of those estimates sits $\sigma\sqrt{2\log H}/\sqrt{M}$ above the common true value.
Same-data best-of reporting picks the arg-max \emph{and} reports its value on the same data that identified it as the winner, so the reported score inherits the entire extreme-value inflation.
The repeated-split estimator breaks this coupling: the development arg-max is evaluated on held-out items, where the selected artifact has no special noise structure, so no extreme-value premium is embedded in the reported score.
 
\paragraph{Unequal search effort fabricates rankings.}
The practical consequence appears when the two systems have unequal $H$.
Same-data reporting inflates both systems' scores, but inflates the larger-library system more, creating a spurious gap between the two reports.
At $H_A{=}50$ vs.\ $H_B{=}3$ in the paired-item table, this gap is $+4.0 - (+1.5) \approx 2.5$\,pp of fabricated advantage for System~A, despite \emph{identical} ground truth.
The FWR columns of Table~\ref{tab:studyC} translate this into false rankings: the probability that same-data reporting declares System~A the winner climbs from $48.3\%$ at $H_A{=}3$ (ties, near the $50\%$ fair baseline) to $\mathbf{94.0\%}$ at $H_A{=}50$.
The right panel of Figure~\ref{fig:studyC} shows the same effect for the more practical regime in which two teams evaluate on independent item draws: across six $(H_A,H_B)$ pairs, the win rate of the system with the larger search library ranges from $70.7\%$ at $(20,5)$ to $87.6\%$ at $(100,5)$, all well above the fair $50\%$ baseline.
By contrast, the SIREN $95\%$ CIs overlap in essentially every trial at every library size, so SIREN declares neither system the winner $100\%$ of the time---which is the correct answer.
 
\begin{figure}[h]
  \centering
  \includegraphics[width=\textwidth]{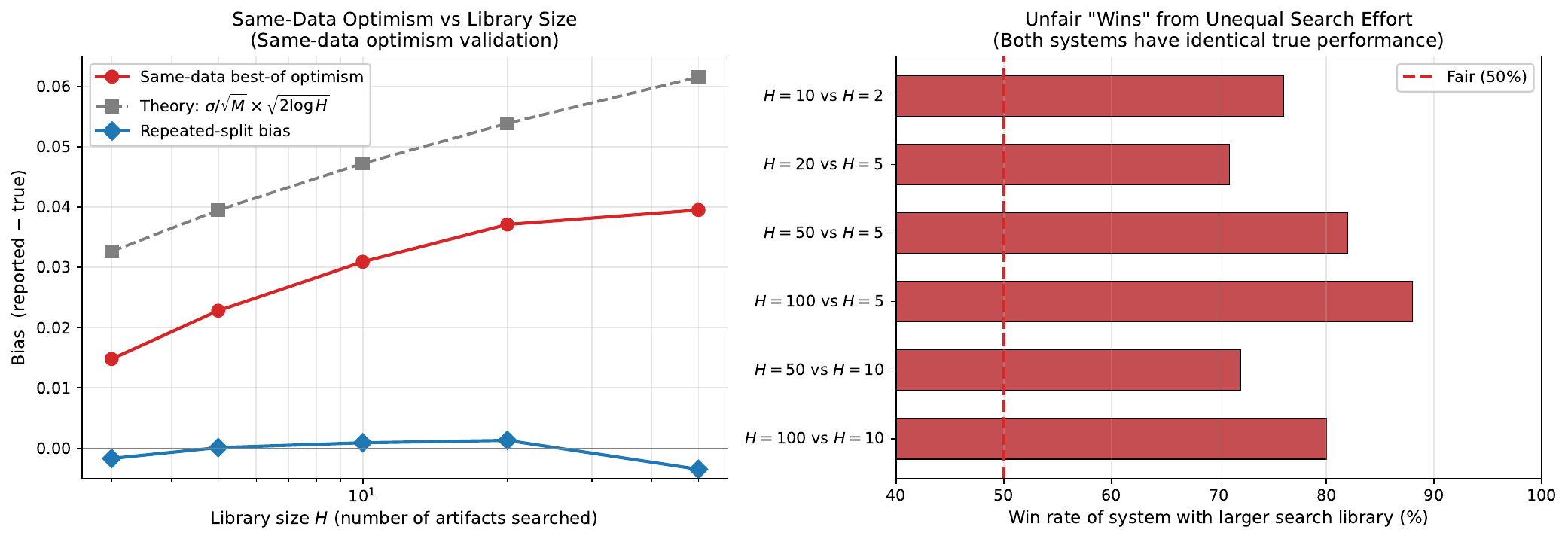}
  \caption{\textbf{Study~C: same-data optimism validation.}
  \emph{Left}: Optimism bias vs.\ library size $H_A$ at fixed $H_B{=}3$ (paired item draws).
  Same-data reporting (red) produces bias growing roughly as $\sigma\sqrt{2\log H}/\sqrt{M}$ (dashed); the repeated-split estimator (blue) has near-zero bias.
  \emph{Right}: When two systems with identical true quality search over unequal libraries (independent item draws), same-data reporting declares the system with the larger library the winner $71$--$88\%$ of the time across six pairings, despite ground-truth equality.}
  \label{fig:studyC}
\end{figure}
 
\paragraph{Takeaway.}
A benchmark report that compares two systems via their same-data best-of scores is not measuring procedure quality; it is partly measuring who searched harder.
Across $5$ paired library sizes and $6$ unpaired pairings totaling $22{,}000$ trials, the false-winner rate under same-data reporting grows from chance-level $48\%$ to near-certain $94\%$ as the search-effort imbalance widens, in qualitative agreement with the $\sqrt{2\log H}$ extreme-value theory.
The influence-function bootstrap is the only method in the comparison that preserves fair inference under unequal search effort, and it does so by correctly reporting ``inconclusive'' when the data genuinely do not distinguish the two systems.
 
\FloatBarrier

\subsection{Per-model SIREN-vs-baseline comparisons}
\label{sec:appendix_siren_per_model}

For each (subject, tuner) cell we report the per-model directional calls
and point estimates of SIREN against the four internal baselines: M1/M2
(naive max, Wald), M3 (single-split holdout), and M4 ($R$-split $t$-CI).
Tables~\ref{tab:siren_math_rs_all_budgets}--\ref{tab:siren_law_dspy_all_budgets}
give the per-model breakdown for all four cells. Across all 11 models and
4 budgets, SIREN's tuned-versus-default calls track the Monte~Carlo
direction far more reliably than any baseline, especially in the small-gap
cells where M1's winner-curse inflation flips the deployment decision.
The aggregated directional accuracy across these per-model breakdowns
appears in Tables~\ref{tab:math_directional} and~\ref{tab:law_directional}
of the main text.

\FloatBarrier

\begin{table*}[!htbp]
\centering
\caption{
Unified per-model comparison across budgets on MMLU-Pro Math with the \textbf{random search} tuner.
The untuned-pipeline-default reference $\theta_B^\star$ is fixed across budgets and shown once at the top.
For each budget, $\theta_A^\star$ is the Monte-Carlo tuned target, and all method rows report estimates
$\hat\theta_A$ of this tuned target only. The \cmark/\xmark marker indicates whether
$\mathrm{sign}(\hat\theta_A-\theta_B^\star)$ agrees with the true direction
$\mathrm{sign}(\theta_A^\star-\theta_B^\star)$. M1 and M2 are merged because
they are numerically identical in point estimate and directional call.
}
\label{tab:siren_math_rs_all_budgets}
\scriptsize
\setlength{\tabcolsep}{2.5pt}
\renewcommand{\arraystretch}{0.92}
\resizebox{\textwidth}{!}{%
\begin{tabular}{llccccccccccc}
\toprule
$B$ & Row
& Qwen3 & Phi-3.5 & Qwen2.5-7B & Llama3.1 & Yi-1.5
& InternLM & Qwen2-7B & Qwen2.5-3B & GLM-4 & Mistr-v0.3 & Mistr-v0.1 \\
\midrule

Ref. & $\theta_B^\star$
& 0.329 & 0.192 & 0.325 & 0.244 & 0.225 & 0.245 & 0.284 & 0.262 & 0.232 & 0.163 & 0.153 \\

\midrule
\multirow{6}{*}{$500K$}
& $\theta_A^\star$
& 0.294 & 0.175 & 0.213 & 0.145 & 0.147 & 0.201 & 0.243 & 0.240 & 0.189 & 0.124 & 0.130 \\
& True dir.
& A{<}B & A{<}B & A{<}B & A{<}B & A{<}B & A{<}B & A{<}B & A{<}B & A{<}B & A{<}B & A{<}B \\
& M1/M2
& 0.371\,\xmark & 0.208\,\xmark & 0.297\,\cmark & 0.203\,\cmark & 0.197\,\cmark & 0.244\,\cmark & 0.246\,\cmark & 0.237\,\cmark & 0.225\,\cmark & 0.140\,\cmark & 0.138\,\cmark \\
& M3
& 0.359\,\xmark & 0.233\,\xmark & 0.304\,\cmark & 0.188\,\cmark & 0.200\,\cmark & 0.246\,\xmark & 0.275\,\cmark & 0.236\,\cmark & 0.209\,\cmark & 0.158\,\cmark & 0.129\,\cmark \\
& M4
& 0.351\,\xmark & 0.235\,\xmark & 0.323\,\cmark & 0.198\,\cmark & 0.183\,\cmark & 0.252\,\xmark & 0.284\,\xmark & 0.243\,\cmark & 0.224\,\cmark & 0.151\,\cmark & 0.134\,\cmark \\
& \textbf{SIREN}
& \textbf{0.291}\,\cmark & \textbf{0.174}\,\cmark & \textbf{0.213}\,\cmark & \textbf{0.144}\,\cmark & \textbf{0.147}\,\cmark & \textbf{0.202}\,\cmark & \textbf{0.241}\,\cmark & \textbf{0.238}\,\cmark & \textbf{0.187}\,\cmark & \textbf{0.125}\,\cmark & \textbf{0.129}\,\cmark \\

\midrule
\multirow{6}{*}{$1.5M$}
& $\theta_A^\star$
& 0.346 & 0.198 & 0.311 & 0.211 & 0.162 & 0.244 & 0.264 & 0.247 & 0.221 & 0.154 & 0.136 \\
& True dir.
& A{>}B & A{>}B & A{<}B & A{<}B & A{<}B & A{<}B & A{<}B & A{<}B & A{<}B & A{<}B & A{<}B \\
& M1/M2
& 0.373\,\cmark & 0.204\,\cmark & 0.331\,\xmark & 0.184\,\cmark & 0.167\,\cmark & 0.240\,\cmark & 0.262\,\cmark & 0.253\,\cmark & 0.226\,\cmark & 0.165\,\xmark & 0.133\,\cmark \\
& M3
& 0.366\,\cmark & 0.225\,\cmark & 0.327\,\xmark & 0.233\,\cmark & 0.190\,\cmark & 0.244\,\cmark & 0.275\,\cmark & 0.259\,\cmark & 0.209\,\cmark & 0.188\,\xmark & 0.129\,\cmark \\
& M4
& 0.354\,\cmark & 0.222\,\cmark & 0.329\,\xmark & 0.227\,\cmark & 0.191\,\cmark & 0.256\,\xmark & 0.289\,\xmark & 0.265\,\xmark & 0.227\,\cmark & 0.191\,\xmark & 0.137\,\cmark \\
& \textbf{SIREN}
& \textbf{0.352}\,\cmark & \textbf{0.199}\,\cmark & \textbf{0.312}\,\cmark & \textbf{0.213}\,\cmark & \textbf{0.163}\,\cmark & \textbf{0.245}\,\xmark & \textbf{0.265}\,\cmark & \textbf{0.246}\,\cmark & \textbf{0.218}\,\cmark & \textbf{0.155}\,\cmark & \textbf{0.135}\,\cmark \\

\midrule
\multirow{6}{*}{$3M$}
& $\theta_A^\star$
& 0.345 & 0.196 & 0.313 & 0.212 & 0.164 & 0.244 & 0.263 & 0.239 & 0.218 & 0.155 & 0.137 \\
& True dir.
& A{>}B & A{>}B & A{<}B & A{<}B & A{<}B & A{<}B & A{<}B & A{<}B & A{<}B & A{<}B & A{<}B \\
& M1/M2
& 0.373\,\cmark & 0.237\,\cmark & 0.321\,\cmark & 0.193\,\cmark & 0.157\,\cmark & 0.256\,\xmark & 0.260\,\cmark & 0.259\,\cmark & 0.211\,\cmark & 0.165\,\xmark & 0.133\,\cmark \\
& M3
& 0.358\,\cmark & 0.220\,\cmark & 0.323\,\cmark & 0.214\,\cmark & 0.176\,\cmark & 0.244\,\cmark & 0.270\,\cmark & 0.214\,\cmark & 0.227\,\cmark & 0.169\,\xmark & 0.158\,\xmark \\
& M4
& 0.352\,\cmark & 0.238\,\cmark & 0.317\,\cmark & 0.217\,\cmark & 0.189\,\cmark & 0.255\,\xmark & 0.273\,\cmark & 0.251\,\cmark & 0.230\,\cmark & 0.163\,\xmark & 0.154\,\xmark \\
& \textbf{SIREN}
& \textbf{0.347}\,\cmark & \textbf{0.196}\,\cmark & \textbf{0.313}\,\cmark & \textbf{0.215}\,\cmark & \textbf{0.165}\,\cmark & \textbf{0.246}\,\xmark & \textbf{0.265}\,\cmark & \textbf{0.238}\,\cmark & \textbf{0.216}\,\cmark & \textbf{0.156}\,\cmark & \textbf{0.136}\,\cmark \\

\midrule
\multirow{6}{*}{$6.5M$}
& $\theta_A^\star$
& 0.353 & 0.200 & 0.319 & 0.213 & 0.187 & 0.239 & 0.257 & 0.248 & 0.227 & 0.153 & 0.146 \\
& True dir.
& A{>}B & A{>}B & A{<}B & A{<}B & A{<}B & A{<}B & A{<}B & A{<}B & A{<}B & A{<}B & A{<}B \\
& M1/M2
& 0.373\,\cmark & 0.236\,\cmark & 0.321\,\cmark & 0.221\,\cmark & 0.173\,\cmark & 0.244\,\cmark & 0.263\,\cmark & 0.253\,\cmark & 0.235\,\xmark & 0.162\,\cmark & 0.132\,\cmark \\
& M3
& 0.348\,\cmark & 0.220\,\cmark & 0.297\,\cmark & 0.219\,\cmark & 0.206\,\cmark & 0.244\,\cmark & 0.251\,\cmark & 0.248\,\cmark & 0.214\,\cmark & 0.171\,\xmark & 0.155\,\xmark \\
& M4
& 0.346\,\cmark & 0.235\,\cmark & 0.311\,\cmark & 0.220\,\cmark & 0.208\,\cmark & 0.261\,\xmark & 0.269\,\cmark & 0.260\,\cmark & 0.232\,\xmark & 0.161\,\cmark & 0.157\,\xmark \\
& \textbf{SIREN}
& \textbf{0.356}\,\cmark & \textbf{0.201}\,\cmark & \textbf{0.320}\,\cmark & \textbf{0.214}\,\cmark & \textbf{0.189}\,\cmark & \textbf{0.243}\,\cmark & \textbf{0.255}\,\cmark & \textbf{0.246}\,\cmark & \textbf{0.225}\,\cmark & \textbf{0.153}\,\cmark & \textbf{0.146}\,\cmark \\

\bottomrule
\end{tabular}}
\end{table*}

\FloatBarrier

\begin{table*}[!htbp]
\centering
\caption{
Unified per-model comparison across budgets on MMLU-Pro Law with the \textbf{random search} tuner.
The untuned-pipeline-default reference $\theta_B^\star$ is fixed across budgets and shown once at the top.
For each budget, $\theta_A^\star$ is the Monte-Carlo tuned target, and all method rows report estimates
$\hat\theta_A$ of this tuned target only. The \cmark/\xmark marker indicates whether
$\mathrm{sign}(\hat\theta_A-\theta_B^\star)$ agrees with the true direction
$\mathrm{sign}(\theta_A^\star-\theta_B^\star)$. M1 and M2 are merged because
they are numerically identical in point estimate and directional call.
}
\label{tab:siren_law_rs_all_budgets}
\scriptsize
\setlength{\tabcolsep}{2.5pt}
\renewcommand{\arraystretch}{0.92}
\resizebox{\textwidth}{!}{%
\begin{tabular}{llccccccccccc}
\toprule
$B$ & Row
& Qwen3 & Phi-3.5 & Qwen2.5-7B & Llama3.1 & Yi-1.5
& InternLM & Qwen2-7B & Qwen2.5-3B & GLM-4 & Mistr-v0.3 & Mistr-v0.1 \\
\midrule

Ref. & $\theta_B^\star$
& 0.359 & 0.318 & 0.345 & 0.315 & 0.227 & 0.269 & 0.311 & 0.234 & 0.282 & 0.225 & 0.185 \\

\midrule
\multirow{6}{*}{$500K$}
& $\theta_A^\star$
& 0.317 & 0.293 & 0.324 & 0.294 & 0.222 & 0.261 & 0.300 & 0.224 & 0.272 & 0.230 & 0.172 \\
& True dir.
& A{<}B & A{<}B & A{<}B & A{<}B & A{<}B & A{<}B & A{<}B & A{<}B & A{<}B & A{>}B & A{<}B \\
& M1/M2
& 0.364\,\xmark & 0.312\,\cmark & 0.330\,\cmark & 0.314\,\cmark & 0.227\,\xmark & 0.275\,\xmark & 0.315\,\xmark & 0.237\,\xmark & 0.285\,\xmark & 0.234\,\cmark & 0.185\,\xmark \\
& M3
& 0.377\,\xmark & 0.325\,\xmark & 0.335\,\cmark & 0.317\,\xmark & 0.238\,\xmark & 0.281\,\xmark & 0.309\,\cmark & 0.236\,\xmark & 0.283\,\xmark & 0.246\,\cmark & 0.196\,\xmark \\
& M4
& 0.359\,\xmark & 0.318\,\cmark & 0.330\,\cmark & 0.305\,\cmark & 0.220\,\cmark & 0.272\,\xmark & 0.303\,\cmark & 0.226\,\cmark & 0.271\,\cmark & 0.229\,\cmark & 0.186\,\xmark \\
& \textbf{SIREN}
& \textbf{0.317}\,\cmark & \textbf{0.300}\,\cmark & \textbf{0.328}\,\cmark & \textbf{0.291}\,\cmark & \textbf{0.223}\,\cmark & \textbf{0.262}\,\cmark & \textbf{0.298}\,\cmark & \textbf{0.220}\,\cmark & \textbf{0.267}\,\cmark & \textbf{0.234}\,\cmark & \textbf{0.176}\,\cmark \\

\midrule
\multirow{6}{*}{$1.5M$}
& $\theta_A^\star$
& 0.308 & 0.298 & 0.324 & 0.283 & 0.223 & 0.265 & 0.304 & 0.235 & 0.274 & 0.217 & 0.165 \\
& True dir.
& A{<}B & A{<}B & A{<}B & A{<}B & A{<}B & A{<}B & A{<}B & A{>}B & A{<}B & A{<}B & A{<}B \\
& M1/M2
& 0.364\,\xmark & 0.318\,\xmark & 0.338\,\cmark & 0.325\,\xmark & 0.251\,\xmark & 0.275\,\xmark & 0.320\,\xmark & 0.244\,\cmark & 0.285\,\xmark & 0.240\,\xmark & 0.185\,\xmark \\
& M3
& 0.377\,\xmark & 0.311\,\cmark & 0.329\,\cmark & 0.339\,\xmark & 0.259\,\xmark & 0.271\,\xmark & 0.309\,\cmark & 0.232\,\xmark & 0.253\,\cmark & 0.248\,\xmark & 0.140\,\cmark \\
& M4
& 0.359\,\xmark & 0.318\,\xmark & 0.329\,\cmark & 0.323\,\xmark & 0.245\,\xmark & 0.263\,\cmark & 0.304\,\cmark & 0.230\,\xmark & 0.264\,\cmark & 0.234\,\xmark & 0.176\,\cmark \\
& \textbf{SIREN}
& \textbf{0.309}\,\cmark & \textbf{0.305}\,\cmark & \textbf{0.326}\,\cmark & \textbf{0.283}\,\cmark & \textbf{0.226}\,\cmark & \textbf{0.265}\,\cmark & \textbf{0.303}\,\cmark & \textbf{0.232}\,\xmark & \textbf{0.270}\,\cmark & \textbf{0.220}\,\cmark & \textbf{0.168}\,\cmark \\

\midrule
\multirow{6}{*}{$3M$}
& $\theta_A^\star$
& 0.324 & 0.294 & 0.325 & 0.281 & 0.224 & 0.265 & 0.308 & 0.241 & 0.273 & 0.230 & 0.173 \\
& True dir.
& A{<}B & A{<}B & A{<}B & A{<}B & A{<}B & A{<}B & A{<}B & A{>}B & A{<}B & A{>}B & A{<}B \\
& M1/M2
& 0.364\,\xmark & 0.318\,\xmark & 0.339\,\cmark & 0.327\,\xmark & 0.249\,\xmark & 0.277\,\xmark & 0.328\,\xmark & 0.261\,\cmark & 0.280\,\cmark & 0.246\,\cmark & 0.185\,\xmark \\
& M3
& 0.377\,\xmark & 0.327\,\xmark & 0.317\,\cmark & 0.301\,\cmark & 0.244\,\xmark & 0.257\,\cmark & 0.321\,\xmark & 0.251\,\cmark & 0.253\,\cmark & 0.257\,\cmark & 0.194\,\xmark \\
& M4
& 0.359\,\xmark & 0.321\,\xmark & 0.330\,\cmark & 0.321\,\xmark & 0.236\,\xmark & 0.262\,\cmark & 0.320\,\xmark & 0.246\,\cmark & 0.259\,\cmark & 0.235\,\cmark & 0.181\,\cmark \\
& \textbf{SIREN}
& \textbf{0.326}\,\cmark & \textbf{0.302}\,\cmark & \textbf{0.327}\,\cmark & \textbf{0.279}\,\cmark & \textbf{0.225}\,\cmark & \textbf{0.264}\,\cmark & \textbf{0.308}\,\cmark & \textbf{0.238}\,\cmark & \textbf{0.270}\,\cmark & \textbf{0.232}\,\cmark & \textbf{0.176}\,\cmark \\

\midrule
\multirow{6}{*}{$6.5M$}
& $\theta_A^\star$
& 0.318 & 0.294 & 0.325 & 0.288 & 0.240 & 0.261 & 0.309 & 0.242 & 0.279 & 0.224 & 0.176 \\
& True dir.
& A{<}B & A{<}B & A{<}B & A{<}B & A{>}B & A{<}B & A{<}B & A{>}B & A{<}B & A{<}B & A{<}B \\
& M1/M2
& 0.335\,\cmark & 0.303\,\cmark & 0.342\,\cmark & 0.330\,\xmark & 0.265\,\cmark & 0.284\,\xmark & 0.319\,\xmark & 0.258\,\cmark & 0.292\,\xmark & 0.240\,\xmark & 0.185\,\xmark \\
& M3
& 0.323\,\cmark & 0.287\,\cmark & 0.339\,\cmark & 0.355\,\xmark & 0.267\,\cmark & 0.297\,\xmark & 0.305\,\cmark & 0.251\,\cmark & 0.283\,\xmark & 0.246\,\xmark & 0.190\,\xmark \\
& M4
& 0.328\,\cmark & 0.298\,\cmark & 0.338\,\cmark & 0.328\,\xmark & 0.261\,\cmark & 0.284\,\xmark & 0.305\,\cmark & 0.244\,\cmark & 0.278\,\cmark & 0.234\,\xmark & 0.180\,\cmark \\
& \textbf{SIREN}
& \textbf{0.320}\,\cmark & \textbf{0.300}\,\cmark & \textbf{0.329}\,\cmark & \textbf{0.288}\,\cmark & \textbf{0.240}\,\cmark & \textbf{0.262}\,\cmark & \textbf{0.308}\,\cmark & \textbf{0.239}\,\cmark & \textbf{0.276}\,\cmark & \textbf{0.225}\,\xmark & \textbf{0.179}\,\cmark \\

\bottomrule
\end{tabular}}
\end{table*}

\FloatBarrier

\begin{table*}[!htbp]
\centering
\caption{
Unified per-model comparison across budgets on MMLU-Pro Law with the \textbf{DSPy} tuner.
The untuned-pipeline-default reference $\theta_B^\star$ is fixed across budgets and shown once at the top.
For each budget, $\theta_A^\star$ is the Monte-Carlo tuned target, and all method rows report estimates
$\hat\theta_A$ of this tuned target only. The \cmark/\xmark marker indicates whether
$\mathrm{sign}(\hat\theta_A-\theta_B^\star)$ agrees with the true direction
$\mathrm{sign}(\theta_A^\star-\theta_B^\star)$. M1 and M2 are merged because
they are numerically identical in point estimate and directional call.
}
\label{tab:siren_law_dspy_all_budgets}
\scriptsize
\setlength{\tabcolsep}{2.5pt}
\renewcommand{\arraystretch}{0.92}
\resizebox{\textwidth}{!}{%
\begin{tabular}{llccccccccccc}
\toprule
$B$ & Row
& Qwen3 & Phi-3.5 & Qwen2.5-7B & Llama3.1 & Yi-1.5
& InternLM & Qwen2-7B & Qwen2.5-3B & GLM-4 & Mistr-v0.3 & Mistr-v0.1 \\
\midrule

Ref. & $\theta_B^\star$
& 0.340 & 0.312 & 0.334 & 0.328 & 0.219 & 0.256 & 0.302 & 0.251 & 0.257 & 0.201 & 0.169 \\

\midrule
\multirow{6}{*}{$500K$}
& $\theta_A^\star$
& 0.349 & 0.327 & 0.357 & 0.329 & 0.244 & 0.284 & 0.317 & 0.256 & 0.279 & 0.236 & 0.166 \\
& True dir.
& A{>}B & A{>}B & A{>}B & A{>}B & A{>}B & A{>}B & A{>}B & A{>}B & A{>}B & A{>}B & A{<}B \\
& M1/M2
& 0.360\,\cmark & 0.333\,\cmark & 0.361\,\cmark & 0.339\,\cmark & 0.250\,\cmark & 0.296\,\cmark & 0.323\,\cmark & 0.261\,\cmark & 0.286\,\cmark & 0.246\,\cmark & 0.177\,\xmark \\
& M3
& 0.371\,\cmark & 0.335\,\cmark & 0.329\,\xmark & 0.351\,\cmark & 0.246\,\cmark & 0.279\,\cmark & 0.321\,\cmark & 0.238\,\xmark & 0.269\,\cmark & 0.228\,\cmark & 0.176\,\xmark \\
& M4
& 0.350\,\cmark & 0.332\,\cmark & 0.351\,\cmark & 0.325\,\xmark & 0.248\,\cmark & 0.292\,\cmark & 0.312\,\cmark & 0.242\,\xmark & 0.271\,\cmark & 0.233\,\cmark & 0.173\,\xmark \\
& \textbf{SIREN}
& \textbf{0.346}\,\cmark & \textbf{0.331}\,\cmark & \textbf{0.356}\,\cmark & \textbf{0.325}\,\xmark & \textbf{0.247}\,\cmark & \textbf{0.284}\,\cmark & \textbf{0.314}\,\cmark & \textbf{0.247}\,\xmark & \textbf{0.273}\,\cmark & \textbf{0.231}\,\cmark & \textbf{0.166}\,\cmark \\

\midrule
\multirow{6}{*}{$1.5M$}
& $\theta_A^\star$
& 0.353 & 0.331 & 0.359 & 0.323 & 0.243 & 0.284 & 0.323 & 0.258 & 0.285 & 0.237 & 0.172 \\
& True dir.
& A{>}B & A{>}B & A{>}B & A{<}B & A{>}B & A{>}B & A{>}B & A{>}B & A{>}B & A{>}B & A{>}B \\
& M1/M2
& 0.374\,\cmark & 0.341\,\cmark & 0.372\,\cmark & 0.330\,\xmark & 0.248\,\cmark & 0.290\,\cmark & 0.342\,\cmark & 0.269\,\cmark & 0.294\,\cmark & 0.253\,\cmark & 0.180\,\cmark \\
& M3
& 0.387\,\cmark & 0.335\,\cmark & 0.363\,\cmark & 0.341\,\xmark & 0.250\,\cmark & 0.289\,\cmark & 0.349\,\cmark & 0.246\,\xmark & 0.277\,\cmark & 0.234\,\cmark & 0.176\,\cmark \\
& M4
& 0.366\,\cmark & 0.337\,\cmark & 0.365\,\cmark & 0.319\,\cmark & 0.240\,\cmark & 0.281\,\cmark & 0.336\,\cmark & 0.253\,\cmark & 0.280\,\cmark & 0.247\,\cmark & 0.171\,\cmark \\
& \textbf{SIREN}
& \textbf{0.350}\,\cmark & \textbf{0.335}\,\cmark & \textbf{0.356}\,\cmark & \textbf{0.318}\,\cmark & \textbf{0.246}\,\cmark & \textbf{0.284}\,\cmark & \textbf{0.320}\,\cmark & \textbf{0.250}\,\xmark & \textbf{0.281}\,\cmark & \textbf{0.234}\,\cmark & \textbf{0.174}\,\cmark \\

\midrule
\multirow{6}{*}{$3M$}
& $\theta_A^\star$
& 0.345 & 0.323 & 0.356 & 0.326 & 0.248 & 0.284 & 0.325 & 0.258 & 0.279 & 0.238 & 0.175 \\
& True dir.
& A{>}B & A{>}B & A{>}B & A{<}B & A{>}B & A{>}B & A{>}B & A{>}B & A{>}B & A{>}B & A{>}B \\
& M1/M2
& 0.358\,\cmark & 0.339\,\cmark & 0.362\,\cmark & 0.339\,\xmark & 0.258\,\cmark & 0.294\,\cmark & 0.342\,\cmark & 0.269\,\cmark & 0.290\,\cmark & 0.248\,\cmark & 0.187\,\cmark \\
& M3
& 0.379\,\cmark & 0.329\,\cmark & 0.343\,\cmark & 0.339\,\xmark & 0.267\,\cmark & 0.287\,\cmark & 0.349\,\cmark & 0.242\,\xmark & 0.265\,\cmark & 0.255\,\cmark & 0.186\,\cmark \\
& M4
& 0.349\,\cmark & 0.337\,\cmark & 0.351\,\cmark & 0.327\,\cmark & 0.249\,\cmark & 0.287\,\cmark & 0.334\,\cmark & 0.254\,\cmark & 0.277\,\cmark & 0.240\,\cmark & 0.178\,\cmark \\
& \textbf{SIREN}
& \textbf{0.344}\,\cmark & \textbf{0.326}\,\cmark & \textbf{0.352}\,\cmark & \textbf{0.321}\,\cmark & \textbf{0.251}\,\cmark & \textbf{0.283}\,\cmark & \textbf{0.322}\,\cmark & \textbf{0.251}\,\xmark & \textbf{0.275}\,\cmark & \textbf{0.235}\,\cmark & \textbf{0.178}\,\cmark \\

\midrule
\multirow{6}{*}{$6.5M$}
& $\theta_A^\star$
& 0.346 & 0.323 & 0.357 & 0.324 & 0.244 & 0.282 & 0.322 & 0.259 & 0.283 & 0.240 & 0.176 \\
& True dir.
& A{>}B & A{>}B & A{>}B & A{<}B & A{>}B & A{>}B & A{>}B & A{>}B & A{>}B & A{>}B & A{>}B \\
& M1/M2
& 0.358\,\cmark & 0.339\,\cmark & 0.362\,\cmark & 0.333\,\xmark & 0.251\,\cmark & 0.294\,\cmark & 0.342\,\cmark & 0.269\,\cmark & 0.291\,\cmark & 0.248\,\cmark & 0.187\,\cmark \\
& M3
& 0.379\,\cmark & 0.329\,\cmark & 0.343\,\cmark & 0.325\,\cmark & 0.246\,\cmark & 0.287\,\cmark & 0.349\,\cmark & 0.242\,\xmark & 0.265\,\cmark & 0.255\,\cmark & 0.162\,\xmark \\
& M4
& 0.349\,\cmark & 0.337\,\cmark & 0.349\,\cmark & 0.321\,\cmark & 0.240\,\cmark & 0.286\,\cmark & 0.334\,\cmark & 0.250\,\xmark & 0.273\,\cmark & 0.241\,\cmark & 0.176\,\cmark \\
& \textbf{SIREN}
& \textbf{0.345}\,\cmark & \textbf{0.326}\,\cmark & \textbf{0.353}\,\cmark & \textbf{0.320}\,\cmark & \textbf{0.248}\,\cmark & \textbf{0.281}\,\cmark & \textbf{0.320}\,\cmark & \textbf{0.251}\,\cmark & \textbf{0.278}\,\cmark & \textbf{0.237}\,\cmark & \textbf{0.177}\,\cmark \\

\bottomrule
\end{tabular}}
\end{table*}

\FloatBarrier

\subsection{Per-model PromptEval-vs-SIREN comparisons}
\label{sec:appendix_pe}

We next report the same per-model breakdown for our external baseline,
PromptEval~\cite{polo2024efficient}, sweeping its cell-observation fraction
$f \in \{0.05, 0.10, 0.20, 0.40, 0.60, 0.80, 1.00\}$.
Tables~\ref{tab:prompteval_math_rs_all_budgets}--\ref{tab:prompteval_law_dspy_all_budgets}
give the per-model point estimates and directional calls for each $(f, B)$
combination across all four (subject, tuner) cells. A consistent picture
emerges: PromptEval's signed error transitions from strongly negative at
small $f$ (because the Rasch model shrinks strong prompts toward the mean
when the score matrix is sparse) through a positive plateau at large $f$
(where dense observation reintroduces the same selection bias as M1),
while SIREN remains within $\pm 0.21$\,pp of the Monte~Carlo target at
every budget across all four cells.

\FloatBarrier

\begin{table*}[!htbp]
\centering
\caption{
Unified PromptEval~\cite{polo2024efficient} vs SIREN comparison across budgets on MMLU-Pro Math with the \textbf{random search} tuner.
The untuned-pipeline-default reference $\theta_B^\star$ is fixed across budgets and shown once at the top.
For each budget, $\theta_A^\star$ is the Monte-Carlo tuned target, and all method rows report estimates
$\hat\theta_A$ of this tuned target only. The \cmark/\xmark marker indicates whether
$\mathrm{sign}(\hat\theta_A-\theta_B^\star)$ agrees with the true direction
$\mathrm{sign}(\theta_A^\star-\theta_B^\star)$.
}
\label{tab:prompteval_math_rs_all_budgets}
\scriptsize
\setlength{\tabcolsep}{2.5pt}
\renewcommand{\arraystretch}{0.92}
\resizebox{\textwidth}{!}{%
\begin{tabular}{llccccccccccc}
\toprule
$B$ & Row
& Qwen3 & Phi-3.5 & Qwen2.5-7B & Llama3.1 & Yi-1.5
& InternLM & Qwen2-7B & Qwen2.5-3B & GLM-4 & Mistr-v0.3 & Mistr-v0.1 \\
\midrule

Ref. & $\theta_B^\star$
& 0.329 & 0.192 & 0.325 & 0.244 & 0.225 & 0.245 & 0.284 & 0.262 & 0.232 & 0.163 & 0.153 \\

\midrule
\multirow{10}{*}{$500K$}
& $\theta_A^\star$
& 0.294 & 0.175 & 0.213 & 0.145 & 0.147 & 0.201 & 0.243 & 0.240 & 0.189 & 0.124 & 0.130 \\
& True dir.
& A{<}B & A{<}B & A{<}B & A{<}B & A{<}B & A{<}B & A{<}B & A{<}B & A{<}B & A{<}B & A{<}B \\
& PE $f{=}0.05$
& 0.260\,\cmark & 0.147\,\cmark & 0.283\,\cmark & 0.140\,\cmark & 0.120\,\cmark & 0.173\,\cmark & 0.194\,\cmark & 0.201\,\cmark & 0.176\,\cmark & 0.097\,\cmark & 0.125\,\cmark \\
& PE $f{=}0.10$
& 0.309\,\cmark & 0.191\,\cmark & 0.261\,\cmark & 0.160\,\cmark & 0.170\,\cmark & 0.219\,\cmark & 0.252\,\cmark & 0.264\,\xmark & 0.212\,\cmark & 0.119\,\cmark & 0.149\,\cmark \\
& PE $f{=}0.20$
& 0.348\,\xmark & 0.241\,\xmark & 0.357\,\xmark & 0.219\,\cmark & 0.221\,\cmark & 0.269\,\xmark & 0.286\,\xmark & 0.298\,\xmark & 0.240\,\xmark & 0.149\,\cmark & 0.170\,\xmark \\
& PE $f{=}0.40$
& 0.360\,\xmark & 0.228\,\xmark & 0.349\,\xmark & 0.218\,\cmark & 0.211\,\cmark & 0.261\,\xmark & 0.283\,\cmark & 0.267\,\xmark & 0.242\,\xmark & 0.154\,\cmark & 0.157\,\xmark \\
& PE $f{=}0.60$
& 0.370\,\xmark & 0.233\,\xmark & 0.332\,\xmark & 0.215\,\cmark & 0.199\,\cmark & 0.258\,\xmark & 0.293\,\xmark & 0.258\,\cmark & 0.234\,\xmark & 0.155\,\cmark & 0.152\,\cmark \\
& PE $f{=}0.80$
& 0.372\,\xmark & 0.232\,\xmark & 0.327\,\xmark & 0.205\,\cmark & 0.198\,\cmark & 0.260\,\xmark & 0.299\,\xmark & 0.253\,\cmark & 0.233\,\xmark & 0.155\,\cmark & 0.149\,\cmark \\
& PE $f{=}1.00$ (M1)
& 0.371\,\xmark & 0.234\,\xmark & 0.324\,\cmark & 0.203\,\cmark & 0.197\,\cmark & 0.259\,\xmark & 0.292\,\xmark & 0.254\,\cmark & 0.233\,\xmark & 0.158\,\cmark & 0.145\,\cmark \\
& \textbf{SIREN}
& \textbf{0.291}\,\cmark & \textbf{0.174}\,\cmark & \textbf{0.213}\,\cmark & \textbf{0.144}\,\cmark & \textbf{0.147}\,\cmark & \textbf{0.202}\,\cmark & \textbf{0.241}\,\cmark & \textbf{0.238}\,\cmark & \textbf{0.187}\,\cmark & \textbf{0.125}\,\cmark & \textbf{0.129}\,\cmark \\

\midrule
\multirow{10}{*}{$1.5M$}
& $\theta_A^\star$
& 0.346 & 0.198 & 0.311 & 0.211 & 0.162 & 0.244 & 0.264 & 0.247 & 0.221 & 0.154 & 0.136 \\
& True dir.
& A{>}B & A{>}B & A{<}B & A{<}B & A{<}B & A{<}B & A{<}B & A{<}B & A{<}B & A{<}B & A{<}B \\
& PE $f{=}0.05$
& 0.296\,\xmark & 0.180\,\xmark & 0.274\,\cmark & 0.163\,\cmark & 0.147\,\cmark & 0.200\,\cmark & 0.204\,\cmark & 0.206\,\cmark & 0.199\,\cmark & 0.122\,\cmark & 0.099\,\cmark \\
& PE $f{=}0.10$
& 0.349\,\cmark & 0.216\,\cmark & 0.318\,\cmark & 0.227\,\cmark & 0.184\,\cmark & 0.256\,\xmark & 0.276\,\cmark & 0.264\,\xmark & 0.237\,\xmark & 0.166\,\xmark & 0.149\,\cmark \\
& PE $f{=}0.20$
& 0.396\,\cmark & 0.241\,\cmark & 0.352\,\xmark & 0.238\,\cmark & 0.240\,\xmark & 0.283\,\xmark & 0.290\,\xmark & 0.292\,\xmark & 0.259\,\xmark & 0.199\,\xmark & 0.158\,\xmark \\
& PE $f{=}0.40$
& 0.383\,\cmark & 0.228\,\cmark & 0.328\,\xmark & 0.233\,\cmark & 0.209\,\cmark & 0.286\,\xmark & 0.307\,\xmark & 0.285\,\xmark & 0.244\,\xmark & 0.189\,\xmark & 0.151\,\cmark \\
& PE $f{=}0.60$
& 0.374\,\cmark & 0.229\,\cmark & 0.325\,\xmark & 0.232\,\cmark & 0.204\,\cmark & 0.270\,\xmark & 0.302\,\xmark & 0.274\,\xmark & 0.236\,\xmark & 0.189\,\xmark & 0.148\,\cmark \\
& PE $f{=}0.80$
& 0.373\,\cmark & 0.224\,\cmark & 0.330\,\xmark & 0.234\,\cmark & 0.198\,\cmark & 0.260\,\xmark & 0.292\,\xmark & 0.271\,\xmark & 0.237\,\xmark & 0.184\,\xmark & 0.146\,\cmark \\
& PE $f{=}1.00$ (M1)
& 0.373\,\cmark & 0.230\,\cmark & 0.331\,\xmark & 0.233\,\cmark & 0.197\,\cmark & 0.264\,\xmark & 0.292\,\xmark & 0.272\,\xmark & 0.237\,\xmark & 0.189\,\xmark & 0.149\,\cmark \\
& \textbf{SIREN}
& \textbf{0.352}\,\cmark & \textbf{0.199}\,\cmark & \textbf{0.312}\,\cmark & \textbf{0.213}\,\cmark & \textbf{0.163}\,\cmark & \textbf{0.245}\,\xmark & \textbf{0.265}\,\cmark & \textbf{0.246}\,\cmark & \textbf{0.218}\,\cmark & \textbf{0.155}\,\cmark & \textbf{0.135}\,\cmark \\

\midrule
\multirow{10}{*}{$3M$}
& $\theta_A^\star$
& 0.345 & 0.196 & 0.313 & 0.212 & 0.164 & 0.244 & 0.263 & 0.239 & 0.218 & 0.155 & 0.137 \\
& True dir.
& A{>}B & A{>}B & A{<}B & A{<}B & A{<}B & A{<}B & A{<}B & A{<}B & A{<}B & A{<}B & A{<}B \\
& PE $f{=}0.05$
& 0.366\,\cmark & 0.144\,\xmark & 0.275\,\cmark & 0.159\,\cmark & 0.104\,\cmark & 0.213\,\cmark & 0.208\,\cmark & 0.202\,\cmark & 0.187\,\cmark & 0.117\,\cmark & 0.116\,\cmark \\
& PE $f{=}0.10$
& 0.369\,\cmark & 0.206\,\cmark & 0.323\,\cmark & 0.218\,\cmark & 0.172\,\cmark & 0.275\,\xmark & 0.282\,\cmark & 0.263\,\xmark & 0.236\,\xmark & 0.169\,\xmark & 0.150\,\cmark \\
& PE $f{=}0.20$
& 0.382\,\cmark & 0.250\,\cmark & 0.340\,\xmark & 0.251\,\xmark & 0.212\,\cmark & 0.280\,\xmark & 0.302\,\xmark & 0.282\,\xmark & 0.246\,\xmark & 0.194\,\xmark & 0.179\,\xmark \\
& PE $f{=}0.40$
& 0.397\,\cmark & 0.247\,\cmark & 0.326\,\xmark & 0.230\,\cmark & 0.204\,\cmark & 0.288\,\xmark & 0.292\,\xmark & 0.277\,\xmark & 0.253\,\xmark & 0.188\,\xmark & 0.175\,\xmark \\
& PE $f{=}0.60$
& 0.391\,\cmark & 0.243\,\cmark & 0.330\,\xmark & 0.231\,\cmark & 0.214\,\cmark & 0.265\,\xmark & 0.296\,\xmark & 0.264\,\xmark & 0.249\,\xmark & 0.176\,\xmark & 0.162\,\xmark \\
& PE $f{=}0.80$
& 0.375\,\cmark & 0.234\,\cmark & 0.330\,\xmark & 0.226\,\cmark & 0.204\,\cmark & 0.268\,\xmark & 0.291\,\xmark & 0.263\,\xmark & 0.242\,\xmark & 0.174\,\xmark & 0.164\,\xmark \\
& PE $f{=}1.00$ (M1)
& 0.373\,\cmark & 0.237\,\cmark & 0.329\,\xmark & 0.226\,\cmark & 0.201\,\cmark & 0.264\,\xmark & 0.289\,\xmark & 0.267\,\xmark & 0.241\,\xmark & 0.171\,\xmark & 0.162\,\xmark \\
& \textbf{SIREN}
& \textbf{0.347}\,\cmark & \textbf{0.196}\,\cmark & \textbf{0.313}\,\cmark & \textbf{0.215}\,\cmark & \textbf{0.165}\,\cmark & \textbf{0.246}\,\xmark & \textbf{0.265}\,\cmark & \textbf{0.238}\,\cmark & \textbf{0.216}\,\cmark & \textbf{0.156}\,\cmark & \textbf{0.136}\,\cmark \\

\midrule
\multirow{10}{*}{$6.5M$}
& $\theta_A^\star$
& 0.353 & 0.200 & 0.319 & 0.213 & 0.187 & 0.239 & 0.257 & 0.248 & 0.227 & 0.153 & 0.146 \\
& True dir.
& A{>}B & A{>}B & A{<}B & A{<}B & A{<}B & A{<}B & A{<}B & A{<}B & A{<}B & A{<}B & A{<}B \\
& PE $f{=}0.05$
& 0.287\,\xmark & 0.179\,\xmark & 0.274\,\cmark & 0.172\,\cmark & 0.138\,\cmark & 0.199\,\cmark & 0.200\,\cmark & 0.193\,\cmark & 0.200\,\cmark & 0.119\,\cmark & 0.117\,\cmark \\
& PE $f{=}0.10$
& 0.365\,\cmark & 0.221\,\cmark & 0.325\,\xmark & 0.225\,\cmark & 0.206\,\cmark & 0.262\,\xmark & 0.277\,\cmark & 0.252\,\cmark & 0.244\,\xmark & 0.165\,\xmark & 0.166\,\xmark \\
& PE $f{=}0.20$
& 0.400\,\cmark & 0.245\,\cmark & 0.329\,\xmark & 0.235\,\cmark & 0.250\,\xmark & 0.277\,\xmark & 0.295\,\xmark & 0.277\,\xmark & 0.249\,\xmark & 0.183\,\xmark & 0.168\,\xmark \\
& PE $f{=}0.40$
& 0.377\,\cmark & 0.251\,\cmark & 0.321\,\cmark & 0.236\,\cmark & 0.230\,\xmark & 0.283\,\xmark & 0.288\,\xmark & 0.276\,\xmark & 0.245\,\xmark & 0.177\,\xmark & 0.169\,\xmark \\
& PE $f{=}0.60$
& 0.372\,\cmark & 0.246\,\cmark & 0.327\,\xmark & 0.231\,\cmark & 0.237\,\xmark & 0.266\,\xmark & 0.276\,\cmark & 0.264\,\xmark & 0.243\,\xmark & 0.171\,\xmark & 0.164\,\xmark \\
& PE $f{=}0.80$
& 0.373\,\cmark & 0.239\,\cmark & 0.328\,\xmark & 0.236\,\cmark & 0.228\,\xmark & 0.273\,\xmark & 0.283\,\cmark & 0.270\,\xmark & 0.243\,\xmark & 0.174\,\xmark & 0.160\,\xmark \\
& PE $f{=}1.00$ (M1)
& 0.373\,\cmark & 0.237\,\cmark & 0.322\,\cmark & 0.233\,\cmark & 0.219\,\cmark & 0.264\,\xmark & 0.280\,\cmark & 0.269\,\xmark & 0.241\,\xmark & 0.171\,\xmark & 0.163\,\xmark \\
& \textbf{SIREN}
& \textbf{0.356}\,\cmark & \textbf{0.201}\,\cmark & \textbf{0.320}\,\cmark & \textbf{0.214}\,\cmark & \textbf{0.189}\,\cmark & \textbf{0.243}\,\cmark & \textbf{0.255}\,\cmark & \textbf{0.246}\,\cmark & \textbf{0.225}\,\cmark & \textbf{0.153}\,\cmark & \textbf{0.146}\,\cmark \\

\bottomrule
\end{tabular}}
\end{table*}

\FloatBarrier

\begin{table*}[!htbp]
\centering
\caption{
Unified PromptEval~\cite{polo2024efficient} vs SIREN comparison across budgets on MMLU-Pro Law with the \textbf{random search} tuner.
The untuned-pipeline-default reference $\theta_B^\star$ is fixed across budgets and shown once at the top.
For each budget, $\theta_A^\star$ is the Monte-Carlo tuned target, and all method rows report estimates
$\hat\theta_A$ of this tuned target only. The \cmark/\xmark marker indicates whether
$\mathrm{sign}(\hat\theta_A-\theta_B^\star)$ agrees with the true direction
$\mathrm{sign}(\theta_A^\star-\theta_B^\star)$.
}
\label{tab:prompteval_law_rs_all_budgets}
\scriptsize
\setlength{\tabcolsep}{2.5pt}
\renewcommand{\arraystretch}{0.92}
\resizebox{\textwidth}{!}{%
\begin{tabular}{llccccccccccc}
\toprule
$B$ & Row
& Qwen3 & Phi-3.5 & Qwen2.5-7B & Llama3.1 & Yi-1.5
& InternLM & Qwen2-7B & Qwen2.5-3B & GLM-4 & Mistr-v0.3 & Mistr-v0.1 \\
\midrule

Ref. & $\theta_B^\star$
& 0.359 & 0.318 & 0.345 & 0.315 & 0.227 & 0.269 & 0.311 & 0.234 & 0.282 & 0.225 & 0.185 \\

\midrule
\multirow{10}{*}{$500K$}
& $\theta_A^\star$
& 0.317 & 0.293 & 0.324 & 0.294 & 0.222 & 0.261 & 0.300 & 0.224 & 0.272 & 0.230 & 0.172 \\
& True dir.
& A{<}B & A{<}B & A{<}B & A{<}B & A{<}B & A{<}B & A{<}B & A{<}B & A{<}B & A{>}B & A{<}B \\
& PE $f{=}0.05$
& 0.248\,\cmark & 0.162\,\cmark & 0.157\,\cmark & 0.141\,\cmark & 0.119\,\cmark & 0.172\,\cmark & 0.137\,\cmark & 0.087\,\cmark & 0.163\,\cmark & 0.121\,\xmark & 0.128\,\cmark \\
& PE $f{=}0.10$
& 0.254\,\cmark & 0.169\,\cmark & 0.190\,\cmark & 0.192\,\cmark & 0.146\,\cmark & 0.209\,\cmark & 0.178\,\cmark & 0.112\,\cmark & 0.225\,\cmark & 0.122\,\xmark & 0.123\,\cmark \\
& PE $f{=}0.20$
& 0.311\,\cmark & 0.227\,\cmark & 0.244\,\cmark & 0.234\,\cmark & 0.177\,\cmark & 0.283\,\xmark & 0.204\,\cmark & 0.164\,\cmark & 0.280\,\cmark & 0.158\,\xmark & 0.154\,\cmark \\
& PE $f{=}0.40$
& 0.362\,\xmark & 0.299\,\cmark & 0.339\,\cmark & 0.318\,\xmark & 0.243\,\xmark & 0.275\,\xmark & 0.302\,\cmark & 0.235\,\xmark & 0.291\,\xmark & 0.235\,\cmark & 0.196\,\xmark \\
& PE $f{=}0.60$
& 0.360\,\xmark & 0.301\,\cmark & 0.335\,\cmark & 0.323\,\xmark & 0.229\,\xmark & 0.285\,\xmark & 0.301\,\cmark & 0.230\,\cmark & 0.282\,\xmark & 0.237\,\cmark & 0.197\,\xmark \\
& PE $f{=}0.80$
& 0.357\,\cmark & 0.311\,\cmark & 0.335\,\cmark & 0.321\,\xmark & 0.228\,\xmark & 0.284\,\xmark & 0.313\,\xmark & 0.236\,\xmark & 0.278\,\cmark & 0.236\,\cmark & 0.192\,\xmark \\
& PE $f{=}1.00$ (M1)
& 0.364\,\xmark & 0.312\,\cmark & 0.330\,\cmark & 0.314\,\cmark & 0.227\,\cmark & 0.275\,\xmark & 0.315\,\xmark & 0.237\,\xmark & 0.285\,\xmark & 0.234\,\cmark & 0.185\,\cmark \\
& \textbf{SIREN}
& \textbf{0.317}\,\cmark & \textbf{0.300}\,\cmark & \textbf{0.328}\,\cmark & \textbf{0.291}\,\cmark & \textbf{0.223}\,\cmark & \textbf{0.262}\,\cmark & \textbf{0.298}\,\cmark & \textbf{0.220}\,\cmark & \textbf{0.267}\,\cmark & \textbf{0.234}\,\cmark & \textbf{0.176}\,\cmark \\

\midrule
\multirow{10}{*}{$1.5M$}
& $\theta_A^\star$
& 0.308 & 0.298 & 0.324 & 0.283 & 0.223 & 0.265 & 0.304 & 0.235 & 0.274 & 0.217 & 0.165 \\
& True dir.
& A{<}B & A{<}B & A{<}B & A{<}B & A{<}B & A{<}B & A{<}B & A{>}B & A{<}B & A{<}B & A{<}B \\
& PE $f{=}0.05$
& 0.236\,\cmark & 0.257\,\cmark & 0.243\,\cmark & 0.303\,\cmark & 0.187\,\cmark & 0.225\,\cmark & 0.265\,\cmark & 0.206\,\xmark & 0.265\,\cmark & 0.191\,\cmark & 0.128\,\cmark \\
& PE $f{=}0.10$
& 0.325\,\cmark & 0.300\,\cmark & 0.320\,\cmark & 0.305\,\cmark & 0.225\,\cmark & 0.277\,\xmark & 0.334\,\xmark & 0.256\,\cmark & 0.299\,\xmark & 0.241\,\xmark & 0.174\,\cmark \\
& PE $f{=}0.20$
& 0.345\,\cmark & 0.322\,\xmark & 0.352\,\xmark & 0.345\,\xmark & 0.246\,\xmark & 0.287\,\xmark & 0.354\,\xmark & 0.274\,\cmark & 0.305\,\xmark & 0.270\,\xmark & 0.207\,\xmark \\
& PE $f{=}0.40$
& 0.353\,\cmark & 0.318\,\cmark & 0.351\,\xmark & 0.345\,\xmark & 0.247\,\xmark & 0.279\,\xmark & 0.338\,\xmark & 0.253\,\cmark & 0.293\,\xmark & 0.253\,\xmark & 0.197\,\xmark \\
& PE $f{=}0.60$
& 0.365\,\xmark & 0.317\,\cmark & 0.341\,\cmark & 0.330\,\xmark & 0.238\,\xmark & 0.278\,\xmark & 0.325\,\xmark & 0.248\,\cmark & 0.289\,\xmark & 0.248\,\xmark & 0.191\,\xmark \\
& PE $f{=}0.80$
& 0.362\,\xmark & 0.322\,\xmark & 0.340\,\cmark & 0.323\,\xmark & 0.252\,\xmark & 0.279\,\xmark & 0.321\,\xmark & 0.243\,\cmark & 0.284\,\xmark & 0.243\,\xmark & 0.190\,\xmark \\
& PE $f{=}1.00$ (M1)
& 0.364\,\xmark & 0.318\,\cmark & 0.338\,\cmark & 0.325\,\xmark & 0.251\,\xmark & 0.275\,\xmark & 0.320\,\xmark & 0.244\,\cmark & 0.285\,\xmark & 0.240\,\xmark & 0.185\,\cmark \\
& \textbf{SIREN}
& \textbf{0.309}\,\cmark & \textbf{0.305}\,\cmark & \textbf{0.326}\,\cmark & \textbf{0.283}\,\cmark & \textbf{0.226}\,\cmark & \textbf{0.265}\,\cmark & \textbf{0.303}\,\cmark & \textbf{0.232}\,\xmark & \textbf{0.270}\,\cmark & \textbf{0.220}\,\cmark & \textbf{0.168}\,\cmark \\

\midrule
\multirow{10}{*}{$3M$}
& $\theta_A^\star$
& 0.324 & 0.294 & 0.325 & 0.281 & 0.224 & 0.265 & 0.308 & 0.241 & 0.273 & 0.230 & 0.173 \\
& True dir.
& A{<}B & A{<}B & A{<}B & A{<}B & A{<}B & A{<}B & A{<}B & A{>}B & A{<}B & A{>}B & A{<}B \\
& PE $f{=}0.05$
& 0.314\,\cmark & 0.284\,\cmark & 0.344\,\cmark & 0.296\,\cmark & 0.218\,\cmark & 0.247\,\cmark & 0.277\,\cmark & 0.274\,\cmark & 0.254\,\cmark & 0.204\,\xmark & 0.145\,\cmark \\
& PE $f{=}0.10$
& 0.350\,\cmark & 0.308\,\cmark & 0.351\,\xmark & 0.329\,\xmark & 0.261\,\xmark & 0.282\,\xmark & 0.327\,\xmark & 0.274\,\cmark & 0.296\,\xmark & 0.242\,\cmark & 0.189\,\xmark \\
& PE $f{=}0.20$
& 0.372\,\xmark & 0.333\,\xmark & 0.360\,\xmark & 0.326\,\xmark & 0.267\,\xmark & 0.297\,\xmark & 0.329\,\xmark & 0.279\,\cmark & 0.308\,\xmark & 0.258\,\cmark & 0.199\,\xmark \\
& PE $f{=}0.40$
& 0.362\,\xmark & 0.315\,\cmark & 0.363\,\xmark & 0.338\,\xmark & 0.251\,\xmark & 0.276\,\xmark & 0.333\,\xmark & 0.275\,\cmark & 0.306\,\xmark & 0.258\,\cmark & 0.187\,\xmark \\
& PE $f{=}0.60$
& 0.367\,\xmark & 0.312\,\cmark & 0.350\,\xmark & 0.342\,\xmark & 0.248\,\xmark & 0.283\,\xmark & 0.327\,\xmark & 0.273\,\cmark & 0.281\,\cmark & 0.252\,\cmark & 0.186\,\xmark \\
& PE $f{=}0.80$
& 0.358\,\cmark & 0.317\,\cmark & 0.346\,\xmark & 0.331\,\xmark & 0.250\,\xmark & 0.280\,\xmark & 0.330\,\xmark & 0.269\,\cmark & 0.283\,\xmark & 0.248\,\cmark & 0.185\,\xmark \\
& PE $f{=}1.00$ (M1)
& 0.364\,\xmark & 0.318\,\cmark & 0.339\,\cmark & 0.327\,\xmark & 0.249\,\xmark & 0.277\,\xmark & 0.328\,\xmark & 0.261\,\cmark & 0.280\,\cmark & 0.246\,\cmark & 0.185\,\cmark \\
& \textbf{SIREN}
& \textbf{0.326}\,\cmark & \textbf{0.302}\,\cmark & \textbf{0.327}\,\cmark & \textbf{0.279}\,\cmark & \textbf{0.225}\,\cmark & \textbf{0.264}\,\cmark & \textbf{0.308}\,\cmark & \textbf{0.238}\,\cmark & \textbf{0.270}\,\cmark & \textbf{0.232}\,\cmark & \textbf{0.176}\,\cmark \\

\midrule
\multirow{10}{*}{$6.5M$}
& $\theta_A^\star$
& 0.318 & 0.294 & 0.325 & 0.288 & 0.240 & 0.261 & 0.309 & 0.242 & 0.279 & 0.224 & 0.176 \\
& True dir.
& A{<}B & A{<}B & A{<}B & A{<}B & A{>}B & A{<}B & A{<}B & A{>}B & A{<}B & A{<}B & A{<}B \\
& PE $f{=}0.05$
& 0.250\,\cmark & 0.248\,\cmark & 0.287\,\cmark & 0.271\,\cmark & 0.155\,\xmark & 0.226\,\cmark & 0.288\,\cmark & 0.231\,\xmark & 0.234\,\cmark & 0.209\,\cmark & 0.170\,\cmark \\
& PE $f{=}0.10$
& 0.341\,\cmark & 0.306\,\cmark & 0.344\,\cmark & 0.318\,\xmark & 0.223\,\xmark & 0.277\,\xmark & 0.331\,\xmark & 0.269\,\cmark & 0.287\,\xmark & 0.245\,\xmark & 0.213\,\xmark \\
& PE $f{=}0.20$
& 0.355\,\cmark & 0.330\,\xmark & 0.347\,\xmark & 0.353\,\xmark & 0.269\,\cmark & 0.298\,\xmark & 0.358\,\xmark & 0.285\,\cmark & 0.309\,\xmark & 0.255\,\xmark & 0.215\,\xmark \\
& PE $f{=}0.40$
& 0.354\,\cmark & 0.311\,\cmark & 0.346\,\xmark & 0.338\,\xmark & 0.269\,\cmark & 0.279\,\xmark & 0.330\,\xmark & 0.272\,\cmark & 0.317\,\xmark & 0.249\,\xmark & 0.204\,\xmark \\
& PE $f{=}0.60$
& 0.344\,\cmark & 0.311\,\cmark & 0.344\,\cmark & 0.331\,\xmark & 0.269\,\cmark & 0.281\,\xmark & 0.326\,\xmark & 0.262\,\cmark & 0.299\,\xmark & 0.243\,\xmark & 0.194\,\xmark \\
& PE $f{=}0.80$
& 0.337\,\cmark & 0.306\,\cmark & 0.344\,\cmark & 0.324\,\xmark & 0.260\,\cmark & 0.284\,\xmark & 0.322\,\xmark & 0.264\,\cmark & 0.292\,\xmark & 0.243\,\xmark & 0.187\,\xmark \\
& PE $f{=}1.00$ (M1)
& 0.335\,\cmark & 0.303\,\cmark & 0.342\,\cmark & 0.330\,\xmark & 0.265\,\cmark & 0.284\,\xmark & 0.319\,\xmark & 0.258\,\cmark & 0.292\,\xmark & 0.240\,\xmark & 0.185\,\cmark \\
& \textbf{SIREN}
& \textbf{0.320}\,\cmark & \textbf{0.300}\,\cmark & \textbf{0.329}\,\cmark & \textbf{0.288}\,\cmark & \textbf{0.240}\,\cmark & \textbf{0.262}\,\cmark & \textbf{0.308}\,\cmark & \textbf{0.239}\,\cmark & \textbf{0.276}\,\cmark & \textbf{0.225}\,\xmark & \textbf{0.179}\,\cmark \\

\bottomrule
\end{tabular}}
\end{table*}

\FloatBarrier

\begin{table*}[!htbp]
\centering
\caption{
Unified PromptEval~\cite{polo2024efficient} vs SIREN comparison across budgets on MMLU-Pro Law with the \textbf{DSPy} tuner.
The untuned-pipeline-default reference $\theta_B^\star$ is fixed across budgets and shown once at the top.
For each budget, $\theta_A^\star$ is the Monte-Carlo tuned target, and all method rows report estimates
$\hat\theta_A$ of this tuned target only. The \cmark/\xmark marker indicates whether
$\mathrm{sign}(\hat\theta_A-\theta_B^\star)$ agrees with the true direction
$\mathrm{sign}(\theta_A^\star-\theta_B^\star)$.
}
\label{tab:prompteval_law_dspy_all_budgets}
\scriptsize
\setlength{\tabcolsep}{2.5pt}
\renewcommand{\arraystretch}{0.92}
\resizebox{\textwidth}{!}{%
\begin{tabular}{llccccccccccc}
\toprule
$B$ & Row
& Qwen3 & Phi-3.5 & Qwen2.5-7B & Llama3.1 & Yi-1.5
& InternLM & Qwen2-7B & Qwen2.5-3B & GLM-4 & Mistr-v0.3 & Mistr-v0.1 \\
\midrule

Ref. & $\theta_B^\star$
& 0.340 & 0.312 & 0.334 & 0.328 & 0.219 & 0.256 & 0.302 & 0.251 & 0.257 & 0.201 & 0.169 \\

\midrule
\multirow{10}{*}{$500K$}
& $\theta_A^\star$
& 0.349 & 0.327 & 0.357 & 0.329 & 0.244 & 0.284 & 0.317 & 0.256 & 0.279 & 0.236 & 0.166 \\
& True dir.
& A{>}B & A{>}B & A{>}B & A{>}B & A{>}B & A{>}B & A{>}B & A{>}B & A{>}B & A{>}B & A{<}B \\
& PE $f{=}0.05$
& 0.237\,\xmark & 0.179\,\xmark & 0.207\,\xmark & 0.168\,\xmark & 0.106\,\xmark & 0.139\,\xmark & 0.153\,\xmark & 0.153\,\xmark & 0.174\,\xmark & 0.122\,\xmark & 0.108\,\cmark \\
& PE $f{=}0.10$
& 0.223\,\xmark & 0.197\,\xmark & 0.269\,\xmark & 0.218\,\xmark & 0.145\,\xmark & 0.202\,\xmark & 0.211\,\xmark & 0.195\,\xmark & 0.213\,\xmark & 0.152\,\xmark & 0.114\,\cmark \\
& PE $f{=}0.20$
& 0.314\,\xmark & 0.259\,\xmark & 0.365\,\cmark & 0.317\,\xmark & 0.182\,\xmark & 0.261\,\cmark & 0.285\,\xmark & 0.240\,\xmark & 0.295\,\cmark & 0.215\,\cmark & 0.156\,\cmark \\
& PE $f{=}0.40$
& 0.376\,\cmark & 0.337\,\cmark & 0.372\,\cmark & 0.347\,\cmark & 0.252\,\cmark & 0.296\,\cmark & 0.331\,\cmark & 0.259\,\cmark & 0.293\,\cmark & 0.249\,\cmark & 0.197\,\xmark \\
& PE $f{=}0.60$
& 0.365\,\cmark & 0.330\,\cmark & 0.361\,\cmark & 0.342\,\cmark & 0.252\,\cmark & 0.287\,\cmark & 0.323\,\cmark & 0.268\,\cmark & 0.289\,\cmark & 0.244\,\cmark & 0.189\,\xmark \\
& PE $f{=}0.80$
& 0.364\,\cmark & 0.332\,\cmark & 0.362\,\cmark & 0.338\,\cmark & 0.249\,\cmark & 0.289\,\cmark & 0.321\,\cmark & 0.259\,\cmark & 0.289\,\cmark & 0.248\,\cmark & 0.176\,\xmark \\
& PE $f{=}1.00$ (M1)
& 0.360\,\cmark & 0.333\,\cmark & 0.361\,\cmark & 0.339\,\cmark & 0.250\,\cmark & 0.296\,\cmark & 0.323\,\cmark & 0.261\,\cmark & 0.286\,\cmark & 0.246\,\cmark & 0.177\,\xmark \\
& \textbf{SIREN}
& \textbf{0.346}\,\cmark & \textbf{0.331}\,\cmark & \textbf{0.356}\,\cmark & \textbf{0.325}\,\xmark & \textbf{0.247}\,\cmark & \textbf{0.284}\,\cmark & \textbf{0.314}\,\cmark & \textbf{0.247}\,\xmark & \textbf{0.273}\,\cmark & \textbf{0.231}\,\cmark & \textbf{0.166}\,\cmark \\

\midrule
\multirow{10}{*}{$1.5M$}
& $\theta_A^\star$
& 0.353 & 0.331 & 0.359 & 0.323 & 0.243 & 0.284 & 0.323 & 0.258 & 0.285 & 0.237 & 0.172 \\
& True dir.
& A{>}B & A{>}B & A{>}B & A{<}B & A{>}B & A{>}B & A{>}B & A{>}B & A{>}B & A{>}B & A{>}B \\
& PE $f{=}0.05$
& 0.261\,\xmark & 0.196\,\xmark & 0.287\,\xmark & 0.300\,\cmark & 0.163\,\xmark & 0.163\,\xmark & 0.210\,\xmark & 0.162\,\xmark & 0.197\,\xmark & 0.181\,\xmark & 0.135\,\xmark \\
& PE $f{=}0.10$
& 0.277\,\xmark & 0.246\,\xmark & 0.314\,\xmark & 0.284\,\cmark & 0.200\,\xmark & 0.219\,\xmark & 0.254\,\xmark & 0.189\,\xmark & 0.243\,\xmark & 0.221\,\cmark & 0.135\,\xmark \\
& PE $f{=}0.20$
& 0.365\,\cmark & 0.340\,\cmark & 0.374\,\cmark & 0.343\,\xmark & 0.276\,\cmark & 0.295\,\cmark & 0.335\,\cmark & 0.279\,\cmark & 0.300\,\cmark & 0.254\,\cmark & 0.194\,\cmark \\
& PE $f{=}0.40$
& 0.378\,\cmark & 0.336\,\cmark & 0.369\,\cmark & 0.340\,\xmark & 0.250\,\cmark & 0.289\,\cmark & 0.330\,\cmark & 0.282\,\cmark & 0.304\,\cmark & 0.259\,\cmark & 0.191\,\cmark \\
& PE $f{=}0.60$
& 0.368\,\cmark & 0.339\,\cmark & 0.371\,\cmark & 0.335\,\xmark & 0.255\,\cmark & 0.294\,\cmark & 0.335\,\cmark & 0.276\,\cmark & 0.297\,\cmark & 0.246\,\cmark & 0.188\,\cmark \\
& PE $f{=}0.80$
& 0.376\,\cmark & 0.336\,\cmark & 0.370\,\cmark & 0.333\,\xmark & 0.250\,\cmark & 0.291\,\cmark & 0.342\,\cmark & 0.274\,\cmark & 0.295\,\cmark & 0.251\,\cmark & 0.188\,\cmark \\
& PE $f{=}1.00$ (M1)
& 0.374\,\cmark & 0.341\,\cmark & 0.372\,\cmark & 0.330\,\xmark & 0.248\,\cmark & 0.290\,\cmark & 0.342\,\cmark & 0.269\,\cmark & 0.294\,\cmark & 0.253\,\cmark & 0.180\,\cmark \\
& \textbf{SIREN}
& \textbf{0.350}\,\cmark & \textbf{0.335}\,\cmark & \textbf{0.356}\,\cmark & \textbf{0.318}\,\cmark & \textbf{0.246}\,\cmark & \textbf{0.284}\,\cmark & \textbf{0.320}\,\cmark & \textbf{0.250}\,\xmark & \textbf{0.281}\,\cmark & \textbf{0.234}\,\cmark & \textbf{0.174}\,\cmark \\

\midrule
\multirow{10}{*}{$3M$}
& $\theta_A^\star$
& 0.345 & 0.323 & 0.356 & 0.326 & 0.248 & 0.284 & 0.325 & 0.258 & 0.279 & 0.238 & 0.175 \\
& True dir.
& A{>}B & A{>}B & A{>}B & A{<}B & A{>}B & A{>}B & A{>}B & A{>}B & A{>}B & A{>}B & A{>}B \\
& PE $f{=}0.05$
& 0.257\,\xmark & 0.192\,\xmark & 0.286\,\xmark & 0.256\,\cmark & 0.152\,\xmark & 0.167\,\xmark & 0.216\,\xmark & 0.196\,\xmark & 0.182\,\xmark & 0.159\,\xmark & 0.149\,\xmark \\
& PE $f{=}0.10$
& 0.284\,\xmark & 0.239\,\xmark & 0.299\,\xmark & 0.272\,\cmark & 0.208\,\xmark & 0.226\,\xmark & 0.252\,\xmark & 0.198\,\xmark & 0.219\,\xmark & 0.192\,\xmark & 0.156\,\xmark \\
& PE $f{=}0.20$
& 0.356\,\cmark & 0.326\,\cmark & 0.376\,\cmark & 0.343\,\xmark & 0.281\,\cmark & 0.304\,\cmark & 0.335\,\cmark & 0.270\,\cmark & 0.288\,\cmark & 0.248\,\cmark & 0.205\,\cmark \\
& PE $f{=}0.40$
& 0.363\,\cmark & 0.340\,\cmark & 0.377\,\cmark & 0.337\,\xmark & 0.272\,\cmark & 0.298\,\cmark & 0.337\,\cmark & 0.276\,\cmark & 0.290\,\cmark & 0.247\,\cmark & 0.198\,\cmark \\
& PE $f{=}0.60$
& 0.356\,\cmark & 0.334\,\cmark & 0.374\,\cmark & 0.333\,\xmark & 0.260\,\cmark & 0.295\,\cmark & 0.334\,\cmark & 0.261\,\cmark & 0.287\,\cmark & 0.256\,\cmark & 0.188\,\cmark \\
& PE $f{=}0.80$
& 0.356\,\cmark & 0.337\,\cmark & 0.366\,\cmark & 0.339\,\xmark & 0.264\,\cmark & 0.294\,\cmark & 0.339\,\cmark & 0.264\,\cmark & 0.286\,\cmark & 0.251\,\cmark & 0.185\,\cmark \\
& PE $f{=}1.00$ (M1)
& 0.358\,\cmark & 0.339\,\cmark & 0.362\,\cmark & 0.339\,\xmark & 0.258\,\cmark & 0.294\,\cmark & 0.342\,\cmark & 0.269\,\cmark & 0.290\,\cmark & 0.248\,\cmark & 0.187\,\cmark \\
& \textbf{SIREN}
& \textbf{0.344}\,\cmark & \textbf{0.326}\,\cmark & \textbf{0.352}\,\cmark & \textbf{0.321}\,\cmark & \textbf{0.251}\,\cmark & \textbf{0.283}\,\cmark & \textbf{0.322}\,\cmark & \textbf{0.251}\,\xmark & \textbf{0.275}\,\cmark & \textbf{0.235}\,\cmark & \textbf{0.178}\,\cmark \\

\midrule
\multirow{10}{*}{$6.5M$}
& $\theta_A^\star$
& 0.346 & 0.323 & 0.357 & 0.324 & 0.244 & 0.282 & 0.322 & 0.259 & 0.283 & 0.240 & 0.176 \\
& True dir.
& A{>}B & A{>}B & A{>}B & A{<}B & A{>}B & A{>}B & A{>}B & A{>}B & A{>}B & A{>}B & A{>}B \\
& PE $f{=}0.05$
& 0.220\,\xmark & 0.206\,\xmark & 0.270\,\xmark & 0.260\,\cmark & 0.148\,\xmark & 0.166\,\xmark & 0.233\,\xmark & 0.194\,\xmark & 0.184\,\xmark & 0.184\,\xmark & 0.129\,\xmark \\
& PE $f{=}0.10$
& 0.268\,\xmark & 0.263\,\xmark & 0.305\,\xmark & 0.274\,\cmark & 0.185\,\xmark & 0.215\,\xmark & 0.281\,\xmark & 0.190\,\xmark & 0.223\,\xmark & 0.229\,\cmark & 0.148\,\xmark \\
& PE $f{=}0.20$
& 0.361\,\cmark & 0.354\,\cmark & 0.373\,\cmark & 0.342\,\xmark & 0.258\,\cmark & 0.295\,\cmark & 0.341\,\cmark & 0.277\,\cmark & 0.308\,\cmark & 0.268\,\cmark & 0.207\,\cmark \\
& PE $f{=}0.40$
& 0.363\,\cmark & 0.344\,\cmark & 0.380\,\cmark & 0.358\,\xmark & 0.254\,\cmark & 0.291\,\cmark & 0.351\,\cmark & 0.277\,\cmark & 0.297\,\cmark & 0.252\,\cmark & 0.202\,\cmark \\
& PE $f{=}0.60$
& 0.359\,\cmark & 0.337\,\cmark & 0.373\,\cmark & 0.344\,\xmark & 0.259\,\cmark & 0.291\,\cmark & 0.342\,\cmark & 0.278\,\cmark & 0.298\,\cmark & 0.252\,\cmark & 0.190\,\cmark \\
& PE $f{=}0.80$
& 0.360\,\cmark & 0.338\,\cmark & 0.364\,\cmark & 0.334\,\xmark & 0.260\,\cmark & 0.294\,\cmark & 0.341\,\cmark & 0.280\,\cmark & 0.292\,\cmark & 0.249\,\cmark & 0.184\,\cmark \\
& PE $f{=}1.00$ (M1)
& 0.358\,\cmark & 0.339\,\cmark & 0.362\,\cmark & 0.333\,\xmark & 0.251\,\cmark & 0.294\,\cmark & 0.342\,\cmark & 0.269\,\cmark & 0.291\,\cmark & 0.248\,\cmark & 0.187\,\cmark \\
& \textbf{SIREN}
& \textbf{0.345}\,\cmark & \textbf{0.326}\,\cmark & \textbf{0.353}\,\cmark & \textbf{0.320}\,\cmark & \textbf{0.248}\,\cmark & \textbf{0.281}\,\cmark & \textbf{0.320}\,\cmark & \textbf{0.251}\,\cmark & \textbf{0.278}\,\cmark & \textbf{0.237}\,\cmark & \textbf{0.177}\,\cmark \\

\bottomrule
\end{tabular}}
\end{table*}

\FloatBarrier

\subsubsection{Per-budget directional accuracy summaries}
\label{sec:appendix_pe_per_budget}

The per-budget summary tables below collapse each $(f, B)$ cell to a single
$(\mathrm{dir}, \mathrm{bias})$ pair averaged over the 11 models, making the
$f$-dependent bias trade-off easier to read at a glance.
Table~\ref{tab:prompteval_per_budget_math_dspy} for Math with the DSPy
tuner appears in the main text alongside the discussion of the
PromptEval comparison; the corresponding tables for the other three
(subject, tuner) cells are presented here. The U-shape pattern across
$f$ is consistent across all four cells: SIREN's mean signed error
stays bounded by $0.21$\,pp at every budget, while every PromptEval
fraction $f$ reports a bias of at least $0.74$\,pp in absolute value.

\FloatBarrier

\begin{table}[!htbp]
\centering
\caption{PromptEval~\cite{polo2024efficient} per-budget directional accuracy on \textbf{MMLU-Pro Math} for Random Search.}
\label{tab:prompteval_per_budget_math_rs}
\small
\setlength{\tabcolsep}{4pt}
\renewcommand{\arraystretch}{1.0}
\begin{tabular}{@{}c|cc|cc|cc|cc@{}}
\toprule
& \multicolumn{2}{c|}{500K} & \multicolumn{2}{c|}{1.5M} & \multicolumn{2}{c|}{3M} & \multicolumn{2}{c}{6.5M} \\
$f$ & dir & bias & dir & bias & dir & bias & dir & bias \\
\midrule
$0.05$ & $11/11$ & $-1.67$ & $9/11$ & $-3.66$ & $10/11$ & $-3.59$ & $9/11$ & $-4.20$ \\
$0.10$ & $10/11$ & $+1.87$ & $7/11$ & $+1.35$ & $7/11$ & $+1.61$ & $6/11$ & $+1.52$ \\
$0.20$ & $3/11$ & $+6.36$ & $3/11$ & $+4.14$ & $3/11$ & $+3.94$ & $3/11$ & $+3.33$ \\
$0.40$ & $4/11$ & $+5.72$ & $5/11$ & $+3.20$ & $4/11$ & $+3.54$ & $4/11$ & $+2.84$ \\
$0.60$ & $5/11$ & $+5.43$ & $5/11$ & $+2.65$ & $4/11$ & $+3.06$ & $4/11$ & $+2.34$ \\
$0.80$ & $5/11$ & $+5.29$ & $5/11$ & $+2.33$ & $4/11$ & $+2.58$ & $4/11$ & $+2.43$ \\
$1.00$ & $6/11$ & $+5.19$ & $5/11$ & $+2.50$ & $4/11$ & $+2.49$ & $6/11$ & $+2.09$ \\
\midrule
\textbf{SIREN}
& $\mathbf{11/11}$ & $\mathbf{-0.09}$ & $\mathbf{10/11}$ & $\mathbf{+0.10}$ & $\mathbf{10/11}$ & $\mathbf{+0.06}$ & $\mathbf{11/11}$ & $\mathbf{+0.08}$ \\
\bottomrule
\end{tabular}
\end{table}

\FloatBarrier

\begin{table}[!htbp]
\centering
\caption{PromptEval~\cite{polo2024efficient} per-budget directional accuracy on \textbf{MMLU-Pro Law} for Random Search.}
\label{tab:prompteval_per_budget_law_rs}
\small
\setlength{\tabcolsep}{4pt}
\renewcommand{\arraystretch}{1.0}
\begin{tabular}{@{}c|cc|cc|cc|cc@{}}
\toprule
& \multicolumn{2}{c|}{500K} & \multicolumn{2}{c|}{1.5M} & \multicolumn{2}{c|}{3M} & \multicolumn{2}{c}{6.5M} \\
$f$ & dir & bias & dir & bias & dir & bias & dir & bias \\
\midrule
$0.05$ & $10/11$ & $-11.58$ & $10/11$ & $-3.52$ & $10/11$ & $-0.74$ & $9/11$ & $-3.51$ \\
$0.10$ & $10/11$ & $-9.01$ & $7/11$ & $+1.46$ & $4/11$ & $+2.45$ & $4/11$ & $+1.80$ \\
$0.20$ & $9/11$ & $-4.30$ & $2/11$ & $+3.74$ & $2/11$ & $+3.54$ & $3/11$ & $+3.81$ \\
$0.40$ & $4/11$ & $+1.68$ & $3/11$ & $+3.02$ & $3/11$ & $+2.96$ & $4/11$ & $+2.85$ \\
$0.60$ & $5/11$ & $+1.56$ & $3/11$ & $+2.51$ & $4/11$ & $+2.55$ & $5/11$ & $+2.25$ \\
$0.80$ & $5/11$ & $+1.65$ & $2/11$ & $+2.40$ & $4/11$ & $+2.34$ & $5/11$ & $+1.89$ \\
$1.00$ & $6/11$ & $+1.50$ & $4/11$ & $+2.24$ & $6/11$ & $+2.10$ & $6/11$ & $+1.77$ \\
\midrule
\textbf{SIREN}
& $\mathbf{11/11}$ & $\mathbf{+0.05}$ & $\mathbf{10/11}$ & $\mathbf{+0.11}$ & $\mathbf{11/11}$ & $\mathbf{+0.07}$ & $\mathbf{10/11}$ & $\mathbf{+0.10}$ \\
\bottomrule
\end{tabular}
\end{table}

\FloatBarrier

\begin{table}[!htbp]
\centering
\caption{PromptEval~\cite{polo2024efficient} per-budget directional accuracy on \textbf{MMLU-Pro Law} for DSPy.}
\label{tab:prompteval_per_budget_law_dspy}
\small
\setlength{\tabcolsep}{4pt}
\renewcommand{\arraystretch}{1.0}
\begin{tabular}{@{}c|cc|cc|cc|cc@{}}
\toprule
& \multicolumn{2}{c|}{500K} & \multicolumn{2}{c|}{1.5M} & \multicolumn{2}{c|}{3M} & \multicolumn{2}{c}{6.5M} \\
$f$ & dir & bias & dir & bias & dir & bias & dir & bias \\
\midrule
$0.05$ & $1/11$ & $-12.72$ & $1/11$ & $-8.30$ & $1/11$ & $-8.60$ & $1/11$ & $-8.76$ \\
$0.10$ & $1/11$ & $-9.14$ & $2/11$ & $-5.34$ & $1/11$ & $-5.56$ & $2/11$ & $-5.22$ \\
$0.20$ & $5/11$ & $-2.31$ & $10/11$ & $+1.70$ & $10/11$ & $+1.57$ & $10/11$ & $+2.08$ \\
$0.40$ & $10/11$ & $+1.49$ & $10/11$ & $+1.46$ & $10/11$ & $+1.59$ & $10/11$ & $+1.93$ \\
$0.60$ & $10/11$ & $+0.96$ & $10/11$ & $+1.24$ & $10/11$ & $+1.08$ & $10/11$ & $+1.50$ \\
$0.80$ & $10/11$ & $+0.74$ & $10/11$ & $+1.26$ & $10/11$ & $+1.12$ & $10/11$ & $+1.27$ \\
$1.00$ & $10/11$ & $+0.77$ & $10/11$ & $+1.10$ & $10/11$ & $+1.13$ & $10/11$ & $+1.04$ \\
\midrule
\textbf{SIREN}
& $\mathbf{9/11}$ & $\mathbf{-0.21}$ & $\mathbf{10/11}$ & $\mathbf{-0.17}$ & $\mathbf{10/11}$ & $\mathbf{-0.20}$ & $\mathbf{11/11}$ & $\mathbf{-0.21}$ \\
\bottomrule
\end{tabular}
\end{table}

\FloatBarrier

\end{document}